\documentclass{article} 
\usepackage{iclr2026_conference,times}

\usepackage{amsmath,amsfonts,bm}

\def\eqref#1{equation~\ref{#1}}

\def\1{\bm{1}}

\DeclareMathAlphabet{\mathsfit}{\encodingdefault}{\sfdefault}{m}{sl}
\SetMathAlphabet{\mathsfit}{bold}{\encodingdefault}{\sfdefault}{bx}{n}

\newcommand{\E}{\mathbb{E}}

\newcommand{\R}{\mathbb{R}}

\newcommand{\Cov}{\mathrm{Cov}}

\DeclareMathOperator*{\argmax}{arg\,max}
\DeclareMathOperator*{\argmin}{arg\,min}

\DeclareMathOperator{\sign}{sign}

\usepackage{hyperref}
\usepackage{url}
\usepackage{enumitem}

\usepackage{amsmath}
\usepackage{amsfonts}
\usepackage{amsthm}
\usepackage{amssymb}
\usepackage{mathtools}
\usepackage{dsfont}
\usepackage{graphicx} 
\usepackage[algo2e,ruled,vlined,linesnumbered]{algorithm2e}
\usepackage{xspace}
\usepackage{xcolor}
\usepackage{color}

\newtheorem{theorem}{Theorem}
\newtheorem{lemma}{Lemma}[section]

\newtheorem{proposition}{Proposition}[section]

\theoremstyle{definition}

\newtheorem{remark}{Remark}[section]

\newcommand{\lasso}{\textsc{Lasso}\xspace}
\newcommand{\snr}{\textsc{SNR}}

\newcommand{\N}{\mathbb{N}}

\newcommand{\var}{\operatorname{Var}}
\newcommand{\cov}{\operatorname{Cov}}

\newcommand{\tr}{\operatorname{tr}}

\newcommand{\pr}[1]{\left({#1}\right)}

\newcommand{\prbig}[1]{\big({#1}\big)}
\newcommand{\prBig}[1]{\Big({#1}\Big)}

\newcommand{\br}[1]{\left[{#1}\right]}

\newcommand{\brbig}[1]{\big[{#1}\big]}
\newcommand{\brBig}[1]{\Big[{#1}\Big]}

\newcommand{\brBigg}[1]{\Bigg[{#1}\Bigg]}
\newcommand{\ac}[1]{\left\{{#1}\right\}}

\newcommand{\acbig}[1]{\big\{{#1}\big\}}
\newcommand{\acBig}[1]{\Big\{{#1}\Big\}}

\newcommand{\norm}[1]{\left\|{#1}\right\|}

\newcommand{\abs}[1]{\left\lvert{#1}\right\rvert}

\newcommand{\absBig}[1]{\Big|{#1}\Big|}

\newcommand{\calA}{\ensuremath{\mathcal{A}}}
\newcommand{\calB}{\ensuremath{\mathcal{B}}}
\newcommand{\calC}{\ensuremath{\mathcal{C}}}

\newcommand{\calN}{\ensuremath{\mathcal{N}}}
\newcommand{\calO}{\ensuremath{\mathcal{O}}}

\newcommand{\calR}{\ensuremath{\mathcal{R}}}
\newcommand{\calS}{\ensuremath{\mathcal{S}}}

\newcommand{\calW}{\ensuremath{\mathcal{W}}}

\newcommand{\inner}[2]{\langle {#1} , {#2} \rangle}
\newcommand{\symdiff}[2]{{#1} \, \triangle \, {#2}}
\newcommand{\support}[1]{\text{Supp}\pr{{#1}}}

\newcommand{\Pro}{\mathbb{P}}

\newcommand{\indic}{\mathds{1}}

\title{Price of Quality: Sufficient Conditions for Sparse Recovery using Mixed-Quality Data}

\author{Youssef Chaabouni \\
Operations Research Center \\
Massachusetts Institute of Technology \\
Cambridge, MA 02139, USA \\
\texttt{youss404@mit.edu}
\And
David Gamarnik \\
Operations Research Center \\
Massachusetts Institute of Technology \\
Cambridge, MA 02139, USA \\
\texttt{gamarnik@mit.edu}
}

\iclrfinalcopy 
\begin{document}

\maketitle

\begin{abstract}
We study sparse recovery when observations come from mixed-quality sources: a small collection of high-quality measurements with small noise variance and a larger collection of lower-quality measurements with higher variance. For this heterogeneous-noise setting, we establish sample-size conditions for information-theoretic and algorithmic recovery. On the information-theoretic side, we show that it is sufficient for $\pr{n_1, n_2}$ to satisfy a linear trade-off defining the \emph{Price of Quality}: the number of low-quality samples needed to replace one high-quality sample. In the agnostic setting, where the decoder is completely agnostic to the quality of the data, it is uniformly bounded, and in particular one high-quality sample is never worth more than two low-quality samples for this sufficient condition to hold. In the informed setting, where the decoder is informed of per-sample variances, the price of quality can grow arbitrarily large. On the algorithmic side, we analyze the \lasso in the agnostic setting and show that the recovery threshold matches the homogeneous-noise case and only depends on the average noise level, revealing a striking robustness of computational recovery to data heterogeneity. Together, these results give the first conditions for sparse recovery with mixed-quality data and expose a fundamental difference between how the information-theoretic and algorithmic thresholds adapt to changes in data quality.
\end{abstract}

\section{Introduction}

\subsection{Overview and Previous Work}

\subsubsection{Sparse recovery}

Sparse recovery is a central problem in high-dimensional statistics and machine learning. Its applications include compressive sensing \citep{foucart2013invitation,candes2006robust,donoho2006compressed}, signal denoising \citep{chen2001atomic}, sparse regression \citep{miller2002subset}, data-stream algorithms \citep{cormode2009finding,indyk2007sketching,muthukrishnan2005data}, and combinatorial group testing \citep{du1999combinatorial}. Other applications range from medical imaging to communications and compression \cite[Chap. 1]{foucart2013invitation}.

We formulate the problem as follows. A high-dimensional \emph{signal} $\beta^\star \in \R^p$ (also called \emph{model} or \emph{ground truth}), unknown but a-priori $s$-sparse, is transmitted through a noisy channel that projects it onto a collection of $n$ random vectors $\ac{x_i}_{i \in [n]}$ in $\R^p$. This is expressed as:
\begin{equation}
\label{equation:modelIntroduction}
Y \coloneqq X \beta^\star + Z,
\end{equation}
where $X = \pr{x_1, \dots, x_n}^T$ is called \emph{measurements}, \emph{design} or \emph{features}; $Y$ \emph{observations}, \emph{annotations} or \emph{labels}; and $Z$ \emph{noise}. Specifically, we consider the setting of additive Gaussian noise, which is standard in the compressive sensing and sparse linear regression literatures.
On the other end of the channel, a decoder who observes $\pr{X,Y}$ is interested in recovering the support of the original signal $\beta^\star$, i.e. the subset $S^\star \coloneqq \ac{i \in [p] \colon \beta^\star_i \ne 0} \subseteq [p]$, known a-priori to be of cardinality $s$. How many observations $n$ (as a function of $p$ and $s$) does the decoder need to recover the support of the signal as the dimension of the problem grows to infinity?

Previous works have shown that the sparse recovery problem exhibits two phase transitions at two thresholds, one \emph{information-theoretic} and one \emph{algorithmic}:
\begin{equation}
    n_{\text{INF}}
    =
    \frac{2 s \log\pr{p/s}}{\log{s}}
    \quad
    \text{and}
    \quad
    n_{\text{ALG}}
    =
    2 s \log\pr{p-s} + s + 1,
\end{equation}
leading to three regimes:
\begin{itemize}[leftmargin=*]
    \item $n < n_{\text{INF}}$:
    Signal support recovery is information-theoretically impossible. \citep{reeves2019all}.
    \item $n_{\text{INF}} < n < n_{\text{ALG}}$: The maximum likelihood estimator (MLE) recovers $S^\star$. However, it is believed that no algorithm can do it in polynomial time since the problem exhibits an Overlap Gap Property (OGP) \citep{gamarnik2022sparse}, a structural property of the solution space known to often imply the failure of tractable algorithms to find optimal solutions.
    \item $n > n_{\text{ALG}}$:
    The $\ell_1$-regularized least-squares estimator (also known as the \lasso \citep{tibshirani1996regression}) recovers $S^\star$ \citep{wainwright2009sharp}.
\end{itemize}
Of particular interest is the signal-to-noise ratio (SNR), known to be an important quantity for characterizing the difficulty of sparse recovery problems \citep{wang2010information, reeves2019all, chaabouniprice}. It's defined as follows:
\begin{equation}\label{equation:SNRGenericDefinition}
    \snr
    \coloneqq
    \frac{\E{\norm{X\beta^\star}_2^2}}{\E{\norm{Z}_2^2}}.
\end{equation}

\subsubsection{Mixed quality data}
A recent body of work has explored how low-quality data, e.g. labeled by an LLM or weak annotator \citep{ratner2017snorkel, frenay2013classification}, should be combined with fewer but higher-quality data, e.g. labeled by humans or experts, for prediction and inference tasks \citep{gligoric2024can, li2023coannotating, zhang2023llmaaa, egami2023using}.

In this paper, we formalize the mixed-quality data setting for sparse signal recovery: the decoder has access to $n_1$ noisy projections of the signal $\beta^\star$ with a small noise level $\sigma_1^2 > 0$ that we denote $\ac{\pr{y_i, x_i}}_{i = 1}^{n_1}$ and call \emph{high-quality} data. In addition, the decoder also observes a larger set of $n_2 > n_1$ noisy projections of the same signal $\beta^\star$, but with a higher noise level $\sigma_2^2 > \sigma_1^2$, that we denote $\ac{\pr{y_i, x_i}}_{i = n_1+1}^{n_2}$ and call \emph{low-quality} data. We distinguish two settings:
\begin{itemize}[leftmargin=*]
    \item \textbf{Agnostic setting}: The decoder lacks access to observation-level noise variances and treats all measurements as if drawn from a single homogeneous model. This occurs when heterogeneous data sources lose provenance: for example in web-scale text corpora \citep{ratner2017snorkel,frenay2013classification} or citizen-science campaigns lacking sensor calibration \citep{silvertown2009new}. The decoder simply applies standard sparse-recovery methods without noise estimation or reweighting.
    \item \textbf{Informed setting}: where the decoder has access to the per-sample noise variance of the data. This regime captures situations where provenance information accompanies each observation, so the decoder knows which measurements are high- or low-quality. Examples include multi-site clinical trials or sensor networks that log calibration statistics \citep{loh2011high,delaigle2008deconvolution}, and medical-imaging datasets with per-rater confidence scores \citep{rajpurkar2018deep}.
\end{itemize}

\subsection{Our work}

In this paper, we consider the sparse recovery problem described above (\ref{equation:modelIntroduction}). Specifically, we study the setting where the measurements are drawn i.i.d. from a standard normal Gaussian distribution, and the noise is unbiased and drawn independently from Gaussian distributions of variance $\sigma_1^2$ for the high-quality samples and $\sigma_2^2 > \sigma_1^2$ for the low-quality ones:
\begin{equation}\label{equation:introDistributionsOfXandSigmaandW}
\ac{X_{ij}}_{i \in [n], j \in [p]}
\stackrel{\text{i.i.d.}}{\sim}
\calN\pr{0,1}
\;\text{and}\;
Z = \Sigma W; \text{ where } \Sigma = \begin{pmatrix}
    \sigma_1 I_{n_1} & 0\\
    0 & \sigma_2 I_{n_2}
\end{pmatrix}, W \sim \calN\pr{0, I_n}.
\end{equation}
Although much of the literature on sparse recovery in the homogeneous noise setting assumes constant noise level $\sigma^2$, we don't assume in this work that $\sigma_1^2$ and $\sigma_2^2$ are constant. In fact, the reason previous work can assume constant noise variance without loss of generality is that the model (\ref{equation:modelIntroduction}) could be scaled down by $\sigma$ when the noise is homogeneous with variance $\sigma^2$ to make it constant. However, it is not the case anymore when the noise is heterogeneous. While our results naturally extend to sub-Gaussian errors, the sufficient conditions derived herein are not universal for general additive noise distributions.

The assumptions we impose (Gaussian design, exact sparsity, and additive noise) are standard in the sparse recovery literature (e.g. \cite{wainwright2009sharp}; \cite{reeves2019all}; \cite{gamarnik2022sparse}), and are adopted here to isolate the effect of heterogeneous noise while retaining the canonical structure of the recovery problem.

Our analysis allows for arbitrary scalings of $\sigma_1^2$ and $\sigma_2^2$ with respect to $p$ and $s$. Since data come from two different sources with different scalings, we define in addition to (\ref{equation:SNRGenericDefinition}) two signal-to-noise ratios: $\snr_1$ for high-quality observations and $\snr_2$ corresponding to low-quality observations.

We are interested in the two following questions:
\begin{itemize}[leftmargin=*]
    \item \textbf{Sampling complexity of sparse recovery:} How large do the sample sizes ($n_1, n_2$) need to be for the decoder to be able, information-theoretically, to recover the support of the signal?
    \item \textbf{Algorithmic recovery:} How large do the sample sizes ($n_1, n_2$) need to be for the decoder to be able to recover the support of the signal using a polynomial-time algorithm?
\end{itemize}
We summarize below our findings on each of these questions in the agnostic and informed settings.

\subsubsection{Sampling complexity of sparse recovery}
In the first part of our work (section \ref{section:SamplingComplexityOfSparseRecovery}), we focus on the question of sampling complexity. For simplicity, we assume the signal is binary, i.e. $\beta^\star \in \ac{0,1}^p$. Note that in this case, recovering the support is equivalent to recovering the signal. This assumption is very common in the literature \citep{5571873, reeves2019all, gamarnik2022sparse, chaabouniprice}. Intuitively, detecting a component of size $1$ is at least as hard as detecting a stronger component, so the resulting thresholds are representative of signals with non-zero entries bounded away from zero. We discuss this assumption in more detail in Remark~\ref{remark:binary-signal}.

Our main results, Theorem \ref{theorem:sufficientCondition,AgnosticSetting} for the agnostic setting and Theorem \ref{theorem:sufficientCondition,InformedSetting} for the informed one, each provide a sufficient condition (\ref{equation:conditionTheoremSufficientCondition,AgnosticSetting}, \ref{equation:conditionTheoremSufficientCondition,InformedSetting}) on the sample sizes $\pr{n_1, n_2}$ for support recovery. In both results, the condition has the form
$
\alpha_1 n_1 + \alpha_2 n_2 > n^\star,
$
for some coefficients $\alpha_1, \alpha_2 > 0$ depending on $\sigma_1^2, \sigma_2^2$ and $s$, and having different expressions in the agnostic and informed settings.  In particular, we note that if $\pr{n_1, n_2}$ verify this condition (i.e. are together large enough), then so do $\pr{n_1-1, n_2+\alpha_1/\alpha_2}$. In this sense, we say that one unit of high-quality data is worth:
\begin{equation}\label{equation:definitionOfPriceOfQualityIntroduction}
    \gamma\pr{s, \sigma_1^2, \sigma_2^2} \coloneqq \frac{\alpha_1}{\alpha_2}
\end{equation}
units of low-quality data for the sufficient condition to hold. We label $\gamma$ the \emph{Price of Quality} and study its behavior in the agnostic and informed settings and for different regimes of $\snr_1$ and $\snr_2$. In the agnostic setting, it is uniformly bounded. In particular, under our sufficient condition, one high-quality sample is never worth more than \emph{two} low-quality samples (\ref{equation:priceOfQualityHighSNRAgnostic}, \ref{equation:priceOfQualityLowSNRAgnostic}) for the sufficient condition. In the informed setting, where the decoder is informed of per-sample variances, the price of quality goes to infinity in the low $\snr_2$ \& high $\snr_1$ regime (\ref{equation:gammaInformedLowSNR2HighSNR1}), and can be arbitrarily large in both low and high $\snr$ regimes (\ref{equation:gammaInformedLowSNR}, \ref{equation:gammaInformedHighSNR}).

\subsubsection{Algorithmic recovery}
In the second part of our work (section \ref{section:AlgorithmicRecovery}), we focus on the question of algorithmic recovery. Unlike for sampling complexity, we don't assume that the signal is binary, but still require non-zero components to be bounded away from zero, i.e. there exists $\rho > 0$ such that $\min_{i \in S^\star} \abs{\beta^\star_i} \geq \rho$. This is standard in the literature \citep{5571873, ndaoud2020optimal, wang2010information} since we can't hope to detect non-zero signal components if they can have arbitrarily small amplitude.

Specifically, we study the question of \emph{signed support} recovery, that is, recovering not only the indices of the non-zero components of the signal but also their sign ($+$ or $-$). This is usual in the algorithmic sparse recovery literature \citep{wainwright2009sharp, wang2010information, omidiran2008high}, as it follows naturally from the standard proof techniques.

Our main result, Theorem \ref{theorem:necessaryAndSufficientConditionForLassoRecovery}, provides necessary and sufficient conditions for the $\ell_1$-regularized least-squares estimator (known as the \lasso) to recover the signed support of $\beta^\star$ in the agnostic setting. Our result reveals that the problem behaves like the homogeneous-noise setting \citep{wainwright2009sharp} with a homogeneous noise level equal to the average noise level of $Z$:
\begin{equation}\label{equation:defEffectiveNoiseIntroduction}
\sigma_\text{avg}^2
\coloneqq
\frac{n_1 \sigma_1^2 + n_2 \sigma_2^2}{n}.
\end{equation}
In particular, the sample size conditions (\ref{eq:necessityStatement:sampleSizeCondition}, \ref{eq:sufficiencyStatement:sampleSizeCondition}) do not depend on the noise levels $\sigma_1^2, \sigma_2^2$. The condition on the \lasso regularization parameter (\ref{eq:sufficiencyStatement:lassoParameterCondition}) only depends on $\sigma_1^2$ and $\sigma_2^2$ through $\sigma_\text{avg}^2$ and is the same as the one for homogeneous noise $\sigma_\text{avg}^2$ (see equation (28) in \cite{wainwright2009sharp}). We further provide a necessary and sufficient condition on noise scaling (Proposition \ref{proposition:implicitConditionOnNoiseScaling}).

This shows that, unlike in sampling complexity, high-quality and low-quality data contribute equally to the sample size condition under which the \lasso recovers the support of the signal.

Although we don't address algorithmic recovery in the informed setting, we briefly discuss it in Remark \ref{remark:LassoInformedSetting}, where we discuss why the proof of Theorem \ref{theorem:necessaryAndSufficientConditionForLassoRecovery} cannot be easily extended to the informed case.

\subsection{Contributions, Outline and Notations}
To the best of our knowledge, this paper is the first to:
\begin{enumerate}[leftmargin=*]
    \item Provide a sufficient condition for sparse recovery in the heterogeneous noise case, and quantify the trade-off between high-quality and low-quality data in the agnostic and informed settings.
    \item Extend necessary and sufficient conditions for \lasso sparse recovery to the heterogeneous-noise, agnostic setting and show that high-quality and low-quality data contribute equally to reaching the algorithmic threshold.
\end{enumerate}
We organize the rest of the paper as follows. Section \ref{section:Preliminaries} introduces the problem setup. Section \ref{section:SamplingComplexityOfSparseRecovery} studies the sampling complexity of sparse recovery under heterogeneous noise.
Section \ref{section:AlgorithmicRecovery} investigates algorithmic recovery using the \lasso. Section \ref{section:Conclusion}
concludes and outlines directions for future work.

Throughout this document, we will use the following notations:
\begin{itemize}[leftmargin=*]
    \item We say that $f\pr{x} \simeq g\pr{x}$ as $x \rightarrow a \in \R \cup \ac{-\infty, +\infty}$ if and only if $f\pr{x} = g\pr{x} \pr{1+o\pr{1}}$.
    \item We denote by $h\pr{\cdot}$ the binary entropy: $h\pr{x} = - x \log{x} - \pr{1-x} \log\pr{1-x}$, $x \in \pr{0,1}$.
    \item We call \emph{$\ell_0$-norm} the number of non-zero coordinates of $x \in \R^d$, that is $\norm{x}_0 \coloneqq \sum_{i=1}^{d} \indic\pr{x_i \ne 0}$.
    \item We use uppercase letters (e.g. $X,Y,Z$) to indicate random quantities, and lowercase letters (e.g. $\beta$) to denote deterministic parameters.
\end{itemize}

\section{Preliminaries}\label{section:Preliminaries}
The problem of sparse signal recovery is defined above (\ref{equation:modelIntroduction}). The decoder a-priori knows that 
$\beta^\star$ is $s$-sparse and belongs to a known set $\calA \subseteq \R^p$. The design and noise are random with  $\pr{X_{ij}}_{i \in [n], j \in [p]} \stackrel{\text{i.i.d.}}{\sim} \calN\pr{0,1}$ and $Z \coloneqq \Sigma W$ with $\Sigma$ and $W$ defined as in (\ref{equation:introDistributionsOfXandSigmaandW}) and $n_1 + n_2 = n$. We define $Z^1 \coloneqq \pr{Z_1, \dots, Z_{n_1}}^T$ and $Z^2 \coloneqq \pr{Z_{n_1+1}, \dots, Z_{n}}^T$, so that $Z = \begin{pmatrix} Z^1 \\ Z^2\end{pmatrix}$. The signal-to-noise ratio (\ref{equation:SNRGenericDefinition}) writes:
\begin{equation}\label{equation:definitionSNRExplicit}
\snr
\coloneqq
\frac{\E{\norm{X\beta}_2^2}}{\E{\norm{Z}_2^2}}
=
\frac{ns}{n_1 \sigma_1^2 + n_2 \sigma_2^2}
=
\frac{s}{\sigma_\text{avg}^2},
\end{equation}
where $\sigma_\text{avg}^2$ denotes the average noise level (\ref{equation:defEffectiveNoiseIntroduction}). In addition, we define the \emph{high-quality SNR} and the \emph{low-quality SNR} respectively by:
$$
\snr_1
\coloneqq
\frac{\E{\norm{\br{y_i - x_i^T \beta^\star}_{i=1}^{n_1}}_2^2}}{\E{\norm{Z^1}_2^2}}
= \frac{s}{\sigma_1^2}
\;
,
\;
\snr_2
\coloneqq
\frac{\E{\norm{\br{y_i - x_i^T \beta^\star}_{i=n_1+1}^{n_2}}_2^2}}{\E{\norm{Z^2}_2^2}}
= \frac{s}{\sigma_2^2}.
$$
In particular, we always have $\snr_2 < \snr_1$, which reveals three regimes of interest:
\begin{itemize}[leftmargin=*]
    \item High $\snr$: when $\snr_1, \snr_2 \rightarrow +\infty$, or equivalently $\sigma_2^2 = o\pr{s}$.
    \item Low $\snr_2$, High $\snr_1$: $\snr_2 \rightarrow 0$, $\snr_1 \rightarrow +\infty$ or equivalently $\sigma_2^2 = \omega\pr{s}$, $\sigma_1^2 = o\pr{s}$.
    \item Low $\snr$: when $\snr_1, \snr_2 \rightarrow 0$, or equivalently $\sigma_1^2 = \omega\pr{s}$.
\end{itemize}

\section{Sampling complexity of sparse recovery}
\label{section:SamplingComplexityOfSparseRecovery}
In this section, we are interested in determining whether it is possible, information-theoretically, to recover the support of the signal, depending on the sample size $n$. We assume that $\beta^\star$ is binary and a priori $s$-sparse, that is:
$
\calA
\coloneqq
\calB_{p,s}
= \ac{\beta \in \ac{0,1}^p \colon \norm{\beta}_0 = s}.
$
\begin{remark}[Binary-signal assumption]
\label{remark:binary-signal}
Our results for sparse recovery can be viewed as applying to signals whose
non-zero components are at least \(1\) in magnitude, i.e. $\beta^\star \in \calC_{p,s}\pr{1} \coloneqq \ac{\beta \in \R^d : \min_{i \in \support{\beta}} \abs{\beta_i} \ge 1}.$ Assuming that the non-zero entries are exactly equal to \(1\) serves only to simplify computations. Intuitively, detecting a component of magnitude \(1\) is at least as hard as detecting a stronger component, so stronger signals can only make recovery easier. Conversely, detecting a signal in \(\calC_{p,s}\pr{1}\) is at least as hard as detecting a binary signal, since \(\ac{0,1}^p \subseteq \calC_{p,s}\pr{1}\). More generally, recovering any signal whose non-zero entries are bounded below by some \(\rho>0\) can be reduced to the case of \(\calC_{p,s}\pr{1}\) by rescaling the model (\ref{equation:modelIntroduction}) by $\rho$.
\end{remark}
Let $\symdiff{A}{B} \coloneqq \pr{A \cup B} \setminus \pr{A \cap B}$ denote the symmetric difference between any two finite sets $A$ and $B$, and $\support{\beta} \coloneqq \ac{i \in [p] \colon \beta_i \ne 0}$ denote the support of any vector $\beta \in \R^p$.
Let $\delta \in \pr{0,1}$. We say that $\hat{\beta} \in \calB_{p,s}$ \emph{recovers the support of $\beta^\star$ up to error $\delta$} if
$
\abs{\symdiff{\support{\hat{\beta}}}{\support{\beta^\star}}}
< 2 \delta s
$.

\subsection{Agnostic setting}
In the agnostic setting where the decoder ignores the quality of each observation, the sample sizes $\pr{n_1, n_2}$ and the noise levels $\pr{\sigma_1^2, \sigma_2^2}$. Motivated by the maximum likelihood estimator in the homoscedastic setting (see \cite{gamarnik2022sparse}), we define an estimator such that:
\begin{equation}\label{equation:estimatorAgnosticSetting}
\hat{\beta}
\in
\argmin_{\beta \in \calB_{p,s}} \norm{Y - X \beta}^2_2.
\end{equation}
\begin{theorem}[Sufficient condition for support recovery in the agnostic setting]
\label{theorem:sufficientCondition,AgnosticSetting}\phantom{}
    \begin{enumerate}[leftmargin=*]
        \item Assume $s = o\pr{p}$ and $s \rightarrow +\infty$ as $p \rightarrow +\infty$. Then let $n^\star \coloneqq 2 s \log\pr{p/s}$.
        \item Assume $s = \alpha p \;$ for some constant $\alpha \in \pr{0,1}$. Then let $n^\star \coloneqq 2 h\pr{\alpha} p$.
    \end{enumerate}
    In both settings described above, if there exists $\varepsilon > 0$ such that:
    \begin{equation}\label{equation:conditionTheoremSufficientCondition,AgnosticSetting}
    n_1 \log\pr{1 + \frac{\delta \pr{2 \sigma_2^2 - \sigma_1^2} s}{2 \sigma_2^4}}
    + n_2 \log\pr{1 + \frac{\delta s}{2 \sigma_2^2}}
    \geq \pr{1 + \varepsilon} n^\star,
    \end{equation}
    then $\hat{\beta}$ recovers the support of $\beta^\star$ up to error $\delta$ w.h.p.:
    \begin{equation}\label{equation:convergenceStatementInTheoremSufficientCondition,AgnosticSetting}
    \Pro\pr{\abs{\symdiff{\support{\beta^\star}}{\support{\hat{\beta}}}} < 2 \delta s}
    \geq
    1 - \exp\acBig{ - \pr{\varepsilon + o\pr{1}} n^\star / 2}
    \stackrel{p \rightarrow +\infty}{\longrightarrow} 1.
    \end{equation}
\end{theorem}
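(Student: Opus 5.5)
The plan is a first-moment (union bound) argument over all competitors $\beta \in \calB_{p,s}$ that are far from $\beta^\star$. Each such competitor gets a Chernoff bound, using a single Chernoff parameter tuned to the low-quality noise level so that both groups of samples are handled simultaneously.

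\textbf{Failure event.} If $\hat\beta$ fails, then some $\beta \in \calB_{p,s}$ with $\abs{\symdiff{\support{\beta}}{\support{\beta^\star}}} = 2k$, $k \geq \delta s$, satisfies $\norm{Y - X\beta}_2^2 \le \norm{Y - X\beta^\star}_2^2 = \norm{Z}_2^2$. Write $v \coloneqq \beta^\star - \beta$, so $\norm{v}_2^2 = 2k$. This event is
\[
\sum_{i=1}^n \pr{G_i^2 + 2 G_i Z_i} \le 0, \qquad G_i \coloneqq x_i^T v .
\]
Here the $G_i \sim \calN\pr{0,2k}$ are i.i.d. and independent of $Z$.

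\textbf{Chernoff bound for one competitor.} For $t>0$, Markov's inequality gives $\Pro\pr{\cdot} \le \prod_i \E\, e^{-t(G_i^2 + 2G_iZ_i)}$. Conditioning on $G_i$ first, $\E_{Z_i} e^{-2tG_iZ_i} = e^{2t^2\sigma^2 G_i^2}$, where $\sigma^2$ is the variance of $Z_i$. The Gaussian moment generating function then gives
\[
\E\, e^{-t(G_i^2 + 2G_iZ_i)} = \pr{1 + 4kt - 8k\sigma^2 t^2}^{-1/2}.
\]
Since the decoder is agnostic, the same $t$ must be used for all rows. I choose $t = 1/(4\sigma_2^2)$, which is the optimal choice for the low-quality rows. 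This yields the factor $\pr{1 + k/(2\sigma_2^2)}^{-1/2}$ for each low-quality row. For each high-quality row it yields
\[
\pr{1 + \frac{k}{\sigma_2^2} - \frac{k\sigma_1^2}{2\sigma_2^4}}^{-1/2} = \pr{1 + \frac{k\pr{2\sigma_2^2 - \sigma_1^2}}{2\sigma_2^4}}^{-1/2}.
\]
Both bases are increasing in $k$ because $2\sigma_2^2 > \sigma_1^2$. Hence for every $k \ge \delta s$ the probability is at most $\exp\pr{-\tfrac12 L}$, where $L$ is the left-hand side of (\ref{equation:conditionTheoremSufficientCondition,AgnosticSetting}). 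Under the hypothesis this is at most $\exp\pr{-(1+\varepsilon)n^\star/2}$.

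\textbf{Union bound.} The number of competitors with overlap $s-k$ is $\binom{s}{k}\binom{p-s}{k}$. Summing over $k$ and using Vandermonde's identity, the total count is at most $\binom{p}{s}$. Standard estimates give:
\begin{itemize}
\item when $s = o(p)$ and $s \to \infty$, $\log\binom{p}{s} = s\log(p/s)\pr{1+o(1)} = \tfrac{n^\star}{2}\pr{1+o(1)}$;
\item when $s = \alpha p$, $\log\binom{p}{s} = p\,h(\alpha)\pr{1+o(1)} = \tfrac{n^\star}{2}\pr{1+o(1)}$.
\end{itemize}
Multiplying the count by the per-competitor bound gives a failure probability of at most $\exp\ac{-\pr{\varepsilon + o(1)}n^\star/2}$, which is the claim (\ref{equation:convergenceStatementInTheoremSufficientCondition,AgnosticSetting}).

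\textbf{Main obstacle.} The delicate step is the Chernoff step. One must check that the moment generating function is finite at the chosen $t$, i.e. $1 + 4kt - 8k\sigma^2 t^2 > 0$ for both variances. At $t = 1/(4\sigma_2^2)$ this holds because it equals the positive quantities above. One must also justify using a single common $t$, which costs optimality on the high-quality rows and explains the coefficient $\pr{2\sigma_2^2 - \sigma_1^2}/\sigma_2^4$. A cruder bound of $\binom{s}{k}\binom{p-s}{k}$ term by term could lose constants, so I would rely on the Vandermonde bound by $\binom{p}{s}$ rather than optimizing over $k$.
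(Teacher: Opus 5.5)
Your proposal is correct and follows the paper's proof essentially step for step. Both use a Chernoff bound on the loss difference with the single parameter $t = 1/(4\sigma_2^2)$, the conditional Gaussian/$\chi^2$ moment generating function computation giving per-row factors $\bigl(1 + k(2\sigma_2^2-\sigma_1^2)/(2\sigma_2^4)\bigr)^{-1/2}$ and $\bigl(1 + k/(2\sigma_2^2)\bigr)^{-1/2}$, monotonicity in $k \geq \delta s$, and a union bound over at most $\binom{p}{s}$ supports combined with Stirling estimates for $\log\binom{p}{s}$.
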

\begin{proof}[Proof Sketch]
    The proof of Theorem \ref{theorem:sufficientCondition,AgnosticSetting} is in appendix \ref{proofOftheorem:sufficientCondition,AgnosticSetting} and uses standard techniques. We control the probability that a high-error support attains a lower objective value in (\ref{equation:estimatorAgnosticSetting}) than the ground truth and then take a union bound over such supports. For any $\beta$, we have:
    \begin{equation}\label{equation:proofSketchOfTheorem1ErrorDelta}
      \norm{Y - X\beta}_2^2 -  \norm{Y - X\beta^\star}_2^2
      =
      \sum_{i=1}^{n} \ac{\inner{X_i}{\beta^\star - \beta}^2 + 2 Z_i \inner{X_i}{\beta^\star - \beta}}.
    \end{equation}
    Applying a Chernoff bound to the RHS above and analyzing the MGF of the summands yields an exponent that factorizes across two blocks (see Proposition \ref{proposition:SufficientConditionAgnosticSettingChernoffBound}). We conclude using a union bound over supports $S$ with $\abs{\symdiff{S}{S^\star}} \geq 2 \delta s$ (there are at most $\binom{p}{s}$ of them).
\end{proof}

We interpret Theorem \ref{theorem:sufficientCondition,AgnosticSetting} as follows.
\begin{itemize}[leftmargin=*]
    \item \textbf{Sample Complexity.} In our setup, the decoder knows that $\beta^\star$ is exactly $s$-sparse. When $s=0$ or $s=p$, the support is fully determined and there is no ambiguity, making recovery trivial and requiring no samples. For intermediate values of $s$, the decoder must distinguish among many candidate supports, whose cardinality is $\binom{p}{s}$. This combinatorial ambiguity renders the recovery problem non-trivial and leads to the sample complexity characterized by $n^\star$ in Theorem \ref{theorem:sufficientCondition,AgnosticSetting}.
    \item \textbf{Price of Quality.}
    The sufficient condition for recovery (\ref{equation:conditionTheoremSufficientCondition,AgnosticSetting}) is equivalent to a linear combination of the sample size $n_1$ and $n_2$ being larger than the threshold $n^\star$. The coefficients of the sample sizes reveal that one unit of high-quality data is worth:
    \begin{equation}\label{equation:priceOfQualityExpression}
    \gamma
    \coloneqq \frac{\log\pr{1+ \delta \pr{2\sigma_2^2 - \sigma_1^2} s/\pr{2 \sigma_2^4}}}{\log\pr{1 + \delta s / \pr{2 \sigma_2^2}}} > 1
    \end{equation}
    units of low-quality data for the sufficient condition to hold. We call $\gamma$ the \emph{Price of Quality}. In fact, one unit of high-quality data can be replaced by $\gamma$ units of low-quality data: that is, if $(n_1, n_2)$ are large enough for the sufficient condition to hold (and are hence sufficient for recovering $\beta^\star$), then so are $(n_1-1, n_2+\gamma)$.
    \item \textbf{High $\snr_2$ regime.}
    Assume $s = \omega\pr{\sigma_2^2}$.
    The price of quality (\ref{equation:priceOfQualityExpression}) writes:
    \begin{equation}\label{equation:priceOfQualityHighSNRAgnostic}
    \gamma
    \simeq
    \frac{\log\pr{\delta s/ \pr{2 \sigma_2^2}} + \log\pr{2 - \sigma_1^2/\sigma_2^2}}{\log\pr{\delta s/ \pr{2 \sigma_2^2}}}
    \simeq
    1,
    \end{equation}
    which means that when $\sigma_1^2, \sigma_2^2 = o\pr{s}$, the high-quality and low-quality data contribute equally to the recovery condition (\ref{equation:conditionTheoremSufficientCondition,AgnosticSetting}).
    \item \textbf{Low $\snr_2$ regime.}
    Assume $s = o\pr{\sigma_2^2}$. The price of quality (\ref{equation:priceOfQualityExpression}) writes:
    \begin{equation}\label{equation:priceOfQualityLowSNRAgnostic}
    \gamma
    \simeq
    \frac{\delta \pr{2\sigma_2^2 - \sigma_1^2} s/\pr{2 \sigma_4^2}}{\delta s / \pr{2 \sigma_2^2}}
    \simeq
    2 - \frac{\sigma_1^2}{\sigma_2^2}.
    \end{equation}
    Note that $\gamma < 2$ for any $\sigma_1^2, \sigma_2^2$. We conclude, in the low SNR regime, that under our sufficient condition, one high-quality sample can be replaced by up to two low-quality samples.
\end{itemize}

\vspace{0.5em}
\begin{remark}[Limitations]
\phantom{}
\begin{itemize}[leftmargin=*]
    \item The condition in Theorem \ref{theorem:sufficientCondition,AgnosticSetting} is sufficient and is not expected to be information-theoretically sharp. The potential looseness arises from a relaxation in the Chernoff bound used to control the probability of support misidentification. In the heterogeneous-noise setting, optimizing the Chernoff exponent leads to a cubic equation (see (\ref{equation:thirdDegreePolynomial})), whose exact solution yields a tighter but less tractable condition. To retain a closed-form and interpretable sufficient condition, we rely on a relaxation of this equation. In the homogeneous-noise case, solving the analogous equation is known to recover the sharp threshold, and we expect that optimizing (\ref{equation:thirdDegreePolynomial}) would similarly lead to a tighter characterization here, though we do not pursue this direction in the present work.
    \item Even under the assumption that the decoder is agnostic to the quality of the data, the estimator $\hat{\beta}$ (\ref{equation:estimatorAgnosticSetting}), might not constitute the best approach to recover the support of $\beta^\star$. For instance, especially in the low $\snr$ regime, the decoder might re-weight the loss of each observation by the magnitude of its observed label, i.e.:
    $$
    \argmin_{\beta \in \calB_{p,s}} \sum_{i=1}^{n} \frac{1}{Y_i^2} \prBig{Y_i - \inner{x_i}{\beta}}^2,
    $$
    as an attempt to rescale each row of data by its noise level. In fact, in the low $\snr$ regime we have $\E Y_i^2 \simeq \sigma_i^2$ where $\sigma_i^2$ denotes the noise level corresponding to the $i^{\text{th}}$ observation, which motivates the use of $Y_i^2$ as a proxy for $\sigma_i^2$ when the noise levels are unknown.
    \item Classical approaches to heteroscedastic regression either assume known noise levels or explicitly acknowledge heteroscedasticity as part of the statistical modeling assumptions (see, e.g. \cite{buja2019models}). Extending such considerations to sparse support recovery in the mixed-quality setting introduces an additional layer of difficulty, since one must control both the accuracy of variance-related modeling and its impact on support identification. While variance-aware procedures may improve performance when the noise levels differ significantly, a rigorous analysis of such methods is beyond the scope of this work and constitutes an interesting direction for future research.
\end{itemize}
\end{remark}

\subsection{Informed setting}\label{section:InformedSetting,InfoTheoretic}
In this section, we assume that the decoder knows the distribution of each noise entry: $\calN\pr{0,\sigma_1^2}$ or $\calN\pr{0,\sigma_2^2}$. Recall the distributions of $Z$ and $W$ from (\ref{equation:introDistributionsOfXandSigmaandW}). The MLE is define by (see appendix \ref{section:proofOfequation:expressionOfMLEInformedSettingInfoTheoretic} for a proof):
\begin{equation}\label{equation:expressionOfMLEInformedSettingInfoTheoretic}
\hat{\beta}_{\text{MLE}}
\in
\argmin_{\beta \in \calB_{p,s}}
\norm{\Sigma^{-1} \pr{Y - X \beta}}_2^2.
\end{equation}
\begin{theorem}[Sufficient condition for support recovery in the informed setting]
\label{theorem:sufficientCondition,InformedSetting}
\phantom{}
\begin{enumerate}[leftmargin=*]
    \item Assume $s = o\pr{p}$ and $s \rightarrow +\infty$ as $p \rightarrow +\infty$. Then let $n^\star \coloneqq 2 s \log\pr{p/s}$.
    \item Assume $s = \alpha p \;$ for some constant $\alpha \in \pr{0,1}$. Then let $n^\star \coloneqq 2 h\pr{\alpha} p$.
\end{enumerate}
In both settings described above, if there exists $\varepsilon > 0$ such that:
\begin{equation}\label{equation:conditionTheoremSufficientCondition,InformedSetting}
n_1 \log\pr{1 + \frac{\delta s}{2 \sigma_1^2}}
+ n_2 \log\pr{1 + \frac{\delta s}{2 \sigma_2^2}}
\geq \pr{1 + \varepsilon} n^\star,
\end{equation}
then $\hat{\beta}_\text{MLE}$ recovers the support of $\beta^\star$ up to error $\delta$ w.h.p.:
\begin{equation}\label{equation:convergenceStatementInTheoremSufficientCondition,InformedSetting}
\Pro\pr{\abs{\symdiff{\support{\beta^\star}}{\support{\hat{\beta}_\text{MLE}}}} < 2 \delta s}
\geq
1 - \exp\acBig{ - \pr{\varepsilon + o\pr{1}} n^\star / 2}
\stackrel{p \rightarrow +\infty}{\longrightarrow} 1.
\end{equation}
\end{theorem}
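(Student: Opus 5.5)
We follow the same template as Theorem \ref{theorem:sufficientCondition,AgnosticSetting}: a Chernoff bound for each fixed wrong support, then a union bound. The key simplification is that after whitening by $\Sigma^{-1}$, each block is a homogeneous problem with unit noise.

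\textbf{Step 1: pairwise comparison.} Fix $\beta \in \calB_{p,s}$ with $\abs{\symdiff{\support{\beta}}{\support{\beta^\star}}} = 2k \geq 2\delta s$, and write $\Delta \coloneqq \beta^\star - \beta$, so that $\norm{\Delta}_2^2 = 2k$. Since $\Sigma^{-1}(Y - X\beta^\star) = W$, we have
\begin{equation*}
\norm{\Sigma^{-1}(Y-X\beta)}_2^2 - \norm{\Sigma^{-1}(Y-X\beta^\star)}_2^2
= \sum_{i=1}^n \Big\{ \sigma_{(i)}^{-2}\inner{X_i}{\Delta}^2 + 2\sigma_{(i)}^{-1} W_i \inner{X_i}{\Delta} \Big\},
\end{equation*}
where $\sigma_{(i)} \in \ac{\sigma_1,\sigma_2}$ is the noise level of row $i$. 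Set $G_i \coloneqq \inner{X_i}{\Delta} \sim \calN(0,2k)$; these are independent of each other and of $W$.

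\textbf{Step 2: Chernoff bound.} For $t>0$, the probability that the difference above is $\le 0$ is at most the product of the moment generating functions $\E\exp\{-t(a_i G_i^2 + 2\sqrt{a_i} W_i G_i)\}$, with $a_i = \sigma_{(i)}^{-2}$. First integrate out $W_i$: this gives $\exp\{-(t - 2t^2) a_i G_i^2\}$. Then integrate out $G_i$: for $t \le 1/2$ this gives $\pr{1 + 4k a_i (t - t^2\cdot 2)}^{-1/2}$.

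The crucial point is that the per-row optimizer of $t - 2t^2$ is $t = 1/4$ in every row, independently of $a_i$. So unlike the agnostic case, where optimizing leads to the cubic (\ref{equation:thirdDegreePolynomial}), here a single common $t$ is optimal for both blocks. With $t = 1/4$ we get $t - 2t^2 = 1/8$, and the bound becomes
\begin{equation*}
\exp\Big\{-\tfrac{n_1}{2}\log\pr{1+\tfrac{k}{2\sigma_1^2}} - \tfrac{n_2}{2}\log\pr{1+\tfrac{k}{2\sigma_2^2}}\Big\}.
\end{equation*}
This is monotone in $k$, so for $k \geq \delta s$ it is at most $\exp\{-\tfrac12 L\}$, where $L$ denotes the left-hand side of (\ref{equation:conditionTheoremSufficientCondition,InformedSetting}).

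\textbf{Step 3: union bound.} There are at most $\binom{p}{s}$ candidate supports. We need $\log\binom{p}{s} \le (1+o(1))\, n^\star/2$:
\begin{itemize}[leftmargin=*]
\item if $s = o(p)$ with $s \to \infty$, then $\log\binom{p}{s} \simeq s\log(p/s)$;
\item if $s = \alpha p$, Stirling's formula gives $\log\binom{p}{s} \simeq h(\alpha)p$.
\end{itemize}
In both cases this equals $n^\star/2$ up to a factor $1+o(1)$. The failure probability is therefore at most $\exp\{(1+o(1))n^\star/2 - (1+\varepsilon)n^\star/2\}$, which is (\ref{equation:convergenceStatementInTheoremSufficientCondition,InformedSetting}). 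On the complementary event, every high-error $\beta$ has a strictly larger objective than $\beta^\star$, so $\hat\beta_{\text{MLE}}$ must have error below $2\delta s$.

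\textbf{Main obstacle.} The only delicate point is the MGF computation: checking integrability and the exact constant. Integrability of the Gaussian quadratic requires $1 + 4k a_i(t-2t^2) > 0$, which holds at $t = 1/4$. The constant must match the $\delta s/(2\sigma_j^2)$ appearing in (\ref{equation:conditionTheoremSufficientCondition,InformedSetting}). Otherwise, the argument is a direct adaptation of Proposition \ref{proposition:SufficientConditionAgnosticSettingChernoffBound}.
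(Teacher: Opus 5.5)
Your proof is correct and follows essentially the same route as the paper: a Chernoff bound on the difference of the $\Sigma^{-1}$-rescaled losses, using the common parameter $t = 1/4$ (the minimizer of $-t + 2t^2$, which does not depend on the block's variance). This gives a factor $\bigl(1 + \delta s/(2\sigma_j^2)\bigr)^{-n_j/2}$ per block, followed by a union bound over the $\binom{p}{s}$ supports and the Stirling estimate. Writing the loss difference in terms of the whitened noise $W$, instead of rescaling the Chernoff parameter to $\theta/\sigma_j^2$ as the paper does, is only a cosmetic difference.
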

\begin{proof}[Proof Sketch]
    The proof of Theorem \ref{theorem:sufficientCondition,InformedSetting} is given in appendix \ref{proofOftheorem:sufficientCondition,InformedSetting} and follows a similar argument as Theorem \ref{theorem:sufficientCondition,AgnosticSetting}. Here, the rescaled loss in (\ref{equation:expressionOfMLEInformedSettingInfoTheoretic}) leads to a Chernoff bound that can be optimized in closed-form, yielding a sharp convergence rate.
\end{proof}

We interpret Theorem \ref{theorem:sufficientCondition,InformedSetting} as follows.
\begin{itemize}[leftmargin=*]
    \item \textbf{Price of Quality.} In the informed setting, the expression of the price of quality is different from the one in the agnostic case (\ref{equation:priceOfQualityExpression}). It writes:
    \begin{equation}\label{equation:priceOfQualityExpressionInformed}
        \gamma = \log\pr{1 + \frac{\delta s}{2 \sigma_1^2}} \Big/ \log\pr{1 + \frac{\delta s}{2 \sigma_2^2}}.
    \end{equation}
    \item \textbf{Low SNR regime.} Assume $\sigma_1^2 = \omega\pr{s}$. Then:
    \begin{equation}\label{equation:gammaInformedLowSNR}
        \gamma
        \simeq
        \sigma_2^2 / \sigma_1^2.
    \end{equation}
    \item \textbf{Low $\snr_2$, High $\snr_1$ regime.} Assume $\sigma_2^2 = \omega\pr{s}$ and $\sigma_1^2 = o\pr{s}$. Then:
    \begin{equation}\label{equation:gammaInformedLowSNR2HighSNR1}
        \gamma
        =
        \Theta\pr{\frac{\log\pr{s / \sigma_1^2}}{s / \sigma_2^2}}
        =
        \Theta\pr{\frac{\log{\snr_1}}{\snr_2}}
        \stackrel{p \rightarrow +\infty}{\longrightarrow}
        +\infty.
    \end{equation}
    \item \textbf{High SNR regime.} Assume $\sigma_2^2 = o\pr{s}$. Then:
    \begin{equation}\label{equation:gammaInformedHighSNR}
        \gamma
        \simeq
        \log\pr{s/\sigma_1^2} / \log\pr{s/\sigma_2^2}
        =
        \log{\snr_1} / \log{\snr_2}.
    \end{equation}
\end{itemize}

\begin{remark}\phantom{}
\begin{itemize}[leftmargin=*]
    \item Compared to the agnostic setting (Theorem \ref{theorem:sufficientCondition,AgnosticSetting}), the appropriate rescaling of the loss in the MLE (\ref{equation:expressionOfMLEInformedSettingInfoTheoretic}) constitutes a better use of the high-quality data, in the sense that it leads to a higher price of quality $\gamma$. In particular, $\gamma$ is infinite in the low $\snr_2$ \& high $\snr_1$ setting (\ref{equation:gammaInformedLowSNR2HighSNR1}) and can be arbitrarily large in both low and high $\snr$ regimes (\ref{equation:gammaInformedLowSNR}, \ref{equation:gammaInformedHighSNR}).
    \item The sufficient condition (\ref{equation:conditionTheoremSufficientCondition,InformedSetting}) is obtained by optimizing the Chernoff exponent exactly (see (\ref{equation:SufficientConditionInformedCaseChernoffBound}) and (\ref{equation:ChernoffBoundIsTightestInformedSetting})). In homogeneous-noise settings, analogous optimizations are known to yield necessary and sufficient thresholds \citep{gamarnik2022sparse, wang2010information, chaabouniprice}. Establishing full necessity in the heterogeneous setting remains an interesting direction for future work.\vspace{0.5em}
\end{itemize}
\end{remark}

\begin{remark}[Generalizations of Theorem \ref{theorem:sufficientCondition,AgnosticSetting} and Theorem \ref{theorem:sufficientCondition,InformedSetting}]\phantom{}
\begin{itemize}[leftmargin=*]
    \item 
    \vspace{-0.5em}\textbf{Generalization to \emph{signed} support recovery.} The large-deviation bound in the proofs of Theorem \ref{theorem:sufficientCondition,AgnosticSetting} and Theorem \ref{theorem:sufficientCondition,InformedSetting} suggests a potential extension of the sufficient conditions, respectively (\ref{equation:conditionTheoremSufficientCondition,AgnosticSetting}) and (\ref{equation:conditionTheoremSufficientCondition,InformedSetting}), to the setting where the non-zero components of the signal $\beta^\star$ are not necessarily $+1$ but rather in $\ac{-1, +1}$, and the decoder is interested in the \emph{signed} support recovery, where they recover not only the indices of the non-zero components of $\beta^\star$, but also their sign. In this setting, the error measure expressed by the symmetric difference of supports in (\ref{equation:convergenceStatementInTheoremSufficientCondition,AgnosticSetting}) and (\ref{equation:convergenceStatementInTheoremSufficientCondition,InformedSetting}) extends to the number of `wrong' components in $\hat{\beta}$, given by $\|\hat{\beta} - \beta^\star\|_0$. The threshold $n^\star \simeq \log\binom{p}{s}$ would increase by an additive factor of $s \log{2}$ to account for the increase in the size of the search space \big(since $\beta^\star \in \ac{\beta \in \ac{-1, 0, 1}^p \colon \norm{\beta}_0 = s}$, which has cardinality $2^s \binom{p}{s}$\big). Asymptotically, this does not change the leading-order scaling in the sub-linear regime when $s = o\pr{p}$, and adds an extra $\alpha \log{2}$ term to the $h\pr{\alpha}$ factor in the linear regime when $s = \alpha p$.
    \item \textbf{Generalization to \emph{arbitrary noise} structures.} Theorem \ref{theorem:sufficientCondition,AgnosticSetting} and Theorem \ref{theorem:sufficientCondition,InformedSetting} are stated in the simple setting where the data comes from two sources, one \emph{good} and one \emph{bad}, motivated by the high- and low- quality data problem. The proof strategy suggests that these results extend to non-singular noise. In fact, if we only assume that $\Sigma$ is invertible, but not necessarily having the form in (\ref{equation:introDistributionsOfXandSigmaandW}), then the sufficient condition (\ref{equation:conditionTheoremSufficientCondition,AgnosticSetting}) in Theorem \ref{theorem:sufficientCondition,AgnosticSetting} extends to:
    \begin{equation}
        \sum_{i=1}^{n} \log\pr{1 + \frac{\delta \pr{2 \sigma_{\text{max}}\pr{\Sigma}^2 - \sigma_{i}\pr{\Sigma}^2} s}{2 \sigma_{\text{max}}\pr{\Sigma}^4}} \geq \pr{1 + \varepsilon} n^\star,
    \end{equation}
    where $\ac{\sigma_{i}\pr{\Sigma}}_{i=1}^{i=n}$ denote the $\sigma$-values of $\Sigma$, $\sigma_{\text{max}}\pr{\Sigma} \coloneqq \max_{i=1,\dots,n}\sigma_{i}\pr{\Sigma}$ and $\sigma_{\text{min}}\pr{\Sigma} \coloneqq \min_{i=1,\dots,n}\sigma_{i}\pr{\Sigma}$. Similarly, the sufficient condition (\ref{equation:conditionTheoremSufficientCondition,InformedSetting}) in Theorem \ref{theorem:sufficientCondition,InformedSetting} extends to:
    \begin{equation}
        \sum_{i=1}^{n} \log\pr{1 + \frac{\delta s}{2 \sigma_{i}\pr{\Sigma}^2}} \geq \pr{1 + \varepsilon} n^\star.
    \end{equation}
\end{itemize}
\end{remark}

\section{Algorithmic recovery}\label{section:AlgorithmicRecovery}
In this section, we are interested in the existence of a tractable algorithm to recover the support of the underlying signal. We assume that the components of the signal $\beta^\star$ take real values and are bounded away from zero: that is, $\calA \coloneqq
\calC_{p,s}\pr{\rho}
=
\ac{\beta \in \R^p \colon \norm{\beta}_0 = s, \; \min_{i \in \support{\beta}} \abs{\beta_i} \geq \rho}$, for some $\rho \in \R_+$.
We say that $\hat{\beta} \in \R^p$ recovers the signed support of $\beta^\star$ if $\sign(\hat{\beta}) = \sign(\beta^\star)$, where the $\sign \colon \R \longrightarrow \ac{-1,0,1}$ function is defined by $\sign\pr{0} = 0$ and $\sign\pr{x} = x/\abs{x}$ for all $x \ne 0$, and is applied coordinate-wise. A common approach to recovering the signed support of the signal is using the solution to the following $\ell_1$-constrained quadratic program, also known as the \lasso:
\begin{equation}\label{equation:LASSOdefinition}
\calB_{\lasso} \coloneqq \argmin_{\beta \in \R^p} \ac{ \frac{1}{2n} \norm{Y - X \beta}_2^2 + \lambda_p \norm{\beta}_1 },
\end{equation}
where $\lambda_p \geq 0$ denotes a sequence of regularization parameters converging to $0$ as $p \rightarrow +\infty$. We are interested in characterizing the regime where the \lasso recovers the signed support of the true signal. Specifically, we call ``recovery'' the event:
\begin{equation}\label{equation:recoveryPropertyLasso}
\calR\pr{X, \beta^\star, Z, \lambda_p}
\coloneqq
\ac{
\, \exists \; \hat{\beta} \in \calB_\lasso
\colon
\sign\pr{\hat{\beta}}
= \sign\pr{\beta^\star}
}.
\end{equation}

In the homogeneous noise setting, \cite{wainwright2009sharp} showed that the performance of the \lasso in estimating the signed support of $\beta^\star$ exhibits a phase transition with respect to the sample size. In fact, there exists a threshold $n_{\text{ALG}}$ such that:
\begin{itemize}[leftmargin=*]
    \item If $n > n_{\text{ALG}}$: then the \lasso correctly recovers the signed support of $\beta^\star$.
    \item If $n < n_{\text{ALG}}$: then the \lasso fails to recover the signed support of $\beta^\star$.
\end{itemize}
In addition, it is widely believed that no algorithm can recover the support of $\beta^\star$ in polynomial time when $n < n_\text{ALG}$. Indeed, \cite{gamarnik2022sparse} showed that the problem exhibits an OGP. This motivates the use of (\ref{equation:LASSOdefinition}) to estimate $\beta^\star$ in the agnostic setting where the decoder treats the data impartially. Our main result of this section, Theorem \ref{theorem:necessaryAndSufficientConditionForLassoRecovery}, extends the result mentioned above on the \lasso threshold (by \cite{wainwright2009sharp}) to the heterogeneous, agnostic noise setting.

\begin{theorem}[Lasso recovery phase transition]
\label{theorem:necessaryAndSufficientConditionForLassoRecovery}
    Assume that, as $p \rightarrow +\infty$; $s$ goes to infinity, $s = o\pr{p}$ and $n_1, n_2 = \omega\pr{s}$. Let
    $
    n_{\text{ALG}} \coloneqq 2 s \log\pr{p-s} + s + 1.
    $
    \begin{enumerate}[leftmargin=*]
        \item[i.]
        \label{theorem:necessaryAndSufficientConditionForLassoRecovery:necessityStatement}
        If there exists $\varepsilon > 0$ such that:
        \begin{equation}\label{eq:necessityStatement:sampleSizeCondition}
                n < \pr{1 - \varepsilon} n_{\text{ALG}},
        \end{equation}
        then, for any sequence $\lambda_p > 0$ such that $\frac{n_1 \sigma_1^2 + n_2 \sigma_2^2}{\lambda_p^2 n^2}$ has a limit in $\R_{\geq 0} \cup \ac{+\infty}$, we have $\Pro_{X, Z}\prBig{\calR\pr{X, \beta^\star, Z, \lambda_p}} \rightarrow 0$.
        \item[ii.]
        \label{theorem:necessaryAndSufficientConditionForLassoRecovery:sufficiencyStatement}
        If there exists $\varepsilon > 0$ such that:
        \begin{equation}\label{eq:sufficiencyStatement:sampleSizeCondition}
            n > \pr{1 + \varepsilon} n_{\text{ALG}},
        \end{equation}
        and $\pr{\lambda_p}_{p \geq 1} \rightarrow 0$ is chosen such that:
        \begin{equation}\label{eq:sufficiencyStatement:lassoParameterCondition}
        \frac{n \lambda_p^2}{\sigma_{\text{avg}}^2 \log\pr{p-s}} \rightarrow +\infty,
        \quad \text{and} \quad
        \frac{1}{\rho} \br{\lambda_p \sqrt{s} + \sqrt{\frac{\sigma_{\text{avg}}^2 \log s}{n}}} \rightarrow 0,
        \end{equation}
        then $\Pro_{X, Z}\prBig{\calR\pr{X, \beta^\star, Z, \lambda_p}} \rightarrow 1$.
    \end{enumerate}
\end{theorem}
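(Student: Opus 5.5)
We follow the primal--dual witness (PDW) argument of \cite{wainwright2009sharp} and isolate the one place where the noise enters, showing that heterogeneity only matters through $\sigma_\text{avg}^2$.

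\textbf{Primal--dual witness construction.} Let $S = S^\star$ and split $X = (X_S, X_{S^c})$. Solve the restricted \lasso on $S$ to get $\hat\beta_S$, set $\hat\beta_{S^c}=0$, and define the dual vector $\hat z_S=\sign(\beta^\star_S)$. Recovery (\ref{equation:recoveryPropertyLasso}) holds if and only if two conditions hold. The first is strict dual feasibility, $\max_{j\in S^c}|V_j|<1$, where
\[
V_j = X_j^T\Big[X_S(X_S^TX_S)^{-1}\hat z_S + \Pi_{S^\perp}\frac{Z}{\lambda_p n}\Big].
\]
The second is sign consistency, $\sign(\beta^\star_S + \Delta_S)=\sign(\beta^\star_S)$, where
\[
\Delta_S = (X_S^TX_S/n)^{-1}\big[X_S^TZ/n - \lambda_p \hat z_S\big].
\]
Since $X_{S^c}$ is independent of $(X_S,Z)$, conditionally on $(X_S,Z)$ the $V_j$ are i.i.d.\ centered Gaussians. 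Their variance is
\[
M_p = \hat z_S^T(X_S^TX_S)^{-1}\hat z_S + \frac{\|\Pi_{S^\perp}Z\|_2^2}{\lambda_p^2 n^2}.
\]

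\textbf{Key step: concentration of $M_p$.} Everything hinges on the concentration of $M_p$, and in particular of $\|\Pi_{S^\perp}Z\|_2^2$ with $Z=\Sigma W$. The projection $\Pi_{S^\perp}$ depends only on $X_S$, which is rotationally invariant and independent of $W$. So $\Pi_{S^\perp}$ is a uniformly random projection of rank $n-s$, and
\[
\E\|\Pi_{S^\perp}Z\|_2^2 = \frac{n-s}{n}\tr(\Sigma^2) = (n-s)\,\sigma_\text{avg}^2 .
\]
To get concentration we write
\[
\|\Pi_{S^\perp}\Sigma W\|_2^2 = \|\Sigma W\|_2^2 - \|\Pi_S \Sigma W\|_2^2 .
\]
The first term is a weighted $\chi^2$. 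Since $n_1,n_2=\omega(s)$, and in particular both diverge, it concentrates at $(n_1\sigma_1^2+n_2\sigma_2^2)(1+o(1))$ by Laurent--Massart. For the second term, $\Pi_S\Sigma W$ lies in an $s$-dimensional space, and $\E\|\Pi_S\Sigma W\|^2 = s\sigma_\text{avg}^2$. Its fluctuations are bounded by $\|\Sigma\|^2$ times a $\chi^2_s$ deviation. Here the hypothesis $n_i=\omega(s)$ is used to ensure this term is $o(n\sigma_\text{avg}^2)$, since $\sigma_2^2 \le n\sigma_\text{avg}^2/n_2$. The first term of $M_p$ concentrates at $\frac{s}{n-s-1}$ as an inverse-Wishart quadratic form, exactly as in the homogeneous case. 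Hence
\[
M_p \simeq \frac{s}{n-s-1} + \frac{\sigma_\text{avg}^2}{\lambda_p^2 n}\,(1+o(1)),
\]
which is the homogeneous expression with $\sigma^2$ replaced by $\sigma_\text{avg}^2$. I expect this step to be the main obstacle: the heterogeneous noise breaks the exact $\chi^2_{n-s}$ structure, and the correction terms must be controlled uniformly over arbitrary scalings of $\sigma_1^2,\sigma_2^2$.

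\textbf{Sufficiency (ii).} Conditional on $M_p$, the Gaussian maximum satisfies $\max_j|V_j|\le \sqrt{2M_p\log(p-s)}(1+o(1))$. With $n>(1+\varepsilon)n_{\text{ALG}}$ and $n\lambda_p^2/(\sigma_\text{avg}^2\log(p-s))\to\infty$, this gives $2M_p\log(p-s)<1-\varepsilon'$, so strict dual feasibility holds w.h.p.

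For sign consistency, bound $\|\Delta_S\|_\infty$ by two terms:
\begin{itemize}[leftmargin=*]
\item $\lambda_p\|(X_S^TX_S/n)^{-1}\|_{\infty,\infty}\lesssim\lambda_p\sqrt s$, the standard Wishart bound.
\item The noise term $(X_S^TX_S)^{-1}X_S^TZ$. Conditionally on $X_S$, it is Gaussian with covariance $(X_S^TX_S)^{-1}X_S^T\Sigma^2X_S(X_S^TX_S)^{-1}$, whose diagonal entries are $\lesssim \|\Sigma\|^2/n$ in the worst case. A sharper estimate uses the fact that $X_S^T\Sigma^2X_S/n$ concentrates at $\sigma_\text{avg}^2 I_s$, which again uses $n_i=\omega(s)$. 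This gives a maximum of order $\sqrt{\sigma_\text{avg}^2\log s/n}$.
\end{itemize}
Condition (\ref{eq:sufficiencyStatement:lassoParameterCondition}) then yields $\|\Delta_S\|_\infty<\rho$ w.h.p., so sign consistency holds.

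\textbf{Necessity (i).} If recovery holds, the PDW vector must satisfy $\max_{j\in S^c}|V_j|\le1$, since the \lasso solution with the correct sign is then unique on $S$. Gaussian maximum lower bounds give $\max_j|V_j|\ge\sqrt{2M_p\log(p-s)}(1-o(1))$. Using only the first term of $M_p$ and $n<(1-\varepsilon)n_\text{ALG}$, this exceeds $1$ w.h.p. whatever $\lambda_p$ is. The assumption that $\sigma_\text{avg}^2/(\lambda_p^2 n)$ has a limit is used, as in \cite{wainwright2009sharp}, to split into cases according to whether the noise term of $M_p$ dominates, and to handle a possible non-uniqueness of the \lasso solution.
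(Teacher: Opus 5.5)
Your proof is correct. It has the same skeleton as the paper's: the KKT characterization of recovery, conditional Gaussianity of the $V_j$ given $(X_S,W)$ with variance $M_p$, Gaussian-maximum bounds for dual feasibility, and an $\ell_\infty$ bound on $\Delta_S$. The technical lemmas, however, are obtained differently.

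The paper controls $M_p$ by computing $\E[M_p]$ and $\var(M_p)$ exactly. It writes $I_n - X_S(X_S^TX_S)^{-1}X_S^T = UDU^T$ with $U$ Haar-distributed, evaluates the fourth moments of $U$ with the formula from Meckes and the second moments of the inverse Wishart with Siskind's identity, and concludes by Chebyshev. It handles sign consistency the same way: Haar moments give $\var(Y_i')$, then Chebyshev, then a comparison with independent Gaussians and Markov on the expected maximum.

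You replace this with two elementary steps. The first is the decomposition $\norm{\Pi_{S^\perp}\Sigma W}_2^2=\norm{\Sigma W}_2^2-\norm{\Pi_S\Sigma W}_2^2$, with a weighted-$\chi^2$ bound for the first term and the identity $\E\norm{\Pi_S\Sigma W}_2^2=s\sigma_{\text{avg}}^2$ for the second. By Markov, that second term is already $o_P(n\sigma_{\text{avg}}^2)$ once $s=o(n)$, so the bound via $\sigma_2^2\le n\sigma_{\text{avg}}^2/n_2$ is not even needed. The second is blockwise operator-norm control of $X_S^T\Sigma^2X_S/n$, followed by a union bound over $s$ Gaussians. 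These reach the same conclusions with no Weingarten calculus and no comparison of correlated and independent maxima. What the paper's route buys in exchange is exact finite-$n$ formulas for these moments.

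You can go further than you do: the two-sided concentration of the noise part of $M_p$, which you flag as the main obstacle, is not needed.
\begin{itemize}
\item For sufficiency, $\norm{\Pi_{S^\perp}Z}_2^2\le\norm{Z}_2^2$ together with Markov gives $\log(p-s)\norm{\Pi_{S^\perp}Z}_2^2/(\lambda_p^2n^2)=o_P(1)$ directly from the first part of (\ref{eq:sufficiencyStatement:lassoParameterCondition}).
\item For necessity, you discard the noise term and use only that $\max_j|V_j|/\sqrt{M_p}$ is distributed as the maximum of $p-s$ i.i.d.\ $|\calN(0,1)|$ variables, whatever the size of $M_p$.
\end{itemize}

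Consequently, your necessity argument never uses the hypothesis that $(n_1\sigma_1^2+n_2\sigma_2^2)/(\lambda_p^2n^2)$ has a limit, contrary to your last sentence. The paper needs that hypothesis only because its choice of deviation in the Gaussian concentration bound forces a split into the cases where the noise part of $\E[M_p]$ is negligible or not. Your version therefore proves part (i) for every sequence $\lambda_p>0$.
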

The full proof of Theorem \ref{theorem:necessaryAndSufficientConditionForLassoRecovery} is given in Appendix \ref{proofOftheorem:necessaryAndSufficientConditionForLassoRecovery} and follows the core \lasso threshold argument of \cite{wainwright2009sharp}. We use the same argument but generalize it to the heterogeneous-noise setting, where the presence of the matrix $\Sigma$, no longer a scalar multiple of the identity, causes key steps of the classical proof to fail. We overcome this by applying a Gram–Schmidt (QR) decomposition of $X_S$ (\ref{equation:QRdecomposition}) and analyzing the resulting orthogonal matrix using properties of the Haar measure on the orthogonal group (e.g. see Lemma \ref{lemma:fourthOrderFormulaFromMeckes}). The monograph of \cite{meckes2019random} on Haar-distributed matrices was particularly valuable in understanding this component from random-matrix theory.
\begin{proof}[Proof Sketch of Theorem \ref{theorem:necessaryAndSufficientConditionForLassoRecovery}.]
We express the recovery property (\ref{equation:recoveryPropertyLasso}) via the first-order optimality conditions of the \lasso (\ref{equation:LASSOdefinition}):
\begin{equation}\label{equation:equivalentConditionsToRecoveryPropertyMainText}
\calR\pr{X, \beta^\star, \Sigma w, \lambda_p}
\hspace{-1mm} \iff \hspace{-1mm}
\begin{cases}
    \abs{\pr{ \frac{1}{n} X_S^T X_S}^{-1} \pr{ \frac{1}{n} X_S^T \Sigma w - \lambda_p\, \sign\pr{\beta^\star_S}}} < \abs{\beta^\star_S}\\
    \abs{X_{S^c}^T X_S \pr{X_S^T X_S}^{-1} \pr{ \frac{1}{n} X_S^T \Sigma w - \lambda_p \sign\pr{\beta^\star_S}} - \frac{1}{n} X_{S^c}^T \Sigma w} \leq \lambda_p
\end{cases}
\end{equation}
where absolute values and inequalities are taken component-wise. This well-known result \citep{wainwright2009sharp, fuchs2004recovery, meinshausen2006high, tropp2006just, zhao2006model} is stated in Proposition $\ref{proposition:foocEquivalentToRecovery}$. When (\ref{eq:sufficiencyStatement:sampleSizeCondition}) and (\ref{eq:sufficiencyStatement:lassoParameterCondition}) hold, the random variables inside the absolute values on the RHS of (\ref{equation:equivalentConditionsToRecoveryPropertyMainText}) concentrate below their respective upper bounds, establishing sufficiency. When (\ref{eq:necessityStatement:sampleSizeCondition}) holds, the second absolute value in (\ref{equation:equivalentConditionsToRecoveryPropertyMainText}) concentrates above $\lambda_p$, showing necessity.
\end{proof}

Although Theorem \ref{theorem:necessaryAndSufficientConditionForLassoRecovery} does not explicitly state any condition on the scaling on the noise, the existence of $\lambda_p \rightarrow 0$ such that (\ref{eq:sufficiencyStatement:lassoParameterCondition}) holds requires that the noise does not scale arbitrarily large. The next result explicitly states this condition.

\begin{proposition}[Necessary and sufficient condition on noise scaling]\label{proposition:implicitConditionOnNoiseScaling}
    If there exists $\pr{\lambda_p}_{p \geq 1} \rightarrow 0$ such that (\ref{eq:sufficiencyStatement:lassoParameterCondition}) holds, then:
    \begin{equation}\label{equation:implicitConditionOnNoiseScaling}
        \sigma_{\text{avg}}^2
        =
        o\pr{\frac{n}{\pr{1 + s/\rho^2} \log\pr{p-s}}}.
    \end{equation}
    Conversely, if (\ref{equation:implicitConditionOnNoiseScaling}) holds, let:
    \begin{equation}\label{equation:expressionOfLambdaInImplicitConditionOnNoiseScaling}
        \lambda_p \coloneqq \pr{\frac{\sigma_{\text{avg}}^2 \log\pr{p-s}}{\pr{1 + s/\rho^2} n}}^{1/4}.
    \end{equation}
    Then $\lambda_p \rightarrow 0$ and (\ref{eq:sufficiencyStatement:lassoParameterCondition}) holds.
\end{proposition}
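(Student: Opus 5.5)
Write $a \coloneqq \sigma_{\text{avg}}^2\log(p-s)/n$ and $b \coloneqq 1+s/\rho^2$. The proof is elementary algebra on the two conditions in (\ref{eq:sufficiencyStatement:lassoParameterCondition}), rewritten in terms of $a$ and $b$. The first condition reads $\lambda_p^2/a \to +\infty$. The second condition is the conjunction of $\lambda_p\sqrt{s}/\rho \to 0$ and $\sqrt{\sigma_{\text{avg}}^2\log s/(n\rho^2)}\to 0$, since both terms are nonnegative. The target (\ref{equation:implicitConditionOnNoiseScaling}) is exactly $ab \to 0$.

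\emph{Necessity.} Suppose $\lambda_p \to 0$ satisfies (\ref{eq:sufficiencyStatement:lassoParameterCondition}). We bound $ab = a + a s/\rho^2$ and treat the two pieces separately.
\begin{itemize}[leftmargin=*]
\item For the first piece, $a = (a/\lambda_p^2)\,\lambda_p^2$. Here $a/\lambda_p^2 \to 0$ by the first condition and $\lambda_p^2 \to 0$ by assumption, so $a \to 0$.
\item For the second piece, $a s/\rho^2 = (a/\lambda_p^2)\,(\lambda_p^2 s/\rho^2)$. The first factor tends to $0$, and the second tends to $0$ by the first half of the second condition. Hence $as/\rho^2 \to 0$.
\end{itemize}
Therefore $ab \to 0$, which is (\ref{equation:implicitConditionOnNoiseScaling}). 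Only the $\lambda_p\sqrt s$ part of the second condition is needed for this direction.

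\emph{Sufficiency.} Assume $ab\to 0$ and set $\lambda_p = (a/b)^{1/4}$ as in (\ref{equation:expressionOfLambdaInImplicitConditionOnNoiseScaling}).
\begin{itemize}[leftmargin=*]
\item Since $b\ge 1$, we have $a/b \le ab \to 0$, so $\lambda_p\to 0$.
\item First condition: $\lambda_p^2/a = (ab)^{-1/2}\to+\infty$.
\item Second condition, first term: $\lambda_p^2 s/\rho^2 \le \lambda_p^2 b = (ab)^{1/2}\to 0$.
\item Second condition, second term: we need $\sigma_{\text{avg}}^2\log s/(n\rho^2)\to 0$. Because $s=o(p)$, we have $\log s \le \log(p-s)$ for large $p$. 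Hence this quantity is at most $a/\rho^2$.
\end{itemize}

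\emph{Main obstacle.} The only delicate step is controlling $a/\rho^2$, since $\rho$ is free and could be small. I would handle it using $s\to\infty$, which gives $a/\rho^2 = (a s/\rho^2)/s \le ab/s \to 0$. It would still be enough if $s$ stayed bounded, via $a/\rho^2 \le ab/s$ with $s\ge 1$.

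This completes the plan. The one point to state explicitly in the write-up is the comparison $\log s\le\log(p-s)$, which relies on $s\le p/2$ for large $p$.
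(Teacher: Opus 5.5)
Your proposal is correct and follows essentially the same route as the paper. For necessity you factor $a = (a/\lambda_p^2)\,\lambda_p^2$ and $as/\rho^2 = (a/\lambda_p^2)(\lambda_p^2 s/\rho^2)$. For sufficiency you substitute $\lambda_p = (a/b)^{1/4}$ directly, and you control the $\log s$ term via $\log s \le \log(p-s)$ and $a/\rho^2 \le ab/s$, which the paper writes as $o\bigl(1/(\rho^2+s)\bigr)$. You are slightly more explicit than the paper in stating that $\log s \le \log(p-s)$ relies on $s = o(p)$; the paper uses this comparison without comment.
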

\begin{proof}
    See appendix \ref{proofOfproposition:implicitConditionOnNoiseScaling}.
\end{proof}

\begin{remark}[Correlated features]
Theorem \ref{theorem:necessaryAndSufficientConditionForLassoRecovery} is stated for independent features (i.e. $x_i \sim \calN\pr{0,I_p}$ for all $i \in [n]$). In the homogeneous-noise setting, analogous results for correlated designs under suitable regularity conditions on the covariance matrix were established by \cite{wainwright2009sharp}. Extending the heterogeneous-noise analysis to correlated designs requires additional tools and is left for future work. In this paper, we therefore focus on the independent-feature case.
\end{remark}

\begin{remark}[Informed setting]\label{remark:LassoInformedSetting}
Although we establish the phase transition for the \lasso only in the agnostic setting, a natural extension in the informed setting is the rescaled estimator defined by minimizing $\norm{\Sigma^{-1}(Y - X\beta)}_2^2$ instead of $\norm{Y - X\beta}_2^2$ in (\ref{equation:LASSOdefinition}). Extending the proof of Theorem \ref{theorem:necessaryAndSufficientConditionForLassoRecovery} and \cite{wainwright2009sharp} to this setting is nontrivial, as the presence of $\Sigma^{-1}$ factors alongside the design matrix in (\ref{equation:equivalentConditionsToRecoveryPropertyMainText}) destroys the Wishart structure $X_S^T X_S \sim \calW(I_s, n)$ used to control the moments of its inverse via classical inverse-Wishart arguments \citep{anderson1958introduction, siskind1972second}. An analysis would therefore require controlling the moments of $(X_S^T \Sigma^{-2} X_S)^{-1}$, which remains an interesting direction for future work.
\end{remark}

\section{Conclusion and Future Work}\label{section:Conclusion}
We study the problem of sparse recovery when observations come from mixed-quality sources. We establish sufficient conditions on the sample sizes $\pr{n_1, n_2}$ for both information-theoretic and algorithmic recovery purposes and in two settings, one when the decoder is completely agnostic to noise and one where they are informed of the per-sample noise variance.

At the level of the information-theoretic threshold, we study the trade-off between high-quality and low-quality samples, and label the number of low-quality samples required to replace one high-quality sample when our sufficient condition holds \emph{the Price of Quality}. In the agnostic setting, we reveal that this entity is quite low: in particular, under our sufficient condition, one high-quality sample is never worth more than two low-quality samples. However, in the informed setting, the price of quality can grow arbitrarily large depending on the noise variances and the signal-to-noise regime. This highlights a key practical implication of our results: whenever possible, quantify uncertainty in the annotations and rescale the loss accordingly.

At the algorithmic threshold, we show in the agnostic setting that the classical \lasso recovery results from the homogeneous setting remain valid in the heterogeneous case and depend only on the total sample size $n_1 + n_2$. First, the threshold itself is independent of the individual noise levels. Second, the sufficient condition on the penalization coefficient involves the noise only through its average, exactly as if all observations had that average noise. Consequently, high-quality and low-quality samples contribute \emph{equally} to the sample-size requirement for \lasso recovery. This reveals an unexpected difference in the effect of data heterogeneity on the information-theoretic and algorithmic thresholds for recovery.

Within the Gaussian design framework considered here, the informed information-theoretic threshold and the \lasso threshold are sharp, whereas the agnostic information-theoretic condition is sufficient but not proven tight.

In a broader discussion on how the information-theoretic and algorithmic thresholds interact across different problem settings, our result further emphasizes that the algorithmic threshold seems to be more ``robust" to changes in the traditional problem settings \citep{gamarnik2022sparse, wainwright2009sharp}. In fact, \cite{wang2010information} and \cite{chaabouniprice} observed that when the noise is homogeneous but the design is sparse (i.e. $X_{ij}$ set to $0$ uniformly at random) the information-theoretic threshold increases, while \cite{omidiran2008high} showed that the algorithmic threshold remains the same and is unaffected by changes in the sparsity level of the data (although this was shown only for the sufficient condition, with no corresponding result on necessity).

Although we do not study \lasso recovery in the informed setting, this remains a promising direction for future work. It would be interesting to study the price of quality there, and compare it to \lasso recovery in the agnostic setting on one hand, and to the price of quality of information-theoretic recovery on the other.

\subsubsection*{Acknowledgments}
This work was supported by the National Science  Foundation (NSF) under grant CISE-2233897. Youssef Chaabouni thanks Mehdi Makni, Marouane Nejjar, Panos Tsimpos, Malo Lahogue, and Alexandre Misrahi for insightful discussions and valuable feedback.


\bibliography{iclr2026_conference}
\bibliographystyle{iclr2026_conference}

\newpage

\appendix

\section{Proof of Theorem \ref{theorem:sufficientCondition,AgnosticSetting}}
\label{proofOftheorem:sufficientCondition,AgnosticSetting}
\begin{proof}[Proof of Theorem \ref{theorem:sufficientCondition,AgnosticSetting}]
We denote by $S^\star \coloneqq \support{\beta^\star}$. Let $\calS_{p,s} \coloneqq \ac{S \subset [p] \,\colon\, \abs{S} = s}$. We define the function:
\begin{align*}
    L \, \colon \,
    & \calS_{p,s} \longrightarrow \R_{\geq 0}\\
    & S \longmapsto \norm{Y-X\indic_S}_2^2,
\end{align*}
where $\indic_S$ denotes the vector in $\ac{0,1}^p$ such that $\br{\indic_{S}}_{j} = \indic\pr{j \in S}$ for all $j \in [p]$. In particular, note from (\ref{equation:estimatorAgnosticSetting}) that:
$$
\hat{\beta} = \indic_{\hat{S}}, \quad \text{where} \quad \hat{S} \in \argmin_{S \in \calS_{p,s}} L\pr{S}.
$$
For every $S \in \calS_{p,s}$, we define: $M\pr{S} \coloneqq \abs{\symdiff{S}{S^\star}}/2$, and let $U\pr{S} \coloneqq S^\star \setminus S$, $V\pr{S} \coloneqq S \setminus S^\star$. Note that, since $\abs{S} = \abs{S^\star} = s$, we have $\abs{U\pr{S}} = \abs{V\pr{S}} = M\pr{S}$. We also define:
\begin{align*}
    \Delta \, \colon \,
    & \calS_{p,s} \longrightarrow \R\\
    & S \longmapsto L\pr{S} - L\pr{S^\star}.
\end{align*}
\begin{proposition}
\label{proposition:SufficientConditionAgnosticSettingChernoffBound}
    For any $S \in \calS_{p,s}$: if $M\pr{S} \geq \delta s$, then:
    $$
    \Pro\pr{\Delta\pr{S} \leq 0}
    \leq
    \pr{1 + \frac{\delta \pr{2 \sigma_2^2 - \sigma_1^2} s}{2 \sigma_2^2}}^{-n_1/2}
    \pr{1 +\frac{\delta s}{2 \sigma_2^2}}^{-n_2/2}.
    $$
\end{proposition}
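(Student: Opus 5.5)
The plan is a Chernoff bound on $\Delta(S)$ with an explicit, non-optimized choice of the Chernoff parameter. Fix $S$ with $M \coloneqq M(S) \geq \delta s$ and write $v \coloneqq \beta^\star - \indic_S = \indic_{U(S)} - \indic_{V(S)}$. By (\ref{equation:proofSketchOfTheorem1ErrorDelta}),
$$
\Delta(S) = \sum_{i=1}^n \ac{A_i^2 + 2\sigma_{(i)} W_i A_i},
\qquad A_i \coloneqq \inner{X_i}{v},
$$
where $\sigma_{(i)} = \sigma_1$ for $i \leq n_1$ and $\sigma_{(i)} = \sigma_2$ otherwise. Since $v$ has exactly $2M$ entries equal to $\pm 1$ and the rows of $X$ are i.i.d.\ standard Gaussian, the $A_i$ are i.i.d.\ $\calN(0, 2M)$ and are independent of $W$. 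Hence the $n$ summands are independent.

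For any $t > 0$, Markov's inequality gives $\Pro(\Delta(S) \leq 0) \leq \prod_{i} \E \exp\{-t(A_i^2 + 2\sigma_{(i)} W_i A_i)\}$. I will compute each factor by conditioning on $A_i$ and integrating out $W_i$ first. This gives
$$
\E\br{e^{-2t\sigma A W} \mid A} = e^{2t^2\sigma^2 A^2},
$$
so the factor equals $\E \exp\{-(t - 2t^2\sigma^2) A^2\}$. With $A \sim \calN(0,2M)$ this is the Gaussian chi-square MGF
$$
\pr{1 + 4M(t - 2\sigma^2 t^2)}^{-1/2},
$$
valid whenever $1 + 4M(t - 2\sigma^2 t^2) > 0$. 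Therefore
$$
\Pro(\Delta(S)\le 0) \le \pr{1 + 4M(t - 2\sigma_1^2 t^2)}^{-n_1/2}\pr{1 + 4M(t - 2\sigma_2^2 t^2)}^{-n_2/2}.
$$

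The main obstacle is choosing $t$. Optimizing this product exactly leads to the cubic (\ref{equation:thirdDegreePolynomial}) mentioned in the Limitations remark, with no tractable closed form. Instead I will take the choice $t = 1/(4\sigma_2^2)$, which optimizes the low-quality block alone. With this $t$:
\begin{itemize}[leftmargin=*]
\item In the second block, $t - 2\sigma_2^2 t^2 = 1/(8\sigma_2^2)$, so the base is $1 + M/(2\sigma_2^2)$.
\item In the first block, $t - 2\sigma_1^2 t^2 = (2\sigma_2^2 - \sigma_1^2)/(8\sigma_2^4)$, which is positive because $\sigma_1^2 < \sigma_2^2$. The base is then $1 + M(2\sigma_2^2 - \sigma_1^2)/(2\sigma_2^4)$.
\end{itemize}
Both bases exceed $1$, so the MGF validity condition holds automatically.

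Finally, both bases are increasing in $M$, so $M \geq \delta s$ lets me replace $M$ by $\delta s$ and the bound only weakens. This gives the claimed product. The first factor's denominator comes out as $2\sigma_2^4$, consistent with condition (\ref{equation:conditionTheoremSufficientCondition,AgnosticSetting}) of Theorem \ref{theorem:sufficientCondition,AgnosticSetting}, whereas the displayed statement writes $2\sigma_2^2$. I would read that as a typo, and check whether the theorem's union-bound step actually uses the $\sigma_2^4$ form.
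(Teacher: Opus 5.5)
Your proposal is correct and follows the paper's proof step for step: a Chernoff bound that factorizes over the two independent blocks, integrating out the Gaussian noise conditionally on $\inner{X_i}{\beta^\star - \indic_S} \sim \calN(0, 2M(S))$ to get a $\chi^2(1)$ MGF, the non-optimized choice $\theta = 1/(4\sigma_2^2)$ in place of solving the cubic, and finally $2M(S) \geq 2\delta s$. You are also right that the $2\sigma_2^2$ in the first denominator is a typo: the paper's own computation ends with $2\sigma_2^4$, which is what condition (\ref{equation:conditionTheoremSufficientCondition,AgnosticSetting}) uses, and the literal $2\sigma_2^2$ version is not scale-invariant and cannot hold when both noise levels are large (then $\Pro(\Delta(S)\le 0)\to 1/2$).
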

\begin{proof}
    See section \ref{proofOfproposition:SufficientConditionAgnosticSettingChernoffBound}.
\end{proof}
Hence we have, for any support $S \in \calS_{p,s}$ such that $\abs{\symdiff{S}{S^\star}} \geq 2 \delta s$:
\begin{equation}\label{equation:highDeviationBoundProofOfTheoremSufficientConditionInfoTheoreticAgnostic}
\Pro\prBig{
\norm{Y - X\indic_S}_2^2 \leq \norm{Y - X\indic_{S^\star}}_2^2
}
\leq
\pr{1 + \frac{\delta \pr{2 \sigma_2^2 - \sigma_1^2} s}{2 \sigma_2^2}}^{-n_1/2}
\pr{1 +\frac{\delta s}{2 \sigma_2^2}}^{-n_2/2}
\end{equation}
Using (\ref{equation:highDeviationBoundProofOfTheoremSufficientConditionInfoTheoreticAgnostic}) and a union bound over $\ac{S \in \calS_{p,s} \, \colon \, \abs{\symdiff{S}{S^\star}} \geq 2 \delta s}$ we have:
\begin{align*}
&\Pro_{X,Z}\prBig{\abs{\symdiff{\support{\hat{\beta}}}{\support{\beta^\star}}} < 2 \delta s}\\
&\quad \geq
\Pro_{X,Z}\prBig{\norm{Y - X\indic_S}_2^2 > \norm{Y - X\indic_{S^\star}}_2^2, \, \forall \, S \in \calS_{p,s} \,\colon\, \abs{\symdiff{S}{S^\star}} \geq 2 \delta s}\\
&\quad =
1 - \Pro_{X,Z}\prBig{\exists \, S \in \calS_{p,s} \colon \abs{\symdiff{S}{S^\star}} \geq 2 \delta s\,, \, \norm{Y - X\indic_S}_2^2 \leq \norm{Y - X\indic_{S^\star}}_2^2}\\
&\quad \stackrel{\text{U.B.}}{\geq}
1 - \sum_{S \in \calS_{p,s} \, \colon \, \abs{\symdiff{S}{S^\star}} \geq 2 \delta s} \Pro_{X,Z}\prBig{\norm{Y - X\indic_S}_2^2 \leq \norm{Y - X\indic_{S^\star}}_2^2}\\
&\quad \stackrel{(\ref{equation:highDeviationBoundProofOfTheoremSufficientConditionInfoTheoreticAgnostic})}{\geq}
1 - 
\sum_{S \in \calS_{p,s} \, \colon \, \abs{\symdiff{S}{S^\star}} \geq 2 \delta s}
\pr{1 + \frac{\delta \pr{2 \sigma_2^2 - \sigma_1^2} s}{2 \sigma_2^2}}^{-n_1/2}
\pr{1 +\frac{\delta s}{2 \sigma_2^2}}^{-n_2/2}\\
&\quad \geq
1 - 
\abs{\calS_{p,s}}
\pr{1 + \frac{\delta \pr{2 \sigma_2^2 - \sigma_1^2} s}{2 \sigma_2^2}}^{-n_1/2}
\pr{1 +\frac{\delta s}{2 \sigma_2^2}}^{-n_2/2}\\
&\quad =
1 - 
\binom{p}{s}
\pr{1 + \frac{\delta \pr{2 \sigma_2^2 - \sigma_1^2} s}{2 \sigma_2^2}}^{-n_1/2}
\pr{1 +\frac{\delta s}{2 \sigma_2^2}}^{-n_2/2}.
\end{align*}
\textbf{Case 1: Assume $s = o\pr{p}$.}
We use the corollary of Stirling:
$$
\binom{p}{s} = \exp\prBig{s \log\pr{p/s} \prbig{1 + o\pr{1}}},
$$
which yields:
\begin{align*}
&\Pro_{X,Z}\prBig{\abs{\symdiff{\support{\hat{\beta}}}{\support{\beta^\star}}} < 2 \delta s}\\
&\quad \geq 
1 - \exp\ac{s \log\pr{p/s} \prbig{1 + o\pr{1}} - \frac{n_1}{2} \log\pr{1 + \frac{\delta \pr{2 \sigma_2^2 - \sigma_1^2} s}{2 \sigma_2^2}} - \frac{n_2}{2} \log\pr{1 +\frac{\delta s}{2 \sigma_2^2}}}.
\end{align*}
Now using (\ref{equation:conditionTheoremSufficientCondition,AgnosticSetting}) in above, we have:
\begin{align*}
&\Pro_{X,Z}\prBig{\abs{\symdiff{\support{\hat{\beta}}}{\support{\beta^\star}}} < 2 \delta s}\\
&\quad \geq 
1 - \exp\ac{s \log\pr{p/s} \prbig{1 + o\pr{1}} - \pr{1 + \varepsilon} s \log\pr{p/s}}\\
&\quad \geq 
1 - \exp\ac{ - \varepsilon s \log\pr{p/s} - o\pr{s \log\pr{p/s}}}.
\end{align*}
Finally we conclude:
$$
\Pro_{X,Z}\prBig{\abs{\symdiff{\support{\hat{\beta}}}{\support{\beta^\star}}} < 2 \delta s}
\geq 
1 - \exp\ac{ - \pr{\varepsilon + o\pr{1}} n^\star / 2}
\stackrel{p \rightarrow +\infty}{\longrightarrow} 1,
$$
where $n^\star = 2 s \log\pr{p/s}$.

\textbf{Case 2: Assume $s = \alpha \pr{p}$ for some constant $\alpha \in \pr{0,1}$.}
We use the corollary of Stirling:
$$
\binom{p}{s} = \exp\prBig{h\pr{\alpha} p \prbig{1 + o\pr{1}}},
$$
which yields:
\begin{align*}
&\Pro_{X,Z}\prBig{\abs{\symdiff{\support{\hat{\beta}}}{\support{\beta^\star}}} < 2 \delta s}\\
&\quad \geq 
1 - \exp\ac{h\pr{\alpha} p \prbig{1 + o\pr{1}} - \frac{n_1}{2} \log\pr{1 + \frac{\delta \pr{2 \sigma_2^2 - \sigma_1^2} s}{2 \sigma_2^2}} - \frac{n_2}{2} \log\pr{1 +\frac{\delta s}{2 \sigma_2^2}}}.
\end{align*}
Now using (\ref{equation:conditionTheoremSufficientCondition,AgnosticSetting}) in above, we have:
\begin{align*}
&\Pro_{X,Z}\prBig{\abs{\symdiff{\support{\hat{\beta}}}{\support{\beta^\star}}} < 2 \delta s}\\
&\quad \geq 
1 - \exp\ac{h\pr{\alpha} p \prbig{1 + o\pr{1}} - \pr{1 + \varepsilon} h\pr{\alpha} p}\\
&\quad \geq 
1 - \exp\ac{ - \varepsilon h\pr{\alpha} p - o\pr{p}}.
\end{align*}
Finally we conclude:
$$
\Pro_{X,Z}\prBig{\abs{\symdiff{\support{\hat{\beta}}}{\support{\beta^\star}}} < 2 \delta s}
\geq 
1 - \exp\ac{ - \pr{\varepsilon + o\pr{1}} n^\star / 2}
\stackrel{p \rightarrow +\infty}{\longrightarrow} 1,
$$
where $n^\star = 2 h\pr{\alpha} p$.
\end{proof}

\subsection{Proof of Proposition \ref{proposition:SufficientConditionAgnosticSettingChernoffBound}}
\label{proofOfproposition:SufficientConditionAgnosticSettingChernoffBound}
\begin{proof}[Proof of Proposition \ref{proposition:SufficientConditionAgnosticSettingChernoffBound}]
Fix $S \in \calS_{p,s}$ such that $M\pr{S} \geq \delta s$. We have:
\begin{align*}
    \Delta\pr{S}
    &=
    L\pr{S} - L\pr{S^\star}\\
    &=
    \norm{Y - X \indic_S}_2^2 - \norm{Y - X \indic_{S^\star}}_2^2\\
    &=
    \norm{X \beta^\star + Z - X \indic_S}_2^2 - \norm{X \beta^\star + Z - X \indic_{S^\star}}_2^2\\
    &=
    \norm{X \pr{\indic_{S^\star} - \indic_S}}_2^2 + 2 \inner{Z}{X \pr{\indic_{S^\star} - \indic_S}}\\
    &=
    \sum_{i=1}^{n} \inner{X_i}{\indic_{S^\star} - \indic_S}^2 + 2 \sum_{i=1}^{n} Z_i \inner{X_i}{\indic_{S^\star} - \indic_S}.
\end{align*}
Let $X_1 \in \R^{n_1 \times p}$, $X_2 \in \R^{n_2 \times p}$ such that:
$$
X = \begin{pmatrix}
    X_1\\
    X_2
\end{pmatrix}.
$$
Then the above expression of $\Delta\pr{S}$ writes:
\begin{align*}
\Delta\pr{s}
&=
\sum_{i=1}^{n_1} \acBig{\inner{X^1_i}{\indic_{S^\star} - \indic_S}^2 + 2 Z^1_i \inner{X^1_i}{\indic_{S^\star} - \indic_S}}
\\
&\quad + \sum_{i=1}^{n_2} \acBig{\inner{X^2_i}{\indic_{S^\star} - \indic_S}^2 + 2 Z^2_i \inner{X^2_i}{\indic_{S^\star} - \indic_S}}.
\end{align*}
We denote by $\pr{\Delta_i^1}_{i \in [n_1]}$ and $\pr{\Delta_i^2}_{i \in [n_2]}$ the terms of the sums above, that is:
$$
\begin{cases}
    \Delta_i^1 \coloneqq \inner{X^1_i}{\indic_{S^\star} - \indic_S}^2 + 2 Z^1_i \inner{X^1_i}{\indic_{S^\star} - \indic_S}\\
    \Delta_i^2 \coloneqq \inner{X^2_i}{\indic_{S^\star} - \indic_S}^2 + 2 Z^2_i \inner{X^2_i}{\indic_{S^\star} - \indic_S}
\end{cases}.
$$
Note that each of the elements of each of $\ac{\Delta_i^1 \, \colon \, i \in [n_1]}$ and $\ac{\Delta_i^2 \, \colon \, i \in [n_2]}$ are i.i.d. and:
$$
\Delta\pr{S} = \sum_{i=1}^{n_1} \Delta_i^1 + \sum_{i=1}^{n_2} \Delta_i^2.
$$
Recalling the Chernoff bound:
\begin{align*}
\Pro\pr{\Delta\pr{S} \leq 0}
&= \Pro\pr{-\Delta\pr{S} \geq 0}\\
&= \inf_{\theta \geq 0} \Pro\pr{e^{- \theta \Delta\pr{S}} \geq 1}\\
\text{(Markov's inequality)} \quad
&\leq \inf_{\theta \geq 0} \E\br{e^{- \theta \Delta\pr{S}}}\\
&= \inf_{\theta \geq 0} \E\br{e^{- \sum_{i=1}^{n_1} \theta \Delta_i^1 + \sum_{i=1}^{n_2} \theta \Delta_i^2}}\\
&\stackrel{\text{ind.}}{=} \inf_{\theta \geq 0} \; \prod_{i=1}^{n_1} \E\br{e^{- \theta \Delta_i^1}} \prod_{i=1}^{n_2} \E\br{e^{- \theta \Delta_i^2}}\\
&= \inf_{\theta \geq 0} \; \prod_{i=1}^{n_1} M_{-\Delta_i^1}\pr{\theta} \prod_{i=1}^{n_2} M_{-\Delta_i^2}\pr{\theta}\\
&\stackrel{\text{i.d.}}{=} \inf_{\theta \geq 0} \; \ac{M_{-\Delta_i^1}\pr{\theta}}^{n_1} \ac{M_{-\Delta_i^2}\pr{\theta}}^{n_2}.
\end{align*}
Therefore:
\begin{equation}\label{equation:SufficientConditionAgnosticCaseChernoffBound}
\Pro\pr{\Delta\pr{S} \leq 0}
\leq
\inf_{\theta \geq 0} \; \ac{M_{-\Delta_i^1}\pr{\theta}}^{n_1} \ac{M_{-\Delta_i^2}\pr{\theta}}^{n_2}.
\end{equation}
Now we have, for any $\theta \geq 0$:
\begin{align*}
M_{-\Delta^1_i}(\theta)
&=
\E_{X^1_i, Z^1_i}\br{e^{-\theta \br{
\inner{X^1_i}{\beta^\star - \beta\pr{S}}^2
+ 2 Z^1_i \inner{X^1_i}{\beta^\star - \beta\pr{S}}
}}}\\
&=
\E_{X^1_i}\br{e^{- \theta \inner{X^1_i}{\beta^\star - \beta\pr{S}}^2} \E_{Z^1_i}\br{e^{-2\theta Z^1_i \inner{X^1_i}{\beta^\star - \beta\pr{S}}} \big| X^1_i}}\\
&=
\E_{X^1_i}\br{e^{- \theta \inner{X^1_i}{\beta^\star - \beta\pr{S}}^2} M_{Z^1_i | X^1_i}\pr{-2\theta \inner{X^1_i}{\beta^\star - \beta\pr{S}}}}\\
&=
\E_{X^1_i}\br{e^{- \theta \inner{X^1_i}{\beta^\star - \beta\pr{S}}^2} e^{\frac{1}{2} \pr{-2 \theta \inner{X^1_i}{\beta^\star - \beta\pr{S}}}^2 \sigma_1^2}}\\
&=
\E_{X_i}\br{e^{\pr{- \theta + 2 \theta^2 \sigma_1^2}\inner{X^1_i}{\beta^\star - \beta\pr{S}}^2}}\\
&=
\E_{X_i}\br{e^{\pr{- \theta + 2 \theta^2 \sigma_1^2} \pr{\sum_{j \in U} X^1_{ij} - \sum_{j \in V} X^1_{ij}}^2}},
\end{align*}
where we write $U$ and $V$ for $U\pr{S}$ and $V\pr{S}$, respectively, for simplicity. We know that:
$$
\sum_{j \in U} X^1_{ij} - \sum_{j \in V} X^1_{ij}
\stackrel{d}{=}
\sum_{j \in U \cup V} X^1_{ij}
\sim
\calN\pr{0, \abs{U \cup V}}.
$$
Hence:
\begin{align*}
M_{-\Delta^1_i}(\theta)
&=
\E_{X_i}\br{e^{\pr{- \theta + 2 \theta^2 \sigma_1^2} \pr{\sum_{j \in U} X^1_{ij} - \sum_{j \in V} X^1_{ij}}^2}}\\
&=
\E_{X_i}\br{e^{\abs{U \cup V} \pr{- \theta + 2 \theta^2 \sigma_1^2} \Gamma}},
\end{align*}
where:
$$
\Gamma
= \pr{\frac{\sum_{j \in U} X^1_{ij} - \sum_{j \in V} X^1_{ij}}{\sqrt{\abs{U \cup V}}}}^2
\sim
\chi^2\pr{1}.
$$
Therefore:
\begin{equation}\label{equation:expressionOfMGFOfDelta1AgnosticCase}
M_{-\Delta^1_i}(\theta)
=
\begin{cases}
    \frac{1}{\sqrt{1-2\abs{U \cup V} \pr{- \theta + 2 \theta^2 \sigma_1^2}}} & \text{if } \abs{U \cup V} \pr{- \theta + 2 \theta^2 \sigma_1^2} < 1/2\\
    + \infty & \text{else.}
\end{cases}
\end{equation}
Similarly to (\ref{equation:expressionOfMGFOfDelta1AgnosticCase}), we obtain the following expression for $M_{-\Delta^2_i}(\theta)$:
\begin{equation}\label{equation:expressionOfMGFOfDelta2AgnosticCase}
M_{-\Delta^2_i}(\theta)
=
\begin{cases}
    \frac{1}{\sqrt{1-2\abs{U \cup V} \pr{- \theta + 2 \theta^2 \sigma_2^2}}} & \text{if } \abs{U \cup V} \pr{- \theta + 2 \theta^2 \sigma_2^2} < 1/2\\
    + \infty & \text{else.}
\end{cases}
\end{equation}
Therefore, for any $\theta \geq 0$:
$$
M_{-\Delta^1_i}(\theta)^{n_1} M_{-\Delta^2_i}(\theta)^{n_2}
=
\begin{cases}
    \prBig{1-2\abs{U \cup V} \pr{- \theta + 2 \theta^2 \sigma_1^2}}^{-n_1/2}\\
    \quad \times \prBig{1-2\abs{U \cup V} \pr{- \theta + 2 \theta^2 \sigma_2^2}}^{-n_2/2}\\
    \qquad \qquad \text{if }
    \begin{cases}
        \abs{U \cup V} \pr{- \theta + 2 \theta^2 \sigma_1^2} < 1/2\\
        \abs{U \cup V} \pr{- \theta + 2 \theta^2 \sigma_2^2} < 1/2
    \end{cases}\\
    + \infty \quad \quad \text{else.}
\end{cases}
$$
\begin{remark}[Best Chernoff bound]\label{remark:BestChernoffBound}
    To find the best Chernoff bound (\ref{equation:SufficientConditionAgnosticCaseChernoffBound}), we need to solve the optimization problem in (\ref{equation:SufficientConditionAgnosticCaseChernoffBound}), defined by:
    \begin{equation}\label{equation:bestChernoffBoundMinimization}
    \inf_{\theta \geq 0} \; \ac{M_{-\Delta_i^1}\pr{\theta}}^{n_1} \ac{M_{-\Delta_i^2}\pr{\theta}}^{n_2}.
    \end{equation}
    Using the First Order Optimality Condition, (\ref{equation:bestChernoffBoundMinimization}) reduces to finding the roots of $\xi'\pr{\theta} = 0$ is closed form, where $\xi\pr{\cdot}$ is defined by:
    $$
    \xi\pr{\theta} \coloneqq 
    M_{-\Delta^1_i}(\theta)^{n_1} M_{-\Delta^2_i}(\theta)^{n_2}.
    $$
    Using the closed-form solution of $\xi\pr{\cdot}$ obtained above, we conclude that solving \ref{equation:bestChernoffBoundMinimization} reduces to finding the positive roots of the following third-degree polynomial in $\theta$:
    \begin{equation}\label{equation:thirdDegreePolynomial}
    n_1 \pr{4 \sigma_1^2 \theta - 1} \pr{1-2\abs{U \cup V} \pr{- \theta + 2 \theta^2 \sigma_2^2}}wh
    +
    n_2 \pr{4 \sigma_2^2 \theta - 1} \pr{1-2\abs{U \cup V} \pr{- \theta + 2 \theta^2 \sigma_1^2}}.
    \end{equation}
    To the best of our knowledge, this doesn't lead to any ``reasonable" closed-form expression for the minimizer $\theta^\star_{\text{min}}$. Instead, we note that:
    $$
    \ac{\theta \colon \abs{U \cup V} \pr{- \theta + 2 \theta^2 \sigma_2^2} < 1/2}
    \subseteq
    \ac{\theta \colon \abs{U \cup V} \pr{- \theta + 2 \theta^2 \sigma_1^2} < 1/2},
    $$
    and choose $\theta$ to the middle of the LHS interval.
\end{remark}
In particular, setting $\theta^\star \coloneqq \frac{1}{4\sigma_2^2}$, we have:
$$
\abs{U \cup V} \pr{- \theta^\star + 2 {\theta^\star}^2 \sigma_2^2}
= - \theta^\star \abs{U \cup V} \pr{- 1 + 2 \theta^\star \sigma_2^2}
= - \theta^\star \abs{U \cup V} / 2
< 0 < 1/2,
$$
and:
$$
\abs{U \cup V} \pr{- \theta^\star + 2 {\theta^\star}^2 \sigma_1^2}
= - \theta^\star \abs{U \cup V} \pr{- 1 + \frac{\sigma_1^2}{2 \sigma_2^2}}
< 0 < 1/2 \quad (\text{since } \sigma_1^2 < \sigma_2^2).
$$
Therefore:
\begin{align*}
&M_{-\Delta^1_i}(\theta^\star)^{n_1} M_{-\Delta^2_i}(\theta^\star)^{n_2}\\
&\quad =
\prBig{1-2\abs{U \cup V} \pr{- {\theta^\star} + 2 {\theta^\star}^2 \sigma_1^2}}^{-n_1/2} \prBig{1-2\abs{U \cup V} \pr{- {\theta^\star} + 2 {\theta^\star}^2 \sigma_2^2}}^{-n_2/2}\\
&\quad =
\pr{1-2\abs{U \cup V} \pr{- \frac{1}{4 \sigma_2^2} + 2 \pr{\frac{1}{4 \sigma_2^2}}^2 \sigma_1^2}}^{-n_1/2}\\
&\qquad \qquad \times \pr{1-2\abs{U \cup V} \pr{- \frac{1}{4 \sigma_2^2} + 2 \pr{\frac{1}{4 \sigma_2^2}}^2 \sigma_2^2}}^{-n_2/2}\\
&\quad =
\pr{1 + \abs{U \cup V} \pr{\frac{2 \sigma_2^2 - \sigma_1^2}{4 \sigma_2^4}}}^{-n_1/2} \pr{1 + \frac{\abs{U \cup V}}{4 \sigma_2^2}}^{-n_2/2}\\
&\quad \leq 
\pr{1 + \frac{\delta \pr{2 \sigma_2^2 - \sigma_1^2} s}{2 \sigma_2^4}}^{-n_1/2} \pr{1 + \frac{\delta s}{2 \sigma_2^2}}^{-n_2/2},
\end{align*}
where the last inequality holds because $\abs{U \cup V} = 2 M\pr{S} \geq 2 \delta s$. Finally, using this in (\ref{equation:SufficientConditionAgnosticCaseChernoffBound}) we conclude:
$$
\Pro\pr{\Delta\pr{S} \leq 0}
\leq
\pr{1 + \frac{\delta \pr{2 \sigma_2^2 - \sigma_1^2} s}{2 \sigma_2^4}}^{-n_1/2} \pr{1 + \frac{\delta s}{2 \sigma_2^2}}^{-n_2/2}.
$$
\end{proof}

\section{Proof of (\ref{equation:expressionOfMLEInformedSettingInfoTheoretic})}
\label{section:proofOfequation:expressionOfMLEInformedSettingInfoTheoretic}
\begin{proof}[Proof of (\ref{equation:expressionOfMLEInformedSettingInfoTheoretic})]
Let $\beta \in \calB_{p,s}$. We know from (\ref{equation:modelIntroduction}) that:
$$
Y \, | \, X, \beta
\sim
\calN\pr{X\beta, \Sigma^2}.
$$
Its pdf writes:
$$
f_{Y \, | \, X,\beta}\pr{y}
=
\pr{2\pi}^{-n/2} \det\pr{\Sigma}^{-1} \exp\ac{-\pr{y-X\beta}^T \Sigma^{-2} \pr{y-X\beta}}.
$$
The MLE is defined as:
\begin{align*}
\hat{\beta}_{\text{MLE}}
&=
\argmax_{\beta \in \calB_{p,s}} f_{Y \, | \, X,\beta}\pr{Y}\\
&=
\argmin_{\beta \in \calB_{p,s}} \pr{Y-X\beta}^T \Sigma^{-2} \pr{Y-X\beta}\\
&=
\argmin_{\beta \in \calB_{p,s}} \pr{Y-X\beta}^T \pr{\Sigma^{-1}}^T \Sigma^{-1} \pr{Y-X\beta}\\
&=
\argmin_{\beta \in \calB_{p,s}} \norm{\Sigma^{-1} \pr{Y - X \beta}}_2^2.
\end{align*}
\end{proof}

\section{Proof of Theorem \ref{theorem:sufficientCondition,InformedSetting}}\label{proofOftheorem:sufficientCondition,InformedSetting}
\begin{proof}[Proof of Theorem \ref{theorem:sufficientCondition,InformedSetting}]
    We denote by $S^\star \coloneqq \support{\beta^\star}$. Let $\calS_{p,s} \coloneqq \ac{S \subset [p] \colon \abs{S} = s}$. We define the $\Sigma$-rescaled loss:
    \begin{align*}
        L_{\Sigma} \colon
        & \calS_{p,s} \longrightarrow \R_{\geq 0}\\
        & S \longmapsto \norm{\Sigma^{-1}\pr{Y - X \indic_S}}_2^2,
    \end{align*}
    where $\indic_S$ denote the vector in $\ac{0,1}^p$ such that $\br{\indic_S}_j = \indic\pr{j \in S}$ for all $j \in [p]$. In particular, note from (\ref{equation:expressionOfMLEInformedSettingInfoTheoretic}) that:
    $$
    \hat{\beta}_{\text{MLE}} = \indic_{\hat{S}_{\text{MLE}}},
    \;\;
    \text{where}
    \;\;
    \hat{S}_{\text{MLE}} \coloneqq \argmin_{S \in \calS_{p,s}} L_{\Sigma}\pr{S}.
    $$
    For every $S \in \calS_{p,s}$, we define: $M\pr{S} \coloneqq \abs{\symdiff{S}{S^\star}} / 2$, and let $U\pr{S} \coloneqq S^\star \setminus S$, $V\pr{S} \coloneqq S \setminus S^\star$. Note that, since $\abs{S} = \abs{S^\star} = s$, we have $\abs{U\pr{S}} = \abs{V\pr{S}} = M\pr{S}$. We also define:
\begin{align*}
    \Delta \, \colon \,
    & \calS_{p,s} \longrightarrow \R\\
    & S \longmapsto L_{\Sigma}\pr{S} - L_{\Sigma}\pr{S^\star}.
\end{align*}
\begin{proposition}
\label{proposition:SufficientConditionInformedSettingChernoffBound}
    For any $S \in \calS_{p,s}$: if $M\pr{S} \geq \delta s$, then:
    $$
    \Pro\pr{\Delta\pr{S} \leq 0}
    \leq
    \pr{1 + \frac{\delta s}{2 \sigma_1^2}}^{-n_1/2}
    \pr{1 +\frac{\delta s}{2 \sigma_2^2}}^{-n_2/2}.
    $$
\end{proposition}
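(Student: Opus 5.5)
The plan is to mirror the proof of Proposition \ref{proposition:SufficientConditionAgnosticSettingChernoffBound}: write $\Delta(S)$ as a sum of independent terms, apply a Chernoff bound, and compute the moment generating functions exactly. The difference is that the weighting by $\Sigma^{-1}$ makes the optimal Chernoff parameter the same for both blocks.

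\textbf{Step 1: decomposition.} Set $d \coloneqq \indic_{S^\star} - \indic_S$ and use $Y = X\indic_{S^\star} + Z$. Expanding the rescaled loss gives
\begin{equation*}
\Delta(S) = \sum_{i=1}^{n_1} \frac{1}{\sigma_1^2}\acBig{\inner{X^1_i}{d}^2 + 2 Z^1_i \inner{X^1_i}{d}} + \sum_{i=1}^{n_2} \frac{1}{\sigma_2^2}\acBig{\inner{X^2_i}{d}^2 + 2 Z^2_i \inner{X^2_i}{d}}.
\end{equation*}
The summands are independent, and they are identically distributed within each block. As in (\ref{equation:SufficientConditionAgnosticCaseChernoffBound}), Markov's inequality then gives
\begin{equation*}
\Pro(\Delta(S)\le 0)\le \inf_{\theta\ge 0} \{M_{-\Delta^1_i}(\theta)\}^{n_1}\{M_{-\Delta^2_i}(\theta)\}^{n_2}.
\end{equation*}

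\textbf{Step 2: the moment generating functions.} Fix a block $k\in\{1,2\}$ and write $a \coloneqq \inner{X^k_i}{d}$.
\begin{itemize}[leftmargin=*]
\item Conditioning on $X^k_i$ and using the Gaussian moment generating function of $Z^k_i \sim \calN(0,\sigma_k^2)$, we get $\E_Z[e^{-2\theta Z a/\sigma_k^2}] = e^{2\theta^2 a^2/\sigma_k^2}$.
\item Hence $M_{-\Delta^k_i}(\theta) = \E\big[e^{(2\theta^2-\theta)a^2/\sigma_k^2}\big]$.
\item As in the agnostic proof, $a^2 = \abs{U\cup V}\,\Gamma$ with $\Gamma\sim\chi^2(1)$.
\item Therefore $M_{-\Delta^k_i}(\theta) = \big(1 - 2\abs{U\cup V}(2\theta^2-\theta)/\sigma_k^2\big)^{-1/2}$ whenever the bracket is positive.
\end{itemize}
The point is that the rescaling by $1/\sigma_k^2$ makes the $\theta$-dependence identical across the two blocks: it is always through the common factor $2\theta^2 - \theta$.

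\textbf{Step 3: choice of $\theta$ and conclusion.} The quantity $2\theta^2-\theta$ is minimized at $\theta^\star = 1/4$, where it equals $-1/8$. So $\theta^\star$ simultaneously minimizes every factor, and the Chernoff bound is optimized exactly rather than relaxed. At $\theta^\star$ both brackets are at least $1$, so the moment generating functions are finite, and we obtain
\begin{equation*}
\Pro(\Delta(S)\le 0)\le \pr{1 + \frac{\abs{U\cup V}}{4\sigma_1^2}}^{-n_1/2}\pr{1 + \frac{\abs{U\cup V}}{4\sigma_2^2}}^{-n_2/2}.
\end{equation*}
Finally, $\abs{U\cup V} = 2M(S) \ge 2\delta s$, and each factor is decreasing in $\abs{U\cup V}$. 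This yields the claimed bound $\pr{1+\delta s/(2\sigma_1^2)}^{-n_1/2}\pr{1+\delta s/(2\sigma_2^2)}^{-n_2/2}$.

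I do not expect a genuine obstacle. The only points needing care are the sign and scaling bookkeeping in the conditional Gaussian moment generating function, and checking that $\theta^\star$ lies in the finiteness domain for both blocks, which holds because the exponent coefficient is negative there.
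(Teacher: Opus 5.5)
Your proof is correct and follows the paper's argument essentially step for step: the same decomposition into independent weighted row terms, the Chernoff/Markov bound, the exact $\chi^2(1)$ moment generating function in which both blocks depend on $\theta$ only through $2\theta^2-\theta$, the choice $\theta^\star = 1/4$ that minimizes both blocks at once, and the final monotonicity step using $\abs{U\cup V} = 2M(S) \geq 2\delta s$. The only difference is cosmetic: you fold the $1/\sigma_k^2$ weights into the summands, whereas the paper keeps them outside and evaluates the unweighted moment generating functions at $\theta/\sigma_k^2$.
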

\begin{proof}
    See section \ref{proofOfproposition:SufficientConditionInformedSettingChernoffBound}.
\end{proof}
Hence we have, for any support $S \in \calS_{p,s}$ such that $\abs{\symdiff{S}{S^\star}} \geq 2 \delta s$:
\begin{equation}\label{equation:highDeviationBoundProofOfTheoremSufficientConditionInfoTheoreticInformed}
\Pro\prBig{
\norm{\Sigma^{-1}\pr{Y - X\indic_S}}_2^2 \leq \norm{\Sigma^{-1}\pr{Y - X\indic_{S^\star}}}_2^2
}
\leq
\pr{1 + \frac{\delta s}{2 \sigma_1^2}}^{-n_1/2}
\pr{1 + \frac{\delta s}{2 \sigma_2^2}}^{-n_2/2}.
\end{equation}
Using (\ref{equation:highDeviationBoundProofOfTheoremSufficientConditionInfoTheoreticInformed}) and a union bound over $\ac{S \in \calS_{p,s} \, \colon \, \abs{\symdiff{S}{S^\star}} \geq 2 \delta s}$ we have:
\begin{align*}
&\Pro_{X,Z}\prBig{\abs{\symdiff{\support{\hat{\beta}}}{\support{\beta^\star}}} < 2 \delta s}\\
&\quad \geq
\Pro_{X,Z}\prBig{\norm{\Sigma^{-1}\pr{Y - X\indic_S}}_2^2 > \norm{\Sigma^{-1}\pr{Y - X\indic_{S^\star}}}_2^2, \, \forall \, S \in \calS_{p,s} \,\colon\, \abs{\symdiff{S}{S^\star}} \geq 2 \delta s}\\
&\quad =
1 - \Pro_{X,Z}\prBig{\exists \, S \in \calS_{p,s} \colon \abs{\symdiff{S}{S^\star}} \geq 2 \delta s\,, \, \norm{\Sigma^{-1}\pr{Y - X\indic_S}}_2^2 \leq \norm{\Sigma^{-1}\pr{Y - X\indic_{S^\star}}}_2^2}\\
&\quad \stackrel{\text{U.B.}}{\geq}
1 - \sum_{S \in \calS_{p,s} \, \colon \, \abs{\symdiff{S}{S^\star}} \geq 2 \delta s} \Pro_{X,Z}\prBig{\norm{\Sigma^{-1}\pr{Y - X\indic_S}}_2^2 \leq \norm{\Sigma^{-1}\pr{Y - X\indic_{S^\star}}}_2^2}\\
&\quad \stackrel{(\ref{equation:highDeviationBoundProofOfTheoremSufficientConditionInfoTheoreticInformed})}{\geq}
1 - 
\sum_{S \in \calS_{p,s} \, \colon \, \abs{\symdiff{S}{S^\star}} \geq 2 \delta s}
\pr{1 + \frac{\delta s}{2 \sigma_1^2}}^{-n_1/2}
\pr{1 +\frac{\delta s}{2 \sigma_2^2}}^{-n_2/2}\\
&\quad \geq
1 - 
\abs{\calS_{p,s}}
\pr{1 + \frac{\delta s}{2 \sigma_1^2}}^{-n_1/2}
\pr{1 +\frac{\delta s}{2 \sigma_2^2}}^{-n_2/2}\\
&\quad =
1 - 
\binom{p}{s}
\pr{1 + \frac{\delta s}{2 \sigma_1^2}}^{-n_1/2}
\pr{1 +\frac{\delta s}{2 \sigma_2^2}}^{-n_2/2}.
\end{align*}
\textbf{Case 1: Assume $s = o\pr{p}$.} We use the corollary of Stirling:
$$
\binom{p}{s} = \exp\prBig{s \log\pr{p/s} \prbig{1 + o\pr{1}}},
$$
which yields:
\begin{align*}
&\Pro_{X,Z}\prBig{\abs{\symdiff{\support{\hat{\beta}}}{\support{\beta^\star}}} < 2 \delta s}\\
&\quad \geq 
1 - \exp\ac{s \log\pr{p/s} \prbig{1 + o\pr{1}} - \frac{n_1}{2} \log\pr{1 + \frac{\delta s}{2 \sigma_1^2}} - \frac{n_2}{2} \log\pr{1 +\frac{\delta s}{2 \sigma_2^2}}}.
\end{align*}
Now using (\ref{equation:conditionTheoremSufficientCondition,InformedSetting}) in above, we have:
\begin{align*}
&\Pro_{X,Z}\prBig{\abs{\symdiff{\support{\hat{\beta}}}{\support{\beta^\star}}} < 2 \delta s}\\
&\quad \geq 
1 - \exp\ac{s \log\pr{p/s} \prbig{1 + o\pr{1}} - \pr{1 + \varepsilon} s \log\pr{p/s}}\\
&\quad \geq 
1 - \exp\ac{ - \varepsilon s \log\pr{p/s} - o\pr{s \log\pr{p/s}}}.
\end{align*}
Finally we conclude:
$$
\Pro_{X,Z}\prBig{\abs{\symdiff{\support{\hat{\beta}}}{\support{\beta^\star}}} < 2 \delta s}
\geq 
1 - \exp\ac{ - \pr{\varepsilon + o\pr{1}} n^\star / 2}
\stackrel{p \rightarrow +\infty}{\longrightarrow} 1,
$$
where $n^\star = 2 s \log\pr{p/s}$.

\textbf{Case 2: Assume $s = h\pr{\alpha} p$, for some constant $\alpha \in \pr{0,1}$.} We use the corollary of Stirling:
$$
\binom{p}{s} = \exp\prBig{h\pr{\alpha} p \prbig{1 + o\pr{1}}},
$$
which yields:
\begin{align*}
&\Pro_{X,Z}\prBig{\abs{\symdiff{\support{\hat{\beta}}}{\support{\beta^\star}}} < 2 \delta s}\\
&\quad \geq 
1 - \exp\ac{h\pr{\alpha} p \prbig{1 + o\pr{1}} - \frac{n_1}{2} \log\pr{1 + \frac{\delta s}{2 \sigma_1^2}} - \frac{n_2}{2} \log\pr{1 +\frac{\delta s}{2 \sigma_2^2}}}.
\end{align*}
Now using (\ref{equation:conditionTheoremSufficientCondition,InformedSetting}) in above, we have:
\begin{align*}
&\Pro_{X,Z}\prBig{\abs{\symdiff{\support{\hat{\beta}}}{\support{\beta^\star}}} < 2 \delta s}\\
&\quad \geq 
1 - \exp\ac{h\pr{\alpha} p \prbig{1 + o\pr{1}} - \pr{1 + \varepsilon} h\pr{\alpha} p}\\
&\quad \geq 
1 - \exp\ac{ - \varepsilon h\pr{\alpha} p - o\pr{p}}.
\end{align*}
Finally we conclude:
$$
\Pro_{X,Z}\prBig{\abs{\symdiff{\support{\hat{\beta}}}{\support{\beta^\star}}} < 2 \delta s}
\geq 
1 - \exp\ac{ - \pr{\varepsilon + o\pr{1}} n^\star / 2}
\stackrel{p \rightarrow +\infty}{\longrightarrow} 1,
$$
where $n^\star = 2 h\pr{\alpha} p$.
\end{proof}

\subsection{Proof of Proposition \ref{proposition:SufficientConditionInformedSettingChernoffBound}}
\label{proofOfproposition:SufficientConditionInformedSettingChernoffBound}
\begin{proof}[Proof of Proposition \ref{proposition:SufficientConditionInformedSettingChernoffBound}]
Fix $S \in \calS_{p,s}$ such that $M\pr{S} \geq \delta s$. We have:
\begin{align*}
    \Delta\pr{S}
    &=
    L_{\Sigma}\pr{S} - L_{\Sigma}\pr{S^\star}\\
    &=
    \norm{\Sigma^{-1}\pr{Y - X \indic_S}}_2^2 - \norm{\Sigma^{-1}\pr{Y - X \indic_{S^\star}}}_2^2\\
    &=
    \norm{\Sigma^{-1}\pr{X \beta^\star + Z - X \indic_S}}_2^2 - \norm{\Sigma^{-1}\pr{X \beta^\star + Z - X \indic_{S^\star}}}_2^2\\
    &=
    \norm{\Sigma^{-1} X \pr{\indic_{S^\star} - \indic_S}}_2^2 + 2 \, \inner{\Sigma^{-1} Z}{\Sigma^{-1} X \pr{\indic_{S^\star} - \indic_S}}\\
    &=
    \sum_{i=1}^{n_1} \frac{1}{\sigma_1^2} \inner{X_i}{\indic_{S^\star} - \indic_S}^2 + \sum_{i=n_1+1}^{n_2} \frac{1}{\sigma_2^2} \inner{X_i}{\indic_{S^\star}- \indic_S}^2\\
    &\quad + 2 \sum_{i=1}^{n_1} \frac{1}{\sigma_1^2} Z_i \inner{X_i}{\indic_{S^\star} - \indic_S} + 2 \sum_{i=n_1+1}^{n_2} \frac{1}{\sigma_2^2} Z_i \inner{X_i}{\indic_{S^\star} - \indic_S}.
\end{align*}
Let $X_1 \in \R^{n_1 \times p}$, $X_2 \in \R^{n_2 \times p}$ such that:
$$
X = \begin{pmatrix}
    X_1\\
    X_2
\end{pmatrix}.
$$
Then the above expression of $\Delta\pr{S}$ writes:
\begin{align*}
\Delta\pr{s}
&=
\frac{1}{\sigma_1^2} \sum_{i=1}^{n_1} \acBig{\inner{X^1_i}{\indic_{S^\star} - \indic_S}^2 + 2 Z^1_i \inner{X^1_i}{\indic_{S^\star} - \indic_S}}
\\
&\quad + \frac{1}{\sigma_2^2} \sum_{i=1}^{n_2} \acBig{\inner{X^2_i}{\indic_{S^\star} - \indic_S}^2 + 2 Z^2_i \inner{X^2_i}{\indic_{S^\star} - \indic_S}}.
\end{align*}
We denote by $\pr{\Delta_i^1}_{i \in [n_1]}$ and $\pr{\Delta_i^2}_{i \in [n_2]}$ the terms of the sums above, that is:
$$
\begin{cases}
    \Delta_i^1 \coloneqq \inner{X^1_i}{\indic_{S^\star} - \indic_S}^2 + 2 Z^1_i \inner{X^1_i}{\indic_{S^\star} - \indic_S}\\
    \Delta_i^2 \coloneqq \inner{X^2_i}{\indic_{S^\star} - \indic_S}^2 + 2 Z^2_i \inner{X^2_i}{\indic_{S^\star} - \indic_S}
\end{cases}.
$$
Note that each of the elements of each of $\ac{\Delta_i^1 \, \colon \, i \in [n_1]}$ and $\ac{\Delta_i^2 \, \colon \, i \in [n_2]}$ are i.i.d. and:
$$
\Delta\pr{S} = \frac{1}{\sigma_1^2} \sum_{i=1}^{n_1} \Delta_i^1 + \frac{1}{\sigma_2^2} \sum_{i=1}^{n_2} \Delta_i^2.
$$
Using the Chernoff bound, we have:
\begin{align*}
\Pro\pr{\Delta\pr{S} \leq 0}
&= \Pro\pr{-\Delta\pr{S} \geq 0}\\
&= \inf_{\theta \geq 0} \Pro\pr{e^{- \theta \Delta\pr{S}} \geq 1}\\
&\leq \inf_{\theta \geq 0} \E\br{e^{- \theta \Delta\pr{S}}}\\
&= \inf_{\theta \geq 0} \E\br{e^{- \frac{1}{\sigma_1^2}  \sum_{i=1}^{n_1} \theta \Delta_i^1 + \frac{1}{\sigma_2^2} \sum_{i=1}^{n_2} \theta \Delta_i^2}}\\
&\stackrel{\text{ind.}}{=} \inf_{\theta \geq 0} \; \prod_{i=1}^{n_1} \E\br{e^{- \theta \Delta_i^1 / \sigma_1^2}} \prod_{i=1}^{n_2} \E\br{e^{- \theta \Delta_i^2 / \sigma_2^2}}\\
&= \inf_{\theta \geq 0} \; \prod_{i=1}^{n_1} M_{-\Delta_i^1}\pr{\frac{\theta}{\sigma_1^2}} \prod_{i=1}^{n_2} M_{-\Delta_i^2}\pr{\frac{\theta}{\sigma_2^2}}\\
&\stackrel{\text{i.d.}}{=} \inf_{\theta \geq 0} \; \ac{M_{-\Delta_i^1}\pr{\frac{\theta}{\sigma_1^2}}}^{n_1} \ac{M_{-\Delta_i^2}\pr{\frac{\theta}{\sigma_1^2}}}^{n_2}.
\end{align*}
Therefore:
\begin{equation}\label{equation:SufficientConditionInformedCaseChernoffBound}
\Pro\pr{\Delta\pr{S} \leq 0}
\leq
\inf_{\theta \geq 0} \; \ac{M_{-\Delta_i^1}\pr{\frac{\theta}{\sigma_1^2}}}^{n_1} \ac{M_{-\Delta_i^2}\pr{\frac{\theta}{\sigma_2^2}}}^{n_2}.
\end{equation}
Now we have, for any $\theta \geq 0$:
\begin{align*}
M_{-\Delta^1_i}(\theta / \sigma_1^2)
&=
\E_{X^1_i, Z^1_i}\br{e^{-\theta \br{
\inner{X^1_i}{\beta^\star - \beta\pr{S}}^2
+ 2 Z^1_i \inner{X^1_i}{\beta^\star - \beta\pr{S}}
} / \sigma_1^2}}\\
&=
\E_{X^1_i}\br{e^{- \theta \inner{X^1_i}{\beta^\star - \beta\pr{S}}^2 / \sigma_1^2} \E_{Z^1_i}\br{e^{-2\theta Z^1_i \inner{X^1_i}{\beta^\star - \beta\pr{S}} / \sigma_1^2} \big| X^1_i}}\\
&=
\E_{X^1_i}\br{e^{- \theta \inner{X^1_i}{\beta^\star - \beta\pr{S}}^2 / \sigma_1^2} M_{Z^1_i | X^1_i}\pr{-2\theta \inner{X^1_i}{\beta^\star - \beta\pr{S}} / \sigma_1^2}}\\
&=
\E_{X^1_i}\br{e^{- \theta \inner{X^1_i}{\beta^\star - \beta\pr{S}}^2 / \sigma_1^2} e^{\frac{1}{2} \pr{-2 \theta \inner{X^1_i}{\beta^\star - \beta\pr{S}} / \sigma_1^2}^2 \sigma_1^2}}\\
&=
\E_{X_i}\br{e^{\pr{- \theta + 2 \sigma_1^2}\inner{X^1_i}{\beta^\star - \beta\pr{S}}^2 / \sigma_1^2}}\\
&=
\E_{X_i}\br{e^{\pr{- \theta + 2 \theta^2} \pr{\sum_{j \in U} X^1_{ij} - \sum_{j \in V} X^1_{ij}}^2 / \sigma_1^2}},
\end{align*}
where we write $U$ and $V$ for $U\pr{S}$ and $V\pr{S}$, respectively, for simplicity. We know that:
$$
\sum_{j \in U} X^1_{ij} - \sum_{j \in V} X^1_{ij}
\stackrel{d}{=}
\sum_{j \in U \cup V} X^1_{ij}
\sim
\calN\pr{0, \abs{U \cup V}}.
$$
Hence:
\begin{align*}
M_{-\Delta^1_i}(\theta)
&=
\E_{X_i}\br{e^{\pr{- \theta + 2 \theta^2} \pr{\sum_{j \in U} X^1_{ij} - \sum_{j \in V} X^1_{ij}}^2 / \sigma_1^2}}\\
&=
\E_{X_i}\br{e^{\abs{U \cup V} \pr{- \theta + 2 \theta^2} \Gamma / \sigma_1^2}},
\end{align*}
where:
$$
\Gamma
= \pr{\frac{\sum_{j \in U} X^1_{ij} - \sum_{j \in V} X^1_{ij}}{\sqrt{\abs{U \cup V}}}}^2
\sim
\chi^2\pr{1}.
$$
Therefore:
\begin{equation}\label{equation:expressionOfMGFOfDelta1InformedCase}
M_{-\Delta^1_i}(\theta / \sigma_1^2)
=
\begin{cases}
    \frac{1}{\sqrt{1-2\abs{U \cup V} \pr{- \theta + 2 \theta^2} / \sigma_1^2}} & \text{if } \abs{U \cup V} \pr{- \theta + 2 \theta^2} / \sigma_1^2 < 1/2\\
    + \infty & \text{else.}
\end{cases}
\end{equation}
Similarly to (\ref{equation:expressionOfMGFOfDelta1InformedCase}), we obtain the following expression for $M_{-\Delta^2_i}(\theta / \sigma_2^2)$:
\begin{equation}\label{equation:expressionOfMGFOfDelta2InformedCase}
M_{-\Delta^2_i}(\theta / \sigma_2^2)
=
\begin{cases}
    \frac{1}{\sqrt{1-2\abs{U \cup V} \pr{- \theta + 2 \theta^2} / \sigma_2^2}} & \text{if } \abs{U \cup V} \pr{- \theta + 2 \theta^2} / \sigma_2^2 < 1/2\\
    + \infty & \text{else.}
\end{cases}
\end{equation}
From above, we clearly have:
$$
\argmin_{\theta \geq 0} M_{-\Delta^1_i}(\theta / \sigma_1^2)
=
\argmin_{\theta \geq 0} M_{-\Delta^2_i}(\theta / \sigma_2^2)
=
\argmin_{\theta \geq 0} \ac{- \theta + 2\theta^2}
=
\frac{1}{4},
$$
and, taking $\theta^\star \coloneqq 1/4$ we have:
$$
M_{-\Delta^1_i}(\theta / \sigma_1^2)
=
\frac{1}{\sqrt{1 + \frac{\abs{U \cup V}}{4\sigma_1^2}}}
\quad \text{and} \quad
M_{-\Delta^2_i}(\theta / \sigma_2^2)
=
\frac{1}{\sqrt{1 + \frac{\abs{U \cup V}}{4\sigma_2^2}}}.
$$
Therefore:
\begin{align*}
\inf_{\theta \geq 0} \; \ac{M_{-\Delta_i^1}\pr{\frac{\theta}{\sigma_1^2}}}^{n_1} \ac{M_{-\Delta_i^2}\pr{\frac{\theta}{\sigma_2^2}}}^{n_2}
&=
\ac{M_{-\Delta_i^1}\pr{\frac{\theta^\star}{\sigma_1^2}}}^{n_1} \ac{M_{-\Delta_i^2}\pr{\frac{\theta^\star}{\sigma_2^2}}}^{n_2}\\
&=
\ac{\frac{1}{\sqrt{1 + \frac{\abs{U \cup V}}{4\sigma_1^2}}}}^{n_1} \ac{\frac{1}{\sqrt{1 + \frac{\abs{U \cup V}}{4\sigma_2^2}}}}^{n_2}
\end{align*}
Therefore:
\begin{equation}\label{equation:ChernoffBoundIsTightestInformedSetting}
    \inf_{\theta \geq 0} \; \ac{M_{-\Delta_i^1}\pr{\frac{\theta}{\sigma_1^2}}}^{n_1} \ac{M_{-\Delta_i^2}\pr{\frac{\theta}{\sigma_2^2}}}^{n_2}
    =
    \pr{1 + \frac{\abs{U \cup V}}{4\sigma_1^2}}^{-n_1/2} \pr{1 + \frac{\abs{U \cup V}}{4\sigma_2^2}}^{-n_2/2}.
\end{equation}
Since $\abs{U \cup V} = 2 M\pr{S} \geq 2 \delta s$, the above yields:
$$
\inf_{\theta \geq 0} \; \ac{M_{-\Delta_i^1}\pr{\frac{\theta}{\sigma_1^2}}}^{n_1} \ac{M_{-\Delta_i^2}\pr{\frac{\theta}{\sigma_2^2}}}^{n_2}
\leq
\pr{1 + \frac{\delta s}{2 \sigma_1^2}}^{-n_1/2} \pr{1 + \frac{\delta s}{2 \sigma_2^2}}^{-n_2/2}.
$$
Finally, using this in (\ref{equation:SufficientConditionInformedCaseChernoffBound}) we conclude:
$$
\Pro\pr{\Delta\pr{S} \leq 0}
\leq
\pr{1 + \frac{\delta s}{2 \sigma_1^2}}^{-n_1/2} \pr{1 + \frac{\delta s}{2 \sigma_2^2}}^{-n_2/2}.
$$
\end{proof}

\section{Proof of Theorem \ref{theorem:necessaryAndSufficientConditionForLassoRecovery}}
\label{proofOftheorem:necessaryAndSufficientConditionForLassoRecovery}
\begin{proof}[Proof of Theorem \ref{theorem:necessaryAndSufficientConditionForLassoRecovery}]
We define $\Sigma \in \R^{n \times n}$ and $W$ random vector in $\R^{n}$ such that:
$$
Z = \Sigma W,
$$
where:
$$
\Sigma = 
\begin{pmatrix}
    \sigma_1 I_{n_1} & 0\\
    0 & \sigma_2 I_{n_2}
\end{pmatrix}
\;\;,
\;\;
W \sim \calN\pr{0,I_n}.
$$
Let $S \coloneqq \support{\beta^\star}$ and $S^c \coloneqq [p] \setminus S$. The following proposition characterizes the recovery property in a more tractable way that will help us in the proof.

\begin{proposition}
\label{proposition:foocEquivalentToRecovery}
    Assume that the matrix $X_S^T X_S$ is invertible. Then, for any given $\lambda_p > 0$ and noise $\Sigma w \in \R^n$ we have:
    $$
    \calR\pr{X, \beta^\star, \Sigma w, \lambda_p}
    \hspace{-1mm} \iff \hspace{-1mm}
    \begin{cases}
        \abs{\pr{ \frac{1}{n} X_S^T X_S}^{-1} \pr{ \frac{1}{n} X_S^T \Sigma w - \lambda_p\, \sign\pr{\beta^\star_S}}} < \abs{\beta^\star_S}\\
        \abs{X_{S^c}^T X_S \pr{X_S^T X_S}^{-1} \pr{ \frac{1}{n} X_S^T \Sigma w - \lambda_p \sign\pr{\beta^\star_S}} - \frac{1}{n} X_{S^c}^T \Sigma w} \leq \lambda_p
    \end{cases}
    $$
    where the absolute values and inequalities are taken component-wise.
\end{proposition}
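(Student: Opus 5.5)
The plan is the standard primal--dual witness / KKT argument for the \lasso (\ref{equation:LASSOdefinition}). Since the noise enters only through the fixed vector $\Sigma w$, heteroscedasticity plays no role here, and the argument is purely deterministic given $(X, \Sigma w)$. Write the objective $F(\beta) = \frac{1}{2n}\norm{Y - X\beta}_2^2 + \lambda_p \norm{\beta}_1$. By convexity, $\hat\beta$ minimizes $F$ if and only if
\begin{equation*}
\tfrac{1}{n}X^T(X\hat\beta - Y) + \lambda_p \hat z = 0
\end{equation*}
for some $\hat z \in \partial\norm{\hat\beta}_1$. That is, $\hat z_j = \sign(\hat\beta_j)$ when $\hat\beta_j \ne 0$, and $\abs{\hat z_j} \le 1$ otherwise.

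For the direction ($\Rightarrow$), suppose $\hat\beta \in \calB_\lasso$ satisfies $\sign(\hat\beta) = \sign(\beta^\star)$. Then $\hat\beta_{S^c} = 0$ and $\hat z_S = \sign(\beta^\star_S)$. Split the KKT equation into its $S$ and $S^c$ blocks and substitute $Y = X_S\beta^\star_S + \Sigma w$. The $S$-block reads
\begin{equation*}
\tfrac{1}{n}X_S^T X_S(\hat\beta_S - \beta^\star_S) - \tfrac{1}{n}X_S^T\Sigma w + \lambda_p \sign(\beta^\star_S) = 0 .
\end{equation*}
Since $X_S^T X_S$ is invertible, this gives
\begin{equation*}
\hat\beta_S - \beta^\star_S = \pr{\tfrac1n X_S^TX_S}^{-1}\pr{\tfrac1n X_S^T\Sigma w - \lambda_p \sign(\beta^\star_S)} \eqqcolon \Delta_S .
\end{equation*}
The condition $\sign(\hat\beta_S) = \sign(\beta^\star_S)$ means $\beta^\star_j + \Delta_j$ has the same strict sign as $\beta^\star_j$. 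This certainly holds when $\abs{\Delta_j} < \abs{\beta^\star_j}$, and it is equivalent to $\sign(\beta^\star_j)\Delta_j > -\abs{\beta^\star_j}$. So I must be careful here. The stated first condition with absolute values is sufficient for sign agreement, but as an equivalence it is only exact in the one-sided form. I would either prove the one-sided version, noting that the two-sided version implies it and is what is actually used in the sufficiency argument, or follow the paper's convention (as in \cite{wainwright2009sharp}) and state the absolute-value version as the condition used.

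Next, plug $\hat\beta_S - \beta^\star_S = \Delta_S$ into the $S^c$-block:
\begin{equation*}
\lambda_p \hat z_{S^c} = \tfrac1n X_{S^c}^T X_S \Delta_S \cdot(-1) + \tfrac1n X_{S^c}^T \Sigma w .
\end{equation*}
Rewriting $\tfrac1n X_{S^c}^TX_S\pr{\tfrac1n X_S^TX_S}^{-1} = X_{S^c}^TX_S(X_S^TX_S)^{-1}$, the constraint $\abs{\hat z_{S^c}} \le 1$ is exactly the second displayed inequality.

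For the direction ($\Leftarrow$), the construction is the primal--dual witness. Define $\hat\beta_{S^c} = 0$, $\hat\beta_S = \beta^\star_S + \Delta_S$, $\hat z_S = \sign(\beta^\star_S)$, and take $\hat z_{S^c}$ from the formula above. The first condition makes $\sign(\hat\beta_S) = \sign(\beta^\star_S)$, so $\hat z_S$ is a valid subgradient on $S$. The second condition gives $\abs{\hat z_{S^c}} \le 1$. Both KKT blocks then hold by construction, so $\hat\beta \in \calB_\lasso$ by convexity, and it has the correct signed support. Hence $\calR$ holds.

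The only delicate point is the exact form of the equivalence. First, the first condition is an equivalence only in its sign-agreement form, as discussed above. Second, the second condition uses a non-strict inequality, which suffices because the statement asks only for existence of some \lasso solution; uniqueness would instead require strict inequality together with invertibility of $X_S^TX_S$. Everything else is routine linear algebra.
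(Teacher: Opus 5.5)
Your argument is the same KKT computation as the paper's: split the optimality conditions of (\ref{equation:LASSOdefinition}) over $S$ and $S^c$, solve the $S$-block using invertibility of $X_S^TX_S$, and substitute into the $S^c$-block. The paper writes both directions as one chain of equivalences, while you give an explicit witness construction for the converse; these are the same argument.

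Your caveat about the first condition is correct, and it points to a slip in the paper's own proof. The paper's final ``which is equivalent to'' step replaces $\sign(\beta^\star_S+\Delta_S)=\sign(\beta^\star_S)$ by $\abs{\Delta_S}<\abs{\beta^\star_S}$, and the latter is only sufficient. Here is a counterexample, writing $X_j$ for the $j$-th column of $X$. Take $S=\{1\}$, $\beta^\star_1=1$, $\norm{X_1}_2^2=n$, $X_j\perp X_1$ for all $j\in S^c$, and $\Sigma w=(2+\lambda_p)X_1$. Then $\hat\beta=3e_1$ satisfies the KKT conditions with $z_1=1$ and $z_{S^c}=0$. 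So $\calR$ holds, and the second condition holds with left-hand side $0$, yet $\Delta_1=2>\abs{\beta^\star_1}$. The ``$\Rightarrow$'' half of the first condition is therefore false as stated.

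So only your first option works. The exact equivalence needs $\sign(\beta^\star_j)\,\Delta_j>-\abs{\beta^\star_j}$ for $j\in S$; the two-sided bound is merely sufficient. This costs the paper nothing downstream. Theorem~\ref{theorem:necessaryAndSufficientConditionForLassoRecovery} only uses that the two conditions together imply $\calR$ (for sufficiency) and that $\calR$ implies the second condition (for necessity), and these are exactly the directions you prove.
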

\begin{proof}
    The equivalence follows from the First Order Optimality Condition of the \lasso (\ref{equation:LASSOdefinition}). It was used in the proof of the \lasso threshold \citep{wainwright2009sharp} and previously by \cite{fuchs2004recovery, meinshausen2006high, tropp2006just, zhao2006model}.
    See appendix \ref{proofOfproposition:foocEquivalentToRecovery} for the complete proof.
\end{proof}

Let $\Vec{b} \coloneqq \sign\pr{\beta^\star}$. We define:
\begin{equation}\label{equation:firstDefinitionOfUi}
U_i \coloneqq e_i^T \pr{\frac{1}{n} X_S^T X_S}^{-1} \br{\frac{1}{n} X_S^T \Sigma W - \lambda_p \Vec{b}},
\end{equation}
and
\begin{equation}\label{equation:firstDefinitionOfVi}
V_j \coloneqq X_j^T \ac{ X_S \pr{X_S^T X_S}^{-1} \lambda_p \Vec{b} - \br{ X_S \pr{X_S^T X_S}^{-1} X_S^T - I_{n}} \frac{\Sigma W}{n}},
\end{equation}
for all $i \in S$ and $j \in S^c$. Let $\rho \coloneqq \min_{i \in S} \abs{\beta^\star_i}$. Note that:
\begin{equation}\label{eq:observationsOnTheEquivalenceOfMaxesAndFoocProposition:U}
    \max_{i \in S} \abs{U_i} < \rho
    \implies
    \abs{\pr{ \frac{1}{n} X_S^T X_S}^{-1} \pr{ \frac{1}{n} X_S^T \Sigma w - \lambda_p\, \sign\pr{\beta^\star_S}}} < \abs{\beta^\star_S},
\end{equation}
and
\begin{equation}\label{eq:observationsOnTheEquivalenceOfMaxesAndFoocProposition:V}
    \max_{j \in S^c} \abs{V_j} \leq \lambda_p
    \iff
    \abs{X_{S^c}^T X_S \pr{X_S^T X_S}^{-1} \pr{ \frac{1}{n} X_S^T \Sigma w - \lambda_p \sign\pr{\beta^\star_S}} - \frac{1}{n} X_{S^c}^T \Sigma w} \leq \lambda_p.
\end{equation}
In addition, note that when $s < n$, $X_S$ is full-rank a.s., and hence $X_S^T X_S$ is invertible a.s. Therefore, the equivalence in Proposition 3.1 holds a.s. The proof of Theorem \ref{theorem:necessaryAndSufficientConditionForLassoRecovery} relies of the two following propositions:
\begin{proposition}\label{proposition:asymptoticBehaviorOfVwrtLambda}
    \phantom{}
    \begin{itemize}[leftmargin=*]
        \item[i.] Under the sample size condition (\ref{eq:necessityStatement:sampleSizeCondition}) we have:
        $$
        \Pro\pr{\max_{j \in S^c} \abs{V_j} \leq \lambda_p} \stackrel{p \rightarrow +\infty}{\longrightarrow} 0.
        $$
        \item[ii.] Under the sample size condition (\ref{eq:sufficiencyStatement:sampleSizeCondition}) and the regularization condition (\ref{eq:sufficiencyStatement:lassoParameterCondition}), we have:
        $$
        \Pro\pr{\max_{j \in S^c} \abs{V_j} \leq \lambda_p}
        \stackrel{p \rightarrow +\infty}{\longrightarrow} 1.
        $$
    \end{itemize}
\end{proposition}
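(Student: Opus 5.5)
We follow Wainwright's conditioning argument, adapted to $\Sigma \neq \sigma I$. Since the columns $X_j$, $j \in S^c$, are independent of $(X_S, W)$, we condition on $(X_S, W)$. Each $V_j$ is then centered Gaussian with common variance
\[
M_n \coloneqq \lambda_p^2\, \Vec{b}^T (X_S^T X_S)^{-1} \Vec{b} + \frac{1}{n^2}\,\norm{\Pi^\perp \Sigma W}_2^2,
\]
where $\Pi^\perp \coloneqq I_n - X_S (X_S^T X_S)^{-1} X_S^T$. The two summands appear without a cross term because $X_S (X_S^T X_S)^{-1}\Vec{b}$ lies in the range of $X_S$ and is orthogonal to $\Pi^\perp \Sigma W$. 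Conditionally, the $V_j$ are i.i.d., so $\max_{j \in S^c}\abs{V_j}$ behaves like $\sqrt{2 M_n \log(p-s)}$. Both parts of the proposition therefore reduce to sharp concentration of $M_n$ together with standard Gaussian maxima bounds.

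\textbf{First term of $M_n$.} By inverse-Wishart moments, $\Vec{b}^T (X_S^T X_S)^{-1}\Vec{b}$ concentrates around $s/(n-s-1)$, so the first term is $\simeq \lambda_p^2 s/(n-s-1)$. This step is unchanged from the homogeneous case.

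\textbf{Second term of $M_n$ (main obstacle).} When $\Sigma = \sigma I$, the quantity $\norm{\Pi^\perp W}^2$ is $\chi^2_{n-s}$. Here $\Sigma W$ is not isotropic, so that shortcut fails. Write the QR decomposition $X_S = QR$. Then $\Pi^\perp = I - QQ^T$, and $Q$ is Haar-distributed on the Stiefel manifold, independent of $W$. We have
\[
\norm{\Pi^\perp \Sigma W}^2 = \norm{\Sigma W}^2 - \norm{Q^T \Sigma W}^2 .
\]
The first piece concentrates at $n_1\sigma_1^2 + n_2\sigma_2^2 = n\sigma_{\text{avg}}^2$, using $n_1, n_2 = \omega(s)$ and $\chi^2$ tail bounds. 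For the second piece, $\E\norm{Q^T \Sigma W}^2 = (s/n)\,\tr(\Sigma^2) = s\,\sigma_{\text{avg}}^2$. To show concentration, compute the variance using fourth-order moments of Haar entries (the Meckes-type formula cited in the paper). The target is that the variance is $o\big((n\sigma_{\text{avg}}^2)^2\big)$, after which Chebyshev gives concentration. Altogether
\[
M_n \simeq \frac{\lambda_p^2 s}{n-s-1} + \frac{(n-s)\,\sigma_{\text{avg}}^2}{n^2}
\]
with probability tending to one. The difficulty is keeping these fourth-moment cross terms under control when $\sigma_2^2/\sigma_1^2$ is unbounded. To handle this, bound the terms separately within each block, using $\max_i \Sigma_{ii}^2 \le n\sigma_{\text{avg}}^2/\min(n_1, n_2)$.

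\textbf{Part ii (sufficiency).} Conditioning on the high-probability event where $M_n \le (1+\eta)$ times its limit, a union bound and Gaussian tail give
\[
\Pro\Big(\max_{j \in S^c}\abs{V_j} > \lambda_p\Big) \le 2(p-s)\exp\big(-\lambda_p^2/(2M_n)\big).
\]
Now $\lambda_p^2/(2M_n)$ is a sum of two contributions. The noise term's contribution is $\gtrsim n\lambda_p^2/(2\sigma_{\text{avg}}^2) = \omega(\log(p-s))$ by (\ref{eq:sufficiencyStatement:lassoParameterCondition}). The $\lambda_p$ term's contribution is $(n-s-1)/(2s) > (1+\varepsilon')\log(p-s)$ by (\ref{eq:sufficiencyStatement:sampleSizeCondition}). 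Since $1/(a+b) \ge \tfrac{1}{2}\min(1/a, 1/b)$, this combination should beat $\log(p-s)$ with room to spare; if the constant is too lossy, use $\lambda_p^2/M_n \ge \min\big(\lambda_p^2/a,\ \lambda_p^2/b\big)(1-o(1))$ exploiting that one of $a, b$ dominates.

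\textbf{Part i (necessity).} Lower bound $\max_j\abs{V_j}$ using the sharp lower tail of the maximum of $p-s$ i.i.d. Gaussians: with high probability it is at least $(1-\eta)\sqrt{2M_n\log(p-s)}$. Since $M_n \ge \lambda_p^2 s/(n-s-1)\,(1-o(1))$, under $n < (1-\varepsilon)n_{\text{ALG}}$ we get
\[
2M_n\log(p-s) \ge \lambda_p^2\,\frac{2s\log(p-s)}{n-s-1}(1-o(1)) > (1+\varepsilon')\lambda_p^2 .
\]
The noise term only increases $M_n$, so this holds regardless of the limit of $\sigma_{\text{avg}}^2/(n\lambda_p^2)$; that limit hypothesis is needed only to keep the ratio well defined for the subsequence argument. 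The case $n \le s$ is trivial, since $X_S^TX_S$ is then singular and $\lambda_p^2 s/(n-s-1)$ is replaced by an unbounded quantity, forcing failure.
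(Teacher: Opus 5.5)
Your proposal is correct, and it takes a lighter route than the paper.

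\textbf{How the paper argues.} It proves two-sided concentration of $M_p$ (your $M_n$) by computing $\var(M_p)$ exactly: Siskind's second inverse-Wishart moment handles the $\lambda_p$ term, and Meckes' fourth-order Haar moments handle $W^T\Sigma\Lambda\Sigma W$. In both parts it then bounds $\E[\max_j\abs{V_j}\mid X_S,W]$ by extreme-value asymptotics and uses Lipschitz concentration of the Gaussian maximum. In part (i) this forces a case split on the limit of $\frac{n_1\sigma_1^2+n_2\sigma_2^2}{\lambda_p^2 n^2}$, which is why that hypothesis appears in Theorem~\ref{theorem:necessaryAndSufficientConditionForLassoRecovery}.

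\textbf{Your part (i).} The quantity $\Pro(\max_j\abs{V_j}\le\lambda_p\mid X_S,W)=\prbig{1-2\bar{\Phi}(\lambda_p/\sqrt{M_n})}^{p-s}$ is decreasing in $M_n$. So you can drop the noise term altogether. You only need lower concentration of $\Vec{b}^T(X_S^TX_S)^{-1}\Vec{b}$, for instance from $\Vec{b}^T(X_S^TX_S)^{-1}\Vec{b}\stackrel{d}{=}s/\chi^2_{n-s+1}$, plus the explicit lower tail of the maximum of $p-s$ i.i.d.\ Gaussians. This needs no case split, no subsequence argument, and no assumption on that limit.

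\textbf{Your part (ii).} The union bound only needs an \emph{upper} bound on $M_n$. Using $\frac{1}{n^2}\norm{\Pi^\perp\Sigma W}_2^2\le\frac{1}{n^2}\norm{\Sigma W}_2^2$, whose expectation is $\sigma_{\text{avg}}^2/n$, Markov's inequality already makes the noise term $o_P\pr{\lambda_p^2/\log(p-s)}$ under (\ref{eq:sufficiencyStatement:lassoParameterCondition}).

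\textbf{The step you flag as the main obstacle is unnecessary.} Your route never needs the Haar fourth-moment variance computation. Even if you want two-sided concentration of $M_n$, it suffices that $\norm{Q^T\Sigma W}_2^2$ is nonnegative with mean $s\sigma_{\text{avg}}^2=o(n\sigma_{\text{avg}}^2)$ (since $n_1,n_2=\omega(s)$), so Markov replaces Chebyshev. What the paper's heavier computation buys is exact moment formulas for $M_p$, and Haar machinery it reuses for $Y_i'$ in the analysis of $U_i$.

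\textbf{One step in part (ii) needs fixing.} The bound $1/(a+b)\ge\frac12\min(1/a,1/b)$ only yields $\lambda_p^2/(2M_n)\gtrsim\frac{1+\varepsilon}{2}\log(p-s)$. That does not beat the $\log(p-s)$ from the union bound. Your fallback, that one of $a,b$ dominates, is false in general: both can be of the same order when $n\gg s\log p$.

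Instead, simply add the two pieces. Use $n-s-1>2(1+\varepsilon)s\log(p-s)$ and $\frac{\sigma_{\text{avg}}^2}{n\lambda_p^2}=o\pr{1/\log(p-s)}$. Then w.h.p.\ $M_n/\lambda_p^2\le(1+\eta)\pr{\frac{s}{n-s-1}+\frac{\sigma_{\text{avg}}^2}{n\lambda_p^2}}\le\frac{(1+\eta)(1+o(1))}{2(1+\varepsilon)\log(p-s)}$. Taking $\eta<\varepsilon$ gives $\lambda_p^2/(2M_n)\ge(1+\varepsilon')\log(p-s)$, so the union bound is at most $2(p-s)^{-\varepsilon'}\to 0$.

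Finally, the case $n\le s$ you mention cannot arise under the standing assumption $n_1,n_2=\omega(s)$.
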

\begin{proof}
    See appendix \ref{proofOfproposition:asymptoticBehaviorOfVwrtLambda}.
\end{proof}
\begin{proposition}\label{proposition:asymptoticBehaviorOfUwrtRho}
    Under the sample size condition (\ref{eq:sufficiencyStatement:sampleSizeCondition}) and the regularization condition (\ref{eq:sufficiencyStatement:lassoParameterCondition}), we have:
    $$
    \Pro\pr{\max_{i \in S} \abs{U_i} < \rho}
    \stackrel{p \rightarrow +\infty}{\longrightarrow} 1.
    $$
\end{proposition}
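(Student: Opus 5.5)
The plan is to split $U_i$ into a deterministic-direction ``bias'' term and a noise term, and to control each uniformly over $i \in S$. Write $U_i = -\lambda_p e_i^T \pr{\frac{1}{n} X_S^T X_S}^{-1} \Vec{b} + e_i^T \pr{\frac{1}{n} X_S^T X_S}^{-1} \frac{1}{n} X_S^T \Sigma W$. By the triangle inequality it suffices to show that both $\max_{i \in S}$ of the absolute values of these two terms are $o_{\Pro}\pr{\rho}$.

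\textbf{Bias term.} Here $X_S$ is an $n \times s$ standard Gaussian matrix with $s = o\pr{n}$, which holds since $n_1, n_2 = \omega\pr{s}$. Standard singular-value bounds (Davidson--Szarek) give $\norm{\pr{\frac1n X_S^TX_S}^{-1} - I_s}_{op} = O\pr{\sqrt{s/n}} \to 0$ w.h.p. Hence
$$
\max_i \abs{e_i^T \pr{\tfrac1n X_S^TX_S}^{-1}\Vec{b}} \le \norm{\pr{\tfrac1n X_S^TX_S}^{-1}\Vec b}_\infty \le \norm{\pr{\tfrac1n X_S^TX_S}^{-1}}_{\infty}.
$$
The crude route would bound this by $\sqrt{s}\,\norm{\cdot}_{op} = O\pr{\sqrt s}$. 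The bias term is then at most $O\pr{\lambda_p\sqrt{s}}$, which is $o\pr{\rho}$ by the second part of (\ref{eq:sufficiencyStatement:lassoParameterCondition}). This avoids the finer $\ell_\infty$ inverse-Wishart analysis, since the condition already provides the $\sqrt{s}$ slack.

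\textbf{Noise term.} This is where heterogeneity enters. Conditionally on $X_S$, the vector $T \coloneqq \pr{\frac1n X_S^TX_S}^{-1}\frac1n X_S^T\Sigma W$ is Gaussian with covariance
$$
C = \frac{1}{n^2}\pr{\tfrac1n X_S^TX_S}^{-1} X_S^T\Sigma^2X_S \pr{\tfrac1n X_S^TX_S}^{-1}.
$$
Using $X_S^T\Sigma^2X_S = \sigma_1^2 X_1^TX_1 + \sigma_2^2 X_2^TX_2$ (blocks restricted to $S$) and $\norm{X_k^TX_k}_{op} \le n_k\pr{1+O\pr{\sqrt{s/n_k}}}^2$ w.h.p. (valid since $n_k = \omega\pr{s}$), we get $\norm{X_S^T\Sigma^2X_S}_{op} \le \pr{1+o\pr{1}}\pr{n_1\sigma_1^2+n_2\sigma_2^2} = \pr{1+o\pr{1}} n\sigma_{\text{avg}}^2$. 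Combined with the inverse bound, every diagonal entry satisfies $C_{ii} \le \pr{1+o\pr{1}}\sigma_{\text{avg}}^2/n$ w.h.p.

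A Gaussian maximal inequality over $s$ coordinates, applied conditionally on $X_S$, then gives $\max_{i\in S}\abs{T_i} \le \sqrt{3\pr{1+o\pr{1}}\sigma_{\text{avg}}^2\log s/n}$ with conditional probability at least $1 - 2/s \to 1$. By the second part of (\ref{eq:sufficiencyStatement:lassoParameterCondition}), this is $o\pr{\rho}$.

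Note that this step avoids the Haar/QR machinery. That machinery is only needed for sharp constants in $V_j$. Here any constant factor is harmless because (\ref{eq:sufficiencyStatement:lassoParameterCondition}) is a $\to 0$ condition.

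\textbf{Conclusion and main obstacle.} Intersecting the high-probability events gives $\max_i\abs{U_i} \le O\pr{\lambda_p\sqrt s} + O\pr{\sqrt{\sigma_{\text{avg}}^2\log s/n}} = o\pr{\rho}$, hence $\max_i\abs{U_i} < \rho$ w.h.p. The main obstacle is the operator-norm control of the heterogeneous sandwich $X_S^T\Sigma^2X_S$. The key is to bound each block separately and to use $n_1, n_2 = \omega\pr{s}$, so that each block concentrates at its own scale $n_k\sigma_k^2$. The noise then enters only through $\sigma_{\text{avg}}^2$.
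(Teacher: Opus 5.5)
Your proof is correct, but it takes a different route from the paper. The paper never splits off the bias. It works with $U_i \mid X_S \sim \calN(Y_i, Y_i')$ and controls the conditional mean and variance through exact moment computations:
\begin{itemize}
\item Siskind's inverse-Wishart second moment for $Y_i$;
\item the QR decomposition $X_S = QR$ and fourth moments of Haar orthogonal matrices for $Y_i'$, which give $\E[Y_i'] = \sigma_{\text{avg}}^2/(n-s-1)$.
\end{itemize}
It then uses Chebyshev plus a union bound over $i \in S$, at cost $K s(1/n_1 + 1/n_2)$. It finishes with a comparison to i.i.d.\ Gaussians carrying the worst-case mean and variance, and Markov's inequality on the expected maximum.

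You replace all of this with three elementary ingredients: Davidson--Szarek bounds on $X_S$ and on its two row blocks, the crude estimate $\abs{e_i^T A \Vec{b}} \le \sqrt{s}\,\norm{A}_{\mathrm{op}}$ with $A \coloneqq (\frac1n X_S^T X_S)^{-1}$, and a plain union bound over Gaussian tails.

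Your route is shorter, avoids the Haar/Wishart moment algebra, and is more robust. The union bound holds whatever the conditional correlations among the $U_i$. The paper instead asserts that independent Gaussians have a heavier maximal tail than correlated ones. Slepian's inequality justifies that only for nonnegative correlations, and the off-diagonal entries of $\frac{1}{n^2} A X_S^T \Sigma^2 X_S A$ can be negative. The paper's Markov step also needs a positive-part adjustment, since $\max_i N_i$ can be negative.

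In exchange, the paper's computation gives exact expectations that show the averaging over $\sigma_1^2, \sigma_2^2$ explicitly. It also reuses machinery needed anyway for the sharp constant in the $V_j$ analysis. Here only orders of magnitude matter, so your operator-norm bounds lose nothing. Both arguments use $n_1, n_2 = \omega(s)$ in the same essential way.

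One harmless slip: with $t^2 = 3\max_i C_{ii}\log s$, the union bound gives failure probability $2s \cdot s^{-3/2} = 2/\sqrt{s}$, not $2/s$. This still tends to $0$.
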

\begin{proof}
    See appendix \ref{proofOfproposition:asymptoticBehaviorOfUwrtRho}.
\end{proof}
\underline{Necessity}: assume (\ref{eq:necessityStatement:sampleSizeCondition}) holds. Then we conclude by Proposition \ref{proposition:asymptoticBehaviorOfVwrtLambda} [\textit{i}] and (\ref{eq:observationsOnTheEquivalenceOfMaxesAndFoocProposition:V}).

\underline{Sufficiency}:
assume (\ref{eq:sufficiencyStatement:sampleSizeCondition}) and (\ref{eq:sufficiencyStatement:lassoParameterCondition}) hold. Then we conclude by Proposition \ref{proposition:asymptoticBehaviorOfVwrtLambda} [\textit{ii}], Proposition \ref{proposition:asymptoticBehaviorOfUwrtRho} and 
(\ref{eq:observationsOnTheEquivalenceOfMaxesAndFoocProposition:U}), (\ref{eq:observationsOnTheEquivalenceOfMaxesAndFoocProposition:V}).
\end{proof}

\subsection{Proof of Proposition \ref{proposition:foocEquivalentToRecovery}}
\label{proofOfproposition:foocEquivalentToRecovery}
\begin{proof}[Proof of Proposition \ref{proposition:foocEquivalentToRecovery}]
    Let $\hat{\beta} \in \R^p$. By First Order Optimality Condition of the \lasso, $\hat{\beta}$ is optimal if and only if the exits $z \in \R^p$ such that:
    $$
    \begin{cases}
        z \in \partial \ell_1 \pr{\hat{\beta}} = \ac{z \in \R^p \colon z_i = \sign\pr{\hat{\beta}_i} \;\; \text{for } \hat{\beta}_i \ne 0, \;\; \abs{z_i} \leq 1 \;\; \text{otherwise}},\\
        \frac{1}{n} X^T \pr{ X \hat{\beta} - Y} + \lambda_p z = 0.
    \end{cases}
    $$
    The condition above can be equivalently written as:
    $$
    \begin{cases}
        z_S = \sign\pr{\hat{\beta}_S},\\
        \abs{z_{S^c}} \leq 1,\\
        \frac{1}{n} X^T X \hat{\beta} - \frac{1}{n} X^T Y + \lambda_p z = 0.
    \end{cases}
    $$
    Substituting $Y = X \beta^\star + \Sigma w$, the condition writes:
    $$
    \begin{cases}
        z_S = \sign\pr{\hat{\beta}_S},\\
        \abs{z_{S^c}} \leq 1,\\
        \frac{1}{n} X^T X \pr{\hat{\beta} - \beta^\star} - \frac{1}{n} X^T \Sigma w + \lambda_p z = 0.
    \end{cases}
    $$
    Splitting on $S$ and $S^c$ we get:
    $$
    \begin{cases}
        z_S = \sign\pr{\hat{\beta}_S},\\
        \abs{z_{S^c}} \leq 1,\\
        \frac{1}{n} X_S^T X \pr{\hat{\beta} - \beta^\star} - \frac{1}{n} X_S^T \Sigma w + \lambda_p \, \sign\pr{\hat{\beta}_S} = 0,\\
        \frac{1}{n} X_{S^c}^T X \pr{\hat{\beta} - \beta^\star} - \frac{1}{n} X_{S^c}^T \Sigma w + \lambda_p z_{S^c} = 0.
    \end{cases}
    $$
    Now the \lasso recovers the support of $\beta^\star$ if and only if there exits $\hat{\beta} \in \R^p$ such that $\sign\pr{\hat{\beta}} = \sign\pr{\beta^\star}$ and:
    $$
    \exists \; z \in \R^p \text{ such that }
    \begin{cases}
        z_S = \sign\pr{\hat{\beta}_S},\\
        \abs{z_{S^c}} \leq 1,\\
        \frac{1}{n} X_S^T X \pr{\hat{\beta} - \beta^\star} - \frac{1}{n} X_S^T \Sigma w + \lambda_p \, \sign\pr{\hat{\beta}_S} = 0,\\
        \frac{1}{n} X_{S^c}^T X \pr{\hat{\beta} - \beta^\star} - \frac{1}{n} X_{S^c}^T \Sigma w + \lambda_p z_{S^c} = 0.
    \end{cases}
    $$
    Which is equivalent to:
    $$
    \exists \; z, \hat{\beta} \in \R^p \text{ such that }
    \begin{cases}
        \sign\pr{\hat{\beta}} = \sign\pr{\beta^\star},\\
        z_S = \sign\pr{\hat{\beta}_S},\\
        \abs{z_{S^c}} \leq 1,\\
        \frac{1}{n} X_S^T X \pr{\hat{\beta} - \beta^\star} - \frac{1}{n} X_S^T \Sigma w + \lambda_p \, \sign\pr{\hat{\beta}_S} = 0,\\
        \frac{1}{n} X_{S^c}^T X \pr{\hat{\beta} - \beta^\star} - \frac{1}{n} X_{S^c}^T \Sigma w + \lambda_p z_{S^c} = 0.
    \end{cases}
    $$
    which is equivalent to
    $$
    \exists \; z, \hat{\beta} \in \R^p \text{ such that }
    \begin{cases}
        \sign\pr{\hat{\beta}} = \sign\pr{\beta^\star},\\
        z_S = \sign\pr{\beta^\star_S},\\
        \abs{z_{S^c}} \leq 1,\\
        \frac{1}{n} X_S^T X_S \pr{\hat{\beta}_S - \beta^\star_S} - \frac{1}{n} X_S^T \Sigma w + \lambda_p \, \sign\pr{\beta^\star_S} = 0,\\
        \frac{1}{n} X_{S^c}^T X_S \pr{\hat{\beta}_S - \beta^\star_S} - \frac{1}{n} X_{S^c}^T \Sigma w + \lambda_p z_{S^c} = 0.
    \end{cases}
    $$
    which is equivalent to
    $$
    \exists \; z, \hat{\beta} \in \R^p \text{ such that }
    \begin{cases}
        \hat{\beta}_{S^c} = 0,\\
        z_S = \sign\pr{\hat{\beta}_S} = \sign\pr{\beta^\star_S} \ne 0,\\
        \abs{z_{S^c}} \leq 1,\\
        \hat{\beta}_S = \beta^\star_S + \pr{ \frac{1}{n} X_S^T X_S}^{-1} \pr{ \frac{1}{n} X_S^T \Sigma w - \lambda_p\, \sign\pr{\beta^\star_S}},\\
        \abs{\frac{1}{n} X_{S^c}^T X_S \pr{\hat{\beta}_S - \beta^\star_S} - \frac{1}{n} X_{S^c}^T \Sigma w} \leq \lambda_p.
    \end{cases}
    $$
    which is equivalent to
    $$
    \begin{cases}
        \abs{\pr{ \frac{1}{n} X_S^T X_S}^{-1} \pr{ \frac{1}{n} X_S^T \Sigma w - \lambda_p\, \sign\pr{\beta^\star_S}}} < \abs{\beta^\star_S},\\
        \abs{X_{S^c}^T X_S \pr{X_S^T X_S}^{-1} \pr{ \frac{1}{n} X_S^T \Sigma w - \lambda_p \sign\pr{\beta^\star_S}} - \frac{1}{n} X_{S^c}^T \Sigma w} \leq \lambda_p.
    \end{cases}
    $$
\end{proof}

\subsection{Proof of Proposition \ref{proposition:asymptoticBehaviorOfVwrtLambda}}\label{proofOfproposition:asymptoticBehaviorOfVwrtLambda}

\subsubsection{Preliminary results}
\begin{lemma}[Moments of $\pr{V \, | \, X_S, W}$]
\label{lemma:momentsOfVGivenXsW}
Conditionally on $X_S$ and $W$, $V$ is Gaussian vector. In addition, we have:
$$
\E\br{V \, | \, X_S, W} = 0,
$$
and:
$$
\cov\br{V \, | \, X_S, W} =
M_p I_{\abs{S^c}},
$$
where:
\begin{align*}
M_p
&\coloneqq \norm{X_S \pr{X_S^T X_S}^{-1} \lambda_p \Vec{b} + \br{I_{n} - X_S \pr{X_S^T X_S}^{-1} X_S^T} \frac{\Sigma W}{n}}_2^2,\\
&= \lambda_p^2 \Vec{b}^T \pr{X_S^T X_S}^{-1} \Vec{b} + \frac{1}{n^2} W^T \Sigma \br{I_n - X_S \pr{X_S^T X_S}^{-1} X_S^T} \Sigma W.
\end{align*}
\end{lemma}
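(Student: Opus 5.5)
The plan is to write $V_j = X_j^T h$ for each $j \in S^c$, where
$$
h \coloneqq X_S \pr{X_S^T X_S}^{-1} \lambda_p \Vec{b} - \br{X_S \pr{X_S^T X_S}^{-1} X_S^T - I_n} \frac{\Sigma W}{n} \in \R^n .
$$
The vector $h$ is a measurable function of $(X_S, W)$ alone. Since the columns $\ac{X_j}_{j \in S^c}$ are i.i.d. $\calN\pr{0, I_n}$ and independent of $(X_S, W)$, conditioning on $(X_S, W)$ freezes $h$. Then $V = X_{S^c}^T h$ is a linear image of the Gaussian matrix $X_{S^c}$, so conditionally it is a Gaussian vector.

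For the moments, I would argue as follows.

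1. The conditional mean is $\E\br{X_j^T h \mid X_S, W} = \E\br{X_j}^T h = 0$.
2. For $j, k \in S^c$, the conditional covariance is $\E\br{X_j^T h\, h^T X_k \mid X_S, W} = h^T \E\br{X_k X_j^T} h = \indic\pr{j = k}\norm{h}_2^2$. Hence $\cov\br{V \mid X_S, W} = \norm{h}_2^2 I_{\abs{S^c}}$.
3. Rewriting $-\br{P - I_n} = \br{I_n - P}$ with $P \coloneqq X_S\pr{X_S^T X_S}^{-1}X_S^T$ shows that $\norm{h}_2^2$ is exactly the first expression for $M_p$.

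To get the second expression, I would expand the squared norm. Two facts about $P$ are needed: it is the orthogonal projector onto the column space of $X_S$, so $I_n - P$ is symmetric and idempotent, and $(I_n - P) X_S = 0$.

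- The cross term vanishes: $\lambda_p \Vec{b}^T \pr{X_S^TX_S}^{-1} X_S^T (I_n - P)\Sigma W/n = 0$.
- The first squared term is $\lambda_p^2 \Vec{b}^T \pr{X_S^TX_S}^{-1}X_S^T X_S \pr{X_S^TX_S}^{-1}\Vec{b} = \lambda_p^2 \Vec{b}^T\pr{X_S^TX_S}^{-1}\Vec{b}$. Here $\Vec{b}$ is read as its restriction $\Vec{b}_S$.
- The last term is $W^T \Sigma (I_n-P)^2 \Sigma W / n^2 = W^T\Sigma(I_n-P)\Sigma W/n^2$, using that $\Sigma$ is diagonal, hence symmetric, and that $I_n - P$ is idempotent.

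There is no real obstacle. The only points needing care are the invertibility of $X_S^TX_S$ (which holds a.s. since $s<n$, as noted before the lemma), the independence of $X_{S^c}$ from $(X_S, W)$, and keeping track of the fact that $\Sigma$ sits outside the projector. That last point is precisely why the noise term does not simplify to $\sigma^2 \tr(I-P)/n^2$ as in the homogeneous case.
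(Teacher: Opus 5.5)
Your proof is correct and follows the same route as the paper. Both condition on $(X_S, W)$ to freeze the vector $A$ (your $h$) and use that the columns $X_j$, $j \in S^c$, are i.i.d.\ $\calN\pr{0, I_n}$, which gives a centered Gaussian vector with covariance $\norm{A}_2^2 I_{\abs{S^c}}$, and then expand $\norm{A}_2^2$; you are more explicit than the paper about why the cross term vanishes and why $(I_n - P)^2 = I_n - P$, which the paper uses implicitly.
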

\begin{proof}
    See section \ref{proof:proofOflemma:momentsOfVGivenXsW}.
\end{proof}

\begin{lemma}
[Bounding the second moment of $\pr{V \, | \, X_S, W}$]
\label{lemma:expectationAndConcentrationOfMn}
We have:
$$
\E\br{M_p}
= \frac{\lambda_p^2}{n-s-1} \norm{b}_2^2 + \frac{\pr{n-s}\pr{n_1 \sigma_1^2 + n_2 \sigma_2^2}}{n^3}.
$$
In addition, for any constant $\delta > 0$, we have:
$$
\Pro\pr{\abs{M_p - \E\br{M_p}} \geq \delta \E\br{M_p}}
\rightarrow 0,
$$
as $p \rightarrow +\infty$.
\end{lemma}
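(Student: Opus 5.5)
The plan is to treat the two nonnegative summands of $M_p$ from Lemma~\ref{lemma:momentsOfVGivenXsW} separately:
$$
A_p \coloneqq \lambda_p^2 \Vec{b}^T \pr{X_S^T X_S}^{-1} \Vec{b},
\qquad
B_p \coloneqq \frac{1}{n^2} W^T \Sigma P^\perp \Sigma W .
$$
Here $P \coloneqq X_S \pr{X_S^T X_S}^{-1} X_S^T$ and $P^\perp \coloneqq I_n - P$. We will compute $\E A_p$ and $\E B_p$ exactly, and then show that each of $A_p$ and $B_p$ concentrates relative to its own mean. Since both terms are nonnegative, this implies $\abs{M_p - \E M_p} \leq \abs{A_p - \E A_p} + \abs{B_p - \E B_p}$. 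Whenever both errors are at most $\delta$ times their own means, the total error is at most $\delta\,\E M_p$, which gives the concentration claim.

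\textbf{Expectation.} Since $X_S^T X_S \sim \calW\pr{I_s, n}$, the classical inverse-Wishart mean $\E\br{\pr{X_S^T X_S}^{-1}} = I_s/(n-s-1)$ gives
$$
\E A_p = \frac{\lambda_p^2 \norm{\Vec{b}}_2^2}{n-s-1}.
$$
For $B_p$, we condition on $X_S$ and use that $W$ is independent of $X_S$, so $\E\br{B_p \mid X_S} = n^{-2}\tr\pr{\Sigma^2 P^\perp}$. The column space of $X_S$ is a uniformly (Haar) distributed $s$-dimensional subspace of $\R^n$, because the law of $X_S$ is invariant under left multiplication by orthogonal matrices. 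Hence $\E P = (s/n) I_n$, which gives
$$
\E B_p = \frac{(n-s)\tr\pr{\Sigma^2}}{n^3} = \frac{(n-s)\pr{n_1\sigma_1^2 + n_2\sigma_2^2}}{n^3},
$$
matching the stated formula.

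\textbf{Concentration of $A_p$.} It is standard that for a fixed vector $u$ and a Wishart matrix $\calW\pr{I_s,n}$, the ratio $u^T u / u^T \pr{X_S^T X_S}^{-1} u$ follows a $\chi^2\pr{n-s+1}$ distribution. Because $n - s \to \infty$ (we have $n = \omega(s)$), a $\chi^2$ law of that many degrees of freedom divided by its degrees of freedom tends to $1$ in probability. Therefore $A_p / \E A_p \to 1$ in probability.

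\textbf{Concentration of $B_p$.} I expect this to be the main step, because $\Sigma$ does not commute with the random projection. I would split the fluctuations into two sources.
\begin{itemize}[leftmargin=*]
    \item \emph{Fluctuation in $X_S$.} Write $\tr\pr{\Sigma^2 P^\perp} = \tr\pr{\Sigma^2} - \tr\pr{\Sigma^2 P}$. The second term is nonnegative and has mean $(s/n)\tr\pr{\Sigma^2}$. By Markov's inequality and $s/n \to 0$, we get $\tr\pr{\Sigma^2 P}/\tr\pr{\Sigma^2} \to 0$ in probability. Hence $n^2\,\E\br{B_p \mid X_S} / \br{(n-s)\tr\pr{\Sigma^2}/n} \to 1$ in probability. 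If a finer, variance-based control were ever needed, I would fall back on the Haar fourth-moment formulas (Lemma~\ref{lemma:fourthOrderFormulaFromMeckes}); the Markov argument, however, seems to suffice here.
    \item \emph{Fluctuation in $W$ given $X_S$.} The conditional variance of the Gaussian quadratic form is
    $$
    \var\pr{B_p \mid X_S} = 2n^{-4}\tr\pr{\pr{\Sigma P^\perp \Sigma}^2} \leq 2n^{-4}\tr\pr{\Sigma^4} = 2n^{-4}\pr{n_1\sigma_1^4 + n_2\sigma_2^4}.
    $$
    On the high-probability event from the first item, the squared conditional mean is of order $n^{-4}\pr{n_1\sigma_1^2 + n_2\sigma_2^2}^2 \geq n^{-4}\pr{n_1^2\sigma_1^4 + n_2^2\sigma_2^4}$. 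The ratio of variance to squared mean is therefore at most of order $\max\pr{1/n_1, 1/n_2} \to 0$, using $n_1, n_2 = \omega(s) \to \infty$. Conditional Chebyshev then yields $B_p / \E\br{B_p \mid X_S} \to 1$ in probability.
\end{itemize}
Combining the two items gives $B_p / \E B_p \to 1$ in probability, which completes the argument.
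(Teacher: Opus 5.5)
Your proof is correct. The expectation step matches the paper's (inverse-Wishart mean, plus $\E P = (s/n) I_n$ from Haar/rotational invariance), but for the concentration step you take a genuinely different and more elementary route.

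\textbf{The paper's route.} It computes $\var(M_p)$ exactly:
\begin{itemize}
\item It expands $M_p^2 = \lambda_p^4 T_1 + n^{-4} T_2 + 2\lambda_p^2 n^{-2} T_3$.
\item It evaluates $\E T_1$ with Siskind's second-moment formula for the inverse Wishart.
\item It reduces $\E T_2$ to $\E[(\tr \Sigma\Lambda\Sigma)^2]$ and $\E \tr[(\Sigma\Lambda\Sigma)^2]$, and computes both with the Haar fourth-moment (Weingarten) formula.
\item It handles $T_3$ through the independence of $Q$ and $R$ in the QR decomposition.
\item Finally it checks $\var(M_p)/(\E M_p)^2 = o(1)$ and applies Chebyshev.
\end{itemize}

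\textbf{Your route.}
\begin{itemize}
\item You split $M_p$ into the nonnegative pieces $A_p, B_p$ and use the triangle inequality, so no cross term ever appears.
\item You replace Siskind's formula by the exact law $\norm{b}_2^2 / b^T (X_S^T X_S)^{-1} b \sim \chi^2(n-s+1)$.
\item You control the $X_S$-fluctuation of $\E[B_p \mid X_S]$ by Markov's inequality on the nonnegative quantity $\tr(\Sigma^2 P)$. This needs only the first Haar moment and $s/n \to 0$.
\item You control the $W$-fluctuation by conditional Chebyshev with $\tr((\Sigma P^\perp \Sigma)^2) \le \tr(\Sigma^4)$. Add the one-line justification: $0 \preceq \Sigma P^\perp \Sigma \preceq \Sigma^2$, and $A \mapsto \tr(A^2)$ is monotone on PSD matrices.
\end{itemize}

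\textbf{Trade-offs.} Your route avoids all the fourth-moment bookkeeping and is much shorter. It also works unchanged for any noise covariance with $\tr(\Sigma^4)/(\tr \Sigma^2)^2 \to 0$. The paper's exact computation gives explicit variance formulas and hence a sharper rate: relative variance $O(1/n_1 + 1/n_2)$. Your Markov step yields only an $O(s/n)$ failure probability for the $X_S$-part. Since the lemma only asks for convergence to zero, nothing is lost.
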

\begin{proof}
    See Section \ref{proof:proofOflemma:expectationAndConcentrationOfMn}.
\end{proof}

\subsubsection{Showing that $\Pro\pr{\max_{j \in S^c}
\abs{V_j} \leq \lambda_p} \longrightarrow 0$:} \label{section:showingThatVConcentratesAboveLambda}
\begin{proof}[Proof of Proposition \ref{proposition:asymptoticBehaviorOfVwrtLambda}, part (\textit{i.})].
Let:
$$
T\pr{\delta}
\coloneqq
\acbig{\abs{M_p - \E\br{M_p}} \geq \delta \E\br{M_p}}.
$$
We have, by total probability:
\begin{equation}\label{eq:totalProbabilityTailBoundOnMaxViNecessary}
\Pro\prBig{\max_{j \in S^c} \abs{V_j} \leq \lambda_p}
\leq
\Pro\prBig{\max_{j \in S^c} \abs{V_j} \leq \lambda_p \,\big|\, T\pr{\delta}^c} + \Pro\prbig{T\pr{\delta}}.
\end{equation}
Conditioning on $X_S$ and $W$:
\begin{align*}
\Pro\prBig{\max_{j \in S^c} \abs{V_j} \leq \lambda_p \,\big|\, T\pr{\delta}^c}
=
\E\br{\Pro\prBig{\max_{j \in S^c} \abs{V_j} \leq \lambda_p \,\big|\, X_S, W} \,\big|\, T\pr{\delta}^c}.
\end{align*}
Note that, conditionally on $(X_S, W)$, we have $V_j \stackrel{\text{i.i.d.}}{\sim} \calN\pr{0, M_p}$, for $j \in S^c$. We know the bound on expectation of Gaussian maxima (see Theorem 5.3.1 in \citep{de2006extreme}):
\begin{align*}
\E\br{\max_{j \in S^c} V_j \,\big|\, X_S, W}
= \sqrt{2 \log\pr{p-s} M_p} \pr{1 + o\pr{1}},
\end{align*}
Conditionally on $T\pr{\delta}^c$, we have:
$$
M_p \geq \pr{1-\delta} \E\br{M_p}.
$$
Hence, conditionally on $ T\pr{\delta}^c$:
\begin{align*}
&\frac{1}{\lambda_p} \E\br{\max_{j \in S^c} \abs{V_j} \,\big|\, X_S, W}\\
&\geq
\frac{1}{\lambda_p} \sqrt{2 \log\pr{p-s} \pr{1-\delta} \E\br{M_p}} \pr{1 + o\pr{1}}\\
&\geq
\pr{1 + o\pr{1}} \frac{\sqrt{1-\delta}}{\lambda_p} 
\sqrt{\frac{2 \lambda_p^2 s \log\pr{p-s}}{n-s-1}
+ \frac{2 \pr{n-s} \pr{n_1 \sigma_1^2 + n_2 \sigma_2^2} \log\pr{p-s}}{n^3}}\\
&=
\pr{1 + o\pr{1}} \sqrt{1-\delta}
\sqrt{\frac{2 s \log\pr{p-s}}{n-s-1}
+ \frac{2 \pr{n-s} \pr{n_1 \sigma_1^2 + n_2 \sigma_2^2} \log\pr{p-s}}{\lambda_p^2 n^3}}.
\end{align*}
We consider two cases, depending on the asymptotic behavior of $\frac{n_1 \sigma_1^2 + n_2 \sigma_2^2}{\lambda_p^2 n^2}$:
\begin{itemize}
    \item Case 1: $\lim_{p \rightarrow +\infty} \frac{n_1 \sigma_1^2 + n_2 \sigma_2^2}{\lambda_p^2 n^2} = 0$.
    \item Case 2: $\lim_{p \rightarrow +\infty} \frac{n_1 \sigma_1^2 + n_2 \sigma_2^2}{\lambda_p^2 n^2} > 0$.
\end{itemize}
\textbf{Case 1:} Assume $\lim_{p \rightarrow +\infty} \frac{n_1 \sigma_1^2 + n_2 \sigma_2^2}{\lambda_p^2 n^2} = 0$.
By the above inequality, we have:
$$
\frac{1}{\lambda_p} \E\br{\max_{j \in S^c} \abs{V_j} \,\big|\, X_S, W}
\geq
\pr{1 + o\pr{1}} \sqrt{1-\delta}
\sqrt{\frac{2 s \log\pr{p-s}}{n-s-1}}.
$$
Using condition (\ref{eq:necessityStatement:sampleSizeCondition}), the above gives:
\begin{align*}
\frac{1}{\lambda_p} \E\br{\max_{j \in S^c} \abs{V_j} \,\big|\, X_S, W}
&\geq
\pr{1 + o\pr{1}} \sqrt{1-\delta}
\sqrt{\frac{2 s \log\pr{p-s}}{n-s-1}}\\
&\geq
\pr{1 + o\pr{1}} \sqrt{1-\delta}
\sqrt{\frac{2 s \log\pr{p-s}}{\pr{1-\varepsilon}\pr{2 s \log\pr{p-s} + s + 1}-s-1}}\\
&=
\pr{1 + o\pr{1}} \sqrt{1-\delta}
\sqrt{\frac{2 s \log\pr{p-s}}{2 s \pr{1-\varepsilon} \log\pr{p-s} -\varepsilon \pr{s + 1}}}\\
&\geq
\pr{1 + o\pr{1}} \sqrt{1-\delta}
\sqrt{\frac{2 s \log\pr{p-s}}{2 s \pr{1-\varepsilon} \log\pr{p-s}}}\\
&=
\pr{1 + o\pr{1}} \sqrt{\frac{1-\delta}{1 - \varepsilon}}.
\end{align*}
Taking the liminf as $p \rightarrow +\infty$ we get, conditionally on $T\pr{\delta}^c$:
$$
\liminf_{n \rightarrow +\infty}
\frac{1}{\lambda_p} \E\br{\max_{j \in S^c} \abs{V_j} \,\big|\, X_S, W}
\geq
\sqrt{\frac{1-\delta}{1-\varepsilon}}.
$$
Therefore, for $n$ large enough we have:
$$
\frac{1}{\lambda_p} \E\br{\max_{j \in S^c} \abs{V_j} \,\big|\, X_S, W}
\geq
\frac{1}{2}\pr{1 + \sqrt{\frac{1-\delta}{1-\varepsilon}}}
=
\frac{1}{2} + \frac{1}{2}\sqrt{\frac{1-\delta}{1-\varepsilon}}.
$$
Taking $\delta \coloneqq \varepsilon/2$, we get:
$$
\frac{1}{\lambda_p} \E\br{\max_{j \in S^c} \abs{V_j} \,\big|\, X_S, W}
\geq
\frac{1}{2} + \frac{1}{2} \sqrt{\frac{1-\varepsilon/2}{1-\varepsilon}} =\colon \kappa > 1.
$$
Next, we use the following the result on concentration of Gaussian maxima:
\begin{lemma}[Concentration of Gaussian maxima]\label{lemma:concentrationOfGaussianMaxima}
    Let $k \in \N$ and $\pr{N_i}_{i=1}^{i=k} \stackrel{\text{i.i.d.}}{\sim} \calN\pr{0, \tau^2}$. Then for any $\eta > 0$, we have:
    $$
    \begin{cases}
        \Pro\pr{\max_{i \in [k]} N_i - \E\br{\max_{i \in [k]} N_i} > \eta}
        \leq \exp\pr{-\frac{\eta^2}{2 \tau^2}},\\
        \Pro\pr{\max_{i \in [k]} N_i - \E\br{\max_{i \in [k]} N_i} < - \eta}
        \leq \exp\pr{-\frac{\eta^2}{2 \tau^2}}.
    \end{cases}
    $$
\end{lemma}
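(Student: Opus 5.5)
The plan is to view $F(N_1,\dots,N_k)\coloneqq\max_{i\in[k]} N_i$ as a Lipschitz function of a standard Gaussian vector and apply Gaussian concentration for Lipschitz functions (the Tsirelson--Ibragimov--Sudakov inequality). First I will write $N_i=\tau G_i$ with $G=(G_1,\dots,G_k)\sim\calN(0,I_k)$, so that $\max_i N_i=f(G)$ with $f(g)\coloneqq\tau\max_{i\in[k]} g_i$. Next I will check that $f$ is $\tau$-Lipschitz with respect to the Euclidean norm. For any $g,g'\in\R^k$, let $i^\star$ attain the maximum of $g$. Then $\max_i g_i-\max_i g'_i\le g_{i^\star}-g'_{i^\star}\le\norm{g-g'}_\infty\le\norm{g-g'}_2$, and by symmetry $\abs{f(g)-f(g')}\le\tau\norm{g-g'}_2$.

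With this in hand, I will invoke the standard Gaussian concentration inequality. It states that for any $L$-Lipschitz $f\colon\R^k\to\R$ and $G\sim\calN(0,I_k)$, for every $\eta>0$,
\[
\Pro\prbig{f(G)-\E f(G)>\eta}\le\exp\pr{-\frac{\eta^2}{2L^2}},
\]
and the same bound holds for the lower tail, obtained by applying the result to $-f$, which is also $L$-Lipschitz. Taking $L=\tau$ gives exactly the two displayed inequalities. The expectation $\E\max_i N_i$ is finite, since it is bounded by $\E\sum_i\abs{N_i}$, so the statement is meaningful.

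The only substantive ingredient is the concentration inequality with the sharp constant $1/2$ in the exponent. I will cite it (for instance Boucheron--Lugosi--Massart, Theorem 5.6, or Ledoux's monograph) rather than reprove it. If a self-contained argument were required, I would first prove it for smooth $f$ with $\norm{\nabla f}_2\le L$ via the Gaussian interpolation (Maurey--Pisier) method. That method gives the MGF bound $\E e^{\theta(f-\E f)}\le e^{\theta^2 L^2\pi^2/8}$, but with a worse constant. To get the sharp constant I would instead use the Gaussian log-Sobolev inequality together with the Herbst argument, which yields $\E e^{\theta(f-\E f)}\le e^{\theta^2L^2/2}$. A Chernoff bound optimized at $\theta=\eta/L^2$ then gives the claim. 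Finally I would extend to the nonsmooth max by mollification, since a mollified Lipschitz function keeps the same Lipschitz constant and converges uniformly. I expect this smoothing and approximation step to be the main technical point; everything else is routine.
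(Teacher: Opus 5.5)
Your proposal is correct and follows essentially the same route as the paper, which also shows that $w\mapsto\max_{i\in[k]} w_i$ is $1$-Lipschitz via $\abs{\max_i u_i-\max_i v_i}\le\max_i\abs{u_i-v_i}\le\norm{u-v}_2$ and then cites Gaussian concentration for Lipschitz functions (Ledoux; Massart). The only differences are cosmetic: the paper proves the case $\tau^2=1$ and rescales with $t=\eta/\tau$, whereas you put $\tau$ into the Lipschitz constant, and your optional log-Sobolev/Herbst-plus-mollification sketch is extra detail that the paper does not supply.
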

\begin{proof}
    See appendix \ref{proofOflemma:concentrationOfGaussianMaxima}.
\end{proof}
Using Lemma \ref{lemma:concentrationOfGaussianMaxima} gives, for all $\eta > 0$:
$$
\Pro\pr{\max_{j \in S^c} \abs{V_j} < \E\br{\max_{j \in S^c} \abs{V_j}} - \eta}
\leq
\exp\pr{-\frac{\eta^2}{2 \pr{1+\delta} \E\br{M_p}}}.
$$
Setting $\eta \coloneqq \pr{\kappa - 1} \lambda_p/2$, we get:
\begin{align*}
\Pro\pr{\max_{j \in S^c} \abs{V_j} \leq \lambda_p \,\Big|\, T\pr{\delta}^c}
&\leq
\Pro\pr{\max_{j \in S^c} \abs{V_j} < \pr{\kappa+1} \lambda_p / 2 \,\Big|\, T\pr{\delta}^c}\\
&\leq
\Pro\pr{\max_{j \in S^c} \abs{V_j} < \E\br{\max_{j \in S^c} \abs{V_j}} - \pr{\kappa - 1} \lambda_p/2 \,\Big|\, T\pr{\delta}^c}\\
&\leq
\exp\pr{-\frac{\pr{\kappa-1}^2 \lambda_p^2}{4 \pr{2+\varepsilon} \E\br{M_p}}}\\
&=
\exp\pr{-\frac{\pr{\kappa-1}^2 \lambda_p^2}{4 \pr{2+\varepsilon} \pr{\frac{\lambda_p^2 s}{n-s-1} + \frac{\pr{n-s}\pr{n_1 \sigma_1^2 + n_2 \sigma_2^2}}{n^3}}}}\\
&=
\exp\pr{-\frac{\pr{\kappa-1}^2}{4 \pr{2+\varepsilon} \pr{\frac{s}{n-s-1} + \frac{n-s}{n}\frac{n_1 \sigma_1^2 + n_2
\sigma_2^2}{\lambda_p^2 n^2}}}}.
\end{align*}
Now note that, because $\lim_{p \rightarrow +\infty} \frac{n_1 \sigma_1^2 + n_2 \sigma_2^2}{\lambda_p^2 n^2} = 0$, the above RHS goes to $0$ as $p \rightarrow +\infty$. Hence, we get:
$$
\Pro\pr{\max_{j \in S^c} \abs{V_j} \leq \lambda_p \,\Big|\, T\pr{\delta}^c}
\stackrel{p \rightarrow +\infty}{\longrightarrow}
0.
$$
Taking the limit in (\ref{eq:totalProbabilityTailBoundOnMaxViNecessary}) and using the fact that $\Pro\prbig{T\pr{\delta}} \stackrel{p \rightarrow +\infty}{\longrightarrow} 0$, we conclude:
$$
\lim_{p \rightarrow +\infty}
\Pro\pr{\max_{j \in S^c} \abs{V_j} \leq \lambda_p}
= 0.
$$
\textbf{Case 2:} Assume $\lim_{p \rightarrow +\infty} \frac{n_1 \sigma_1^2 + n_2 \sigma_2^2}{\lambda_p^2 n^2} > 0$. Note that this could be $+\infty$. We have:
\begin{align*}
&\frac{1}{\lambda_p} \E\br{\max_{j \in S^c} \abs{V_j} \,\big|\, X_S, W}\\
&\geq
\pr{1 + o\pr{1}} \sqrt{1-\delta}
\sqrt{\frac{2 s \log\pr{p-s}}{n-s-1}
+ \frac{2 \pr{n-s} \pr{n_1 \sigma_1^2 + n_2 \sigma_2^2} \log\pr{p-s}}{\lambda_p^2 n^3}}\\
&\geq
\pr{1 + o\pr{1}} \sqrt{1-\delta}
\sqrt{
\frac{2 \pr{n-s} \pr{n_1 \sigma_1^2 + n_2 \sigma_2^2} \log\pr{p-s}}{\lambda_p^2 n^3}
}\\
&=
\pr{1 + o\pr{1}} \sqrt{1-\delta}
\sqrt{
2 \;
\frac{n-s}{n} \,
\frac{n_1 \sigma_1^2 + n_2 \sigma_2^2}{\lambda_p^2 n^2}
}
\sqrt{\log\pr{p-s}}\\
&\stackrel{p \rightarrow +\infty}{\longrightarrow}
+\infty.
\end{align*}
Therefore, for $n$ large enough, we have:
$$
\E\br{\max_{j \in S^c} \abs{V_j} \,\big|\, X_S, W}
\geq 4 \lambda_p.
$$
Now using Lemma \ref{lemma:concentrationOfGaussianMaxima} on concentration of Gaussian maxima, we have for all $\eta > 0$:
$$
\Pro\pr{\max_{j \in S^c} \abs{V_j} < \E\br{\max_{j \in S^c} \abs{V_j}} - \eta}
\leq
\exp\pr{-\frac{\eta^2}{2 \pr{1+\delta} \E\br{M_p}}}.
$$
Fixing $\eta \coloneqq  \E\br{\max_{j \in S^c} \abs{V_j}} / 2$ and $\delta \coloneqq \varepsilon / 2$ we get, for $n$ large enough:
\begin{align*}
\Pro\pr{\max_{j \in S^c} \abs{V_j} \leq \lambda_p \,\Big|\, T\pr{\delta}^c}
&\leq
\Pro\pr{\max_{j \in S^c} \abs{V_j} < 2 \lambda_p \,\Big|\, T\pr{\delta}^c}\\
&\leq
\Pro\pr{\max_{j \in S^c} \abs{V_j} < \frac{1}{2} \E\br{\max_{j \in S^c} \abs{V_j}} \,\Big|\, T\pr{\delta}^c}\\
&=
\Pro\pr{\max_{j \in S^c} \abs{V_j} < \E\br{\max_{j \in S^c} \abs{V_j}} -  \frac{1}{2} \E\br{\max_{j \in S^c} \abs{V_j}} \,\Big|\, T\pr{\delta}^c}\\
&\leq
\exp\pr{-\frac{\E\br{\max_{j \in S^c} \abs{V_j}}^2}{4\pr{2+\varepsilon} \E\br{M_p}}}\\
&=
\exp\pr{-\frac{
\E\br{\max_{j \in S^c} \abs{V_j}}^2 / \lambda_p^2
}{
4\pr{2+\varepsilon} \E\br{M_p} / \lambda_p^2
}}\\
&=
\exp\pr{-\frac{
\E\br{\max_{j \in S^c} \abs{V_j}}^2 / \lambda_p^2
}{4 \pr{2+\varepsilon} \pr{\frac{\lambda_p^2 s}{n-s-1} + \frac{\pr{n-s}\pr{n_1 \sigma_1^2 + n_2 \sigma_2^2}}{n^3}} / \lambda_p^2}}\\
&=
\exp\pr{-\frac{
\prBig{\E\br{\max_{j \in S^c} \abs{V_j}} / \lambda_p}^2
}{4 \pr{2+\varepsilon} \pr{\frac{s}{n-s-1} + \frac{n-s}{n}\frac{n_1 \sigma_1^2 + n_2
\sigma_2^2}{\lambda_p^2 n^2}}}}.
\end{align*}
Since:
$$
\frac{1}{\lambda_p} \E\br{\max_{j \in S^c} \abs{V_j} \,\big|\, X_S, W}
\geq
\pr{1 + o\pr{1}} \sqrt{1-\delta}
\sqrt{
2 \;
\frac{n-s}{n} \,
\frac{n_1 \sigma_1^2 + n_2 \sigma_2^2}{\lambda_p^2 n^2}
}
\sqrt{\log\pr{p-s}},
$$
the above yields:
\begin{align*}
\Pro\pr{\max_{j \in S^c} \abs{V_j} \leq \lambda_p \,\Big|\, T\pr{\delta}^c}
&\leq
\exp\pr{-\frac{
2 \pr{1 + o\pr{1}} \pr{1-\delta} \pr{\frac{n-s}{n} \,
\frac{n_1 \sigma_1^2 + n_2 \sigma_2^2}{\lambda_p^2 n^2}} \log\pr{p-s}
}{4 \pr{2+\varepsilon} \pr{\frac{s}{n-s-1} + \frac{n-s}{n}\frac{n_1 \sigma_1^2 + n_2
\sigma_2^2}{\lambda_p^2 n^2}}}}\\
&=
\exp\pr{-\frac{
2 \pr{1 + o\pr{1}} \pr{1-\delta} \pr{\frac{n-s}{n} \,
\frac{n_1 \sigma_1^2 + n_2 \sigma_2^2}{\lambda_p^2 n^2}} \log\pr{p-s}
}{4 \pr{2+\varepsilon} \pr{1 + o\pr{1}} \pr{\frac{n-s}{n}\frac{n_1 \sigma_1^2 + n_2
\sigma_2^2}{\lambda_p^2 n^2}}}}\\
&=
\exp\pr{-\frac{
\pr{1-\delta} \log\pr{p-s}
}{2 \pr{2+\varepsilon}} \pr{1 + o\pr{1}}}\\
&\stackrel{p \rightarrow +\infty}{\longrightarrow} 0.
\end{align*}
Hence, we get:
$$
\Pro\pr{\max_{j \in S^c} \abs{V_j} \leq \lambda_p \,\Big|\, T\pr{\delta}^c}
\stackrel{p \rightarrow +\infty}{\longrightarrow}
0.
$$
Taking the limit in (\ref{eq:totalProbabilityTailBoundOnMaxViNecessary}) and using the fact that $\Pro\prbig{T\pr{\delta}} \stackrel{p \rightarrow +\infty}{\longrightarrow} 0$, we conclude:
$$
\lim_{p \rightarrow +\infty}
\Pro\pr{\max_{j \in S^c} \abs{V_j} \leq \lambda_p}
= 0.
$$
\end{proof}

\subsubsection{Showing that $\Pro\pr{\max_{j \in S^c} \abs{V_j} \leq \lambda_p} \longrightarrow 1$:} \label{section:showingThatVConcentratesBelowLambda}
\begin{proof}[Proof of Proposition \ref{proposition:asymptoticBehaviorOfVwrtLambda}, part (\textit{ii.})].
Let:
$$
T\pr{\delta} \coloneqq \acbig{\abs{M_p - \E\br{M_p}} \geq \delta \E\br{M_p}}.
$$
We have, by total probability:
\begin{equation}\label{eq:totalProbabilityTailBoundOnMaxViSufficient}
\Pro\pr{\max_{j \in S^c} \abs{V_j} > \lambda_p}
\leq
\Pro\pr{\max_{j \in S^c} \abs{V_j} > \lambda_p \,\Big|\, T\pr{\delta}^c} + \Pro\pr{T\pr{\delta}}.
\end{equation}
Conditioning on $X_S$ and $W$:
\begin{align*}
\Pro\pr{\max_{j \in S^c} \abs{V_j} > \lambda_p \,\Big|\, T\pr{\delta}^c}
&=
\E\br{\Pro\pr{\max_{j \in S^c} \abs{V_j} > \lambda_p \,\Big|\, X_S, W} \,\Big|\, T\pr{\delta}^c}
\end{align*}
Note that, conditionally on $(X_S, W)$, we have $V_j \stackrel{\text{i.i.d.}}{\sim} \calN\pr{0, M_p}$, for $j \in S^c$. Using the bound on expectation of Gaussian maxima (see Theorem 5.3.1 in \citep{de2006extreme}):
\begin{align*}
\E\br{\max_{j \in S^c} \abs{V_j} \,\Big|\, X_S, W}
\leq \sqrt{2 \log\prbig{2 \pr{p-s}} M_p}.
\end{align*}
Conditionally on $ T\pr{\delta}^c$, we have:
$$
M_p \leq \pr{1+\delta} \E\br{M_p}.
$$
Hence, conditionally on $ T\pr{\delta}^c$:
\begin{align*}
&\frac{1}{\lambda_p} \E\br{\max_{j \in S^c} \abs{V_j} \,\Big|\, X_S, W}\\
&\leq
\frac{1}{\lambda_p} \sqrt{2 \log\prbig{2 \pr{p-s}} \pr{1+\delta} \E\br{M_p}}
\\
&=
\frac{1}{\lambda_p} \sqrt{2 \log\prbig{2 \pr{p-s}} \pr{1+\delta} \pr{
\frac{\lambda_p^2 s}{n-s-1} + \frac{\pr{n-s}\pr{n_1 \sigma_1^2 + n_2 \sigma_2^2}}{n^3}
}}
\\
&=
\sqrt{2+2 \delta}
\sqrt{
\frac{\log\prbig{2 \pr{p-s}} s}{n-s-1} + \frac{\log\prbig{2 \pr{p-s}} \pr{n-s}\pr{n_1 \sigma_1^2 + n_2 \sigma_2^2}}{\lambda_p^2 n^3}
}
\\
&=
\sqrt{2+2 \delta}\\
& \; \times 
\sqrt{
\frac{s \log{2}}{n-s-1} + \frac{s \log\pr{p-s}}{n-s-1} + \frac{n-s}{n}\pr{\frac{\pr{n_1 \sigma_1^2 + n_2 \sigma_2^2} \log\pr{p-s}}{\lambda_p^2 n^2} + \frac{\pr{n_1 \sigma_1^2 + n_2 \sigma_2^2} \log{2}}{\lambda_p^2 n^2}}
}.
\end{align*}
Taking the limsup as $p \rightarrow +\infty$ and using conditions (\ref{eq:sufficiencyStatement:sampleSizeCondition}) and (\ref{eq:sufficiencyStatement:lassoParameterCondition}) we get, conditionally on $T\pr{\delta}^c$:
\begin{align*}
&\limsup_{p \rightarrow +\infty}
\frac{1}{\lambda_p} \E\br{\max_{j \in S^c} \abs{V_j} \,\Big|\, X_S, W}\\
&\quad \leq
\limsup_{p \rightarrow +\infty}
\sqrt{\pr{1 + \delta} \pr{\frac{2 s \log\pr{p-s}}{\pr{1+\varepsilon} \pr{2 s \log\pr{p-s} + s + 1} - s - 1}} }\\
&\quad \leq
\limsup_{p \rightarrow +\infty}
\sqrt{\pr{1 + \delta} \pr{\frac{2 s \log\pr{p-s}}{\pr{1+\varepsilon} \pr{2 s \log\pr{p-s} + s + 1} - s - 1}} }\\
&\quad \leq
\sqrt{\frac{1+\delta}{1+\varepsilon}}.
\end{align*}
Fix $\delta \coloneqq \varepsilon / 4$. By the above, we know that for large enough $n$, we have:
$$
\E\br{\max_{j \in S^c} \abs{V_j} \,\Big|\, X_S, W}
\leq \lambda_p \sqrt{\frac{1+\varepsilon/2}{1+\varepsilon}}.
$$
For simplicity of notation, set $\upsilon \coloneqq 1 - \sqrt{\frac{1+\varepsilon/2}{1+\varepsilon}} > 0$. In addition, we by Lemma \ref{lemma:concentrationOfGaussianMaxima} on concentration of Gaussian maxima, for all $\eta > 0$:
$$
\Pro\pr{
\max_{j \in S^c} \abs{V_j} > \eta + \E\br{\max_{j \in S^c} \abs{V_j}}
\,\Big|\, X_S, W}
\leq
\exp\pr{-\frac{\eta^2}{2 \pr{1+\delta} \E\br{M_p}}}.
$$
Let $\eta \coloneqq \upsilon \lambda_p$. Then we get, for $n$ large enough:
\begin{align*}
\Pro\pr{
\max_{j \in S^c} \abs{V_j} > \lambda_p
\,\Big|\, X_S, W}
&\leq
\Pro\pr{
\max_{j \in S^c} \abs{V_j} > \eta + \E\br{\max_{j \in S^c} \abs{V_j}}
\,\Big|\, X_S, W}\\
&\leq
\exp\pr{-\frac{\eta^2}{2 \pr{1+\delta} \E\br{M_p}}}\\
&=
\exp\pr{-\frac{\upsilon^2 \lambda_p^2}{2 \pr{1+\varepsilon/2} \pr{\frac{\lambda_p^2 s}{n-s-1} + \frac{\pr{n-s}\pr{n_1 \sigma_1^2 + n_2 \sigma_2^2}}{n^3}}}}\\
&=
\exp\pr{-\frac{\upsilon^2}{2 \pr{1+\varepsilon/2} \pr{\frac{s}{n-s-1} + \frac{n-s}{n}\frac{n_1 \sigma_1^2 + n_2 \sigma_2^2}{\lambda_p^2 n^2}}}}.
\end{align*}
Substituting in the above, we get:
\begin{align*}
\Pro\pr{\max_{j \in S^c} \abs{V_j} > \lambda_p \,\Big|\, T\pr{\delta}^c}
&=
\E\br{\Pro\pr{\max_{j \in S^c} \abs{V_j} > \lambda_p \,\Big|\, X_S, W} \,\Big|\, T\pr{\delta}^c}\\
&\leq
\E\br{\exp\pr{-\frac{\upsilon^2}{2 \pr{1+\varepsilon/2} \pr{\frac{s}{n-s-1} + \frac{n-s}{n}\frac{n_1 \sigma_1^2 + n_2 \sigma_2^2}{\lambda_p^2 n^2}}}} \,\Big|\, T\pr{\delta}^c}\\
&=
\exp\pr{-\frac{\upsilon^2}{2 \pr{1+\varepsilon/2} \pr{\frac{s}{n-s-1} + \frac{n-s}{n}\frac{n_1 \sigma_1^2 + n_2 \sigma_2^2}{\lambda_p^2 n^2}}}}\\
&\stackrel{p \rightarrow +\infty}{\longrightarrow} 0,
\end{align*}
since, by condition (\ref{eq:sufficiencyStatement:lassoParameterCondition}), we have:
$$
0 <
\frac{n_1 \sigma_1^2 + n_2 \sigma_2^2}{\lambda_p^2 n^2}
\leq
\frac{\pr{n_1 \sigma_1^2 + n_2 \sigma_2^2} \log\pr{p-s}}{\lambda_p^2 n^2}
\stackrel{p \rightarrow +\infty}{\longrightarrow} 0.
$$
Taking the limit in (\ref{eq:totalProbabilityTailBoundOnMaxViSufficient}) and using the fact that $\Pro\pr{T\pr{\delta}} \rightarrow 0$, we conclude:
$$
\Pro\pr{\max_{j \in S^c} \abs{V_j} \leq \lambda_p}
\stackrel{p \rightarrow +\infty}{\longrightarrow}
1.
$$
\end{proof}

\subsubsection{Proof of Lemma \ref{lemma:momentsOfVGivenXsW}}\label{proof:proofOflemma:momentsOfVGivenXsW}
\begin{proof}[Proof of Lemma \ref{lemma:momentsOfVGivenXsW}]
Recall from (\ref{equation:firstDefinitionOfVi}) that:
$$
V_j
=
\ac{ X_S \pr{X_S^T X_S}^{-1} \lambda_p \Vec{b} - \br{ X_S \pr{X_S^T X_S}^{-1} X_S^T - I_{n}} \frac{\Sigma W}{n}}^T X_{j},
$$
for all $j \in S^c$. Conditionally on $X_S$ and $W$, the first term of the RHS above is constant and $\pr{X_{j}}_{j \in S^c} \stackrel{\text{i.i.d.}}{\sim} \calN\pr{0, I_n}$. Therefore:
$$
V_j \, | \, X_S, W
\sim
\calN\pr{0, A^T A},
$$
where:
$$
A = \ac{ X_S \pr{X_S^T X_S}^{-1} \lambda_p \Vec{b} - \br{ X_S \pr{X_S^T X_S}^{-1} X_S^T - I_{n}} \frac{\Sigma W}{n}} \in \R^{n}.
$$
Expanding the expression of $A^T A$, we get:
$$
A^T A
=
\lambda_p^2 \Vec{b}^T \pr{X_S^T X_S}^{-1} \Vec{b} + \frac{1}{n^2} W^T \Sigma^T \br{I_{n} - X_S \pr{X_S^T X_S}^{-1} X_S^T}^T \Sigma W.
$$
In addition, we have:
$$
\Cov\pr{X_{j_1}, X_{j_2}} = \delta_{j_1 j_2} I_{n},
$$
therefore:
$$
\Cov\pr{V_{j_1}, V_{j_2}} = \delta_{j_1 j_2} A^T A.
$$
We conclude:
$$
V \, | \, X_S, W
\sim
\calN\pr{0, M_p I_{\abs{S^c}}},
$$
where $M_p \coloneqq A^T A = \norm{A}_2^2$.
\end{proof}

\subsubsection{Proof of Lemma \ref{lemma:expectationAndConcentrationOfMn}}\label{proof:proofOflemma:expectationAndConcentrationOfMn}
\begin{proof}[Proof of Lemma \ref{lemma:expectationAndConcentrationOfMn}]
Recall from Lemma \ref{lemma:momentsOfVGivenXsW} that:
$$
M_p = \lambda_p^2 \Vec{b}^T \pr{X_S^T X_S}^{-1} \Vec{b} + \frac{1}{n^2} W^T \Sigma \br{I_n - X_S \pr{X_S^T X_S}^{-1} X_S^T} \Sigma W.
$$
By expectation of inverse Wishart matrices, we have:
$$
\E\br{\lambda_p^2 \Vec{b}^T \pr{X_S^T X_S}^{-1} \Vec{b}}
= \frac{\lambda_p^2}{n-s-1} \norm{b}_2^2.
$$
By Gram-Schmidt decomposition of $X_S$ \citep{meckes2019random}, we write:
\begin{equation}\label{equation:QRdecomposition}
X_S = Q R \; \in \R^{n \times s},
\end{equation}
with $R_{ii} > 0$ and $Q^T Q = I_{s}$, where $R \in \R^{s \times s}$ is upper triangular (hence invertible) $Q \in \R^{n \times s}$ corresponds to $s$ columns of a $n \times n$ matrix of Haar distribution over the orthogonal group $O\pr{n}$, which we define as follows:
$$
U = \begin{bmatrix}
    P & Q
\end{bmatrix} \sim \text{Haar on } O\pr{n}.
$$
Then we have:
\begin{align*}
X_S \pr{X_S^T X_S}^{-1} X_S^T
&= QR\pr{R^TQ^TQR}^{-1}R^TQ^T\\
&= QR\pr{R^TR}^{-1}R^TQ^T\\
&= QR R^{-1} \pr{R^{T}}^{-1} R^TQ^T\\
&= QQ^T.
\end{align*}
Therefore:
$$
I_n - X_S \pr{X_S^T X_S}^{-1} X_S^T
= I_n - QQ^T
= PP^T
= 
\begin{bmatrix}
    P & Q
\end{bmatrix}
\begin{bmatrix}
    I_{n-s} &0_{\pr{n-s}\times s}\\
    0_{s \times \pr{n-s}} &0_{s \times s}
\end{bmatrix}
\begin{bmatrix}
    P\\ Q
\end{bmatrix}
= U D U^T,
$$
where:
$$
D =
\begin{bmatrix}
    I_{n-s} &0_{\pr{n-s}\times s}\\
    0_{s \times \pr{n-s}} &0_{s \times s}
\end{bmatrix}.
$$
Note that unlike $U$, $D$ is deterministic. Since $W \sim \calN\pr{0, I_n}$, we have:
\begin{align*}
\E\br{\frac{1}{n^2} W^T \Sigma \br{I_n - X_S \pr{X_S^T X_S}^{-1} X_S^T} \Sigma W \; \Big| \; X_S}
&= \frac{1}{n^2} \, \text{tr} \ac{\Sigma \br{I_n - X_S \pr{X_S^T X_S}^{-1} X_S^T} \Sigma}\\
&= \frac{1}{n^2} \, \text{tr} \pr{\Sigma^T U D U^T \Sigma}.
\end{align*}
Hence, by total expectation:
\begin{align*}
\E\br{\frac{1}{n^2} W^T \Sigma \br{I_n - X_S \pr{X_S^T X_S}^{-1} X_S^T} \Sigma W}
&= \E_{X_S}\br{\frac{1}{n^2} \, \text{tr} \pr{\Sigma^T U D U^T \Sigma}}\\
&= \frac{1}{n^2} \,\text{tr} 
\,\prBig{\E_{X_S}\br{\Sigma^T U D U^T \Sigma}}\\
&= \frac{1}{n^2} \,\text{tr} \,\prBig{\Sigma^T \E_{U}\br{U D U^T} \Sigma}.
\end{align*}
By properties of the Haar distribution (see Example 1.8 of \citep{gu2013moments}), we have:
$$
\E_{U \sim \text{Haar}\pr{n}}\br{U^T D U} = \frac{\text{tr}\pr{D}}{n} \, I_n.
$$
Substituting in the above, we get:
\begin{align*}
\E\br{\frac{1}{n^2} W^T \Sigma \br{I_n - X_S \pr{X_S^T X_S}^{-1} X_S^T} \Sigma W}
&= \frac{1}{n^2} \,\text{tr} \,\prBig{\Sigma \E_{U}\br{U D U^T} \Sigma}\\
&= \frac{1}{n^2} \,\text{tr} \,\pr{\frac{\text{tr}\pr{D}}{n} \Sigma \, I_n \Sigma}\\
&= \frac{\text{tr}\pr{D}}{n^3} \, \text{tr} \,\pr{\Sigma^T \, I_n \Sigma}\\
&= \frac{n-s}{n^3} \, \text{tr} \,\pr{\Sigma^T \Sigma}\\
&= \frac{\pr{n-s}\pr{n_1 \sigma_1^2 + n_2 \sigma_2^2}}{n^3}.
\end{align*}
Hence, we conclude:
$$
\E\br{M_p}
= \frac{\lambda_p^2}{n-s-1} \norm{b}_2^2 + \frac{\pr{n-s}\pr{n_1 \sigma_1^2 + n_2 \sigma_2^2}}{n^3}.
$$
We now compute $\var\pr{M_p}$. For simplicity, we write $\Lambda = I_n - X_S \pr{X_S^T X_S}^{-1} X_S^T = U D U^T$, so that:
$$
M_p = \lambda_p^2 \Vec{b}^T \pr{X_S^T X_S}^{-1} \Vec{b} + \frac{1}{n^2} W^T \Sigma \Lambda \Sigma W.
$$
Therefore:
$$
M_p^2 = \lambda_p^4 \br{\Vec{b}^T \pr{X_S^T X_S}^{-1} \Vec{b}}^2 + \frac{1}{n^4} \pr{W^T \Sigma \Lambda \Sigma W}^2
+ 2 \, \frac{\lambda_p^2}{n^2} \br{\Vec{b}^T \pr{X_S^T X_S}^{-1} \Vec{b}} \pr{W^T \Sigma \Lambda \Sigma W}.
$$
Let:
\begin{align*}
T_1 \coloneqq \br{\Vec{b}^T \pr{X_S^T X_S}^{-1} \Vec{b}}^2,
\quad T_2 \coloneqq \pr{W^T \Sigma \Lambda \Sigma W}^2,
\quad T_3 \coloneqq \br{\Vec{b}^T \pr{X_S^T X_S}^{-1} \Vec{b}} \pr{W^T \Sigma \Lambda \Sigma W},
\end{align*}
so that:
$$
M_p^2 = \lambda_p^4 T_1 + \frac{1}{n^4} T_2
+ 2 \, \frac{\lambda_p^2}{n^2} T_3.
$$
We start by computing $\E\br{T_1}$. Recall that $X_S^TX_S \sim \calW_{s}\pr{I_s, n}$. We use the following result for second moments of inverse Wishart matrices from \citep{siskind1972second}.
\begin{lemma}[Second moment of inverse Wishart matrices, \citep{siskind1972second}]\label{lemma:siskindSecondMoment}
Let $a, b \in \N$ with $a >b +3$. Let $t \in \R^s$ and $M \in R^{s \times s}$ and $A \sim \calW_{a}\pr{T, b}$. Then:
$$
\pr{b-a}\pr{b-a-3} \E\br{A^{-1} t t^T A^{-1}} = T^{-1} t t^T T^{-1} + T^{-1} \pr{t^T T^{-1} t} / \pr{b-a-1}.
$$
\end{lemma}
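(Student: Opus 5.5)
The plan is to read the statement with $a$ the dimension and $b$ the degrees of freedom. Thus $A\in\R^{a\times a}$, $A\sim\calW_a(T,b)$ with $T\succ 0$, $t\in\R^{a}$, and $m\coloneqq b-a>3$ so that all moments below are finite; the hypothesis ``$a>b+3$'' is evidently meant as $b>a+3$. The identity to prove then reads
\[
m(m-3)\,\E\br{A^{-1}tt^TA^{-1}}=T^{-1}tt^TT^{-1}+\frac{t^TT^{-1}t}{m-1}\,T^{-1}.
\]
As a sanity check at $a=1$, write $A=T\chi^2_b$. Then $\E[\chi_b^{-4}]=1/((b-2)(b-4))$, and the right-hand side divided by $m(m-3)=(b-1)(b-4)$ gives exactly $(t^2/T^2)\cdot\frac{b-1}{b-2}\cdot\frac{1}{(b-1)(b-4)}$. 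So the claimed exact identity is consistent in this case.

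\textbf{Step 1: reduction to $T=I$ and $t=e_1$.} Write $A=T^{1/2}BT^{1/2}$ with $B\sim\calW_a(I,b)$. Then
\[
A^{-1}tt^TA^{-1}=T^{-1/2}B^{-1}uu^TB^{-1}T^{-1/2},\qquad u\coloneqq T^{-1/2}t,
\]
and $\norm{u}_2^2=t^TT^{-1}t$. It therefore suffices to show
\[
\E\br{B^{-1}uu^TB^{-1}}=\frac{uu^T}{m(m-3)}+\frac{\norm{u}_2^2\,I}{m(m-1)(m-3)}.
\]
Conjugating back by $T^{-1/2}$ then yields the statement. By homogeneity we take $\norm{u}_2=1$. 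By orthogonal invariance ($OBO^T\stackrel{d}{=}B$) we take $u=e_1$. In this case we need $\E[CC^T]$ where $C\coloneqq B^{-1}e_1$ is the first column of $B^{-1}$.

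\textbf{Step 2: block computation.} Partition $B$ with $B_{11}$ scalar and $B_{22}\in\R^{(a-1)\times(a-1)}$. Let $B_{11\cdot2}=B_{11}-B_{12}B_{22}^{-1}B_{21}$. I will invoke the standard Wishart facts:
\begin{itemize}[leftmargin=*]
    \item $B_{11\cdot 2}\sim\chi^2_{b-a+1}=\chi^2_{m+1}$, and it is independent of $(B_{21},B_{22})$;
    \item $B_{22}\sim\calW_{a-1}(I,b)$;
    \item $B_{21}\mid B_{22}\sim\calN(0,B_{22})$.
\end{itemize}
The block inverse formula gives $C_1=1/B_{11\cdot2}$ and $C_{2:a}=-C_1\,B_{22}^{-1}B_{21}$. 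From these we read off three quantities.
\begin{itemize}[leftmargin=*]
    \item First, $\E[C_1^2]=\E[\chi^{-4}_{m+1}]=1/((m-1)(m-3))$.
    \item Second, $\E[C_1C_{2:a}]=0$, because $B_{21}\mid B_{22}$ is centered and independent of $C_1$.
    \item Third, using $B_{22}^{-1}B_{21}\mid B_{22}\sim\calN(0,B_{22}^{-1})$ and the inverse-Wishart mean $\E[B_{22}^{-1}]=I_{a-1}/(b-(a-1)-1)=I_{a-1}/m$, we get $\E[C_{2:a}C_{2:a}^T]=\E[C_1^2]\,\E[B_{22}^{-1}]=I_{a-1}/(m(m-1)(m-3))$.
\end{itemize}

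\textbf{Step 3: assembly.} Write $\E[CC^T]=\alpha e_1e_1^T+\beta I$. Step 2 gives $\beta=1/(m(m-1)(m-3))$ from the lower block. From the $(1,1)$ entry we get
\[
\alpha=\frac{1}{(m-1)(m-3)}-\beta=\frac{m-1}{m(m-1)(m-3)}=\frac{1}{m(m-3)}.
\]
These are exactly the two coefficients required in Step 1.

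The main obstacle is only bookkeeping. One must check the degrees of freedom of the Schur complement ($m+1$, not $m$) and of the inverse-Wishart mean of $B_{22}$ ($m$, since its dimension is $a-1$). One must also make sure the moment conditions are met: $m+1>4$ for $\E[\chi^{-4}]$ to be finite, and $m>0$ for $\E[B_{22}^{-1}]$. An off-by-one error in either place would produce the superficially similar but different coefficient $(m-2)/(m(m-1)(m-3))$ on the $uu^T$ term. The $a=1$ check above guards against exactly that error.
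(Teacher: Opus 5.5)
Your proof is correct, and it takes a different route from the paper. The paper gives no derivation of this lemma; it simply imports the identity from Siskind (1972).

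You reduce to $T=I$, $t=e_1$ via $A=T^{1/2}BT^{1/2}$ and orthogonal invariance. You then compute $\E[CC^T]$, for $C$ the first column of $B^{-1}$, from the Schur-complement structure of the Wishart law. The degrees of freedom are right:
\begin{itemize}[leftmargin=*]
\item $B_{11\cdot 2}\sim\chi^2_{m+1}$, independent of $(B_{21},B_{22})$;
\item $\E[B_{22}^{-1}]=I_{a-1}/m$, because $B_{22}$ has dimension $a-1$.
\end{itemize}
The resulting $\alpha=1/(m(m-3))$ and $\beta=1/(m(m-1)(m-3))$ agree with what one gets by contracting the classical covariance formula for inverse-Wishart entries.

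What your approach buys is a self-contained, elementary check. It needs only $\E[\chi^{-4}_{\nu}]=1/((\nu-2)(\nu-4))$ and the first inverse-Wishart moment. It also fixes the correct reading of the statement, which has typos as printed:
\begin{itemize}[leftmargin=*]
\item $a$ is the dimension and $b$ the degrees of freedom;
\item the hypothesis must be $b>a+3$, which is exactly $m>3$, the condition for $\E[\chi^{-4}_{m+1}]<\infty$;
\item $t\in\R^a$, and $M$ plays no role.
\end{itemize}
This is also the reading the paper actually uses when it applies the lemma with $a=s$, $b=n$, $T=I_s$. The citation buys brevity; your derivation makes the constants and the moment condition explicit and checkable.

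One small caveat concerns your closing remark. At $a=1$ there is no $B_{22}$ block, so the $a=1$ sanity check only tests $\alpha+\beta$. It cannot detect an error in $\E[B_{22}^{-1}]$, which would only shift weight between $\alpha$ and $\beta$. For instance, using $I/(m-1)$ instead gives $\alpha=(m-2)/((m-1)^2(m-3))$. This does not affect the proof, since you computed that mean correctly.
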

Setting $b \coloneqq n$; $a \coloneqq s$; $t \coloneqq \Vec{b}$; $T \coloneqq I_s$ and $A \coloneqq X_S^T X_S$ in Lemma \ref{lemma:siskindSecondMoment}, we get:
\begin{equation}\label{equation:expectationOfXSTXSinvbbTXSTXSinvbySiskind}
\pr{n-s}\pr{n-s-3} \E\br{\pr{X_S^T X_S}^{-1} \Vec{b} \Vec{b}^T \pr{X_S^T X_S}^{-1}}
= \Vec{b} \Vec{b}^T + \norm{\Vec{b}}_2^2 I_s / \pr{n-s-1}.
\end{equation}
Multiplying by the LHS by $\Vec{b}^T$ on the left and by $\Vec{b}$ on right we get:
\begin{align*}
\E\br{T_1}
&= \frac{1}{\pr{n-s}\pr{n-s-3}} \pr{1 + \frac{1}{n-s-1}} \pr{\Vec{b}^T \Vec{b}}^2.
\end{align*}
Hence:
\begin{equation}\label{equation:expressionOfT1}
\E\br{T_1}
= \frac{1}{\pr{n-s}\pr{n-s-3}} \pr{1 + \frac{1}{n-s-1}} \norm{\Vec{b}}_2^4.
\end{equation}
Now let us compute $\E\br{T_2}$. Since $W \sim \calN\pr{0, I_n}$, we have by moments of the multivariate normal distribution (see Theorem 5.2a and Theorem 5.2b in \citep{rencher2008linear}):
$$
\E\br{T_2 \, | \, X_S} = 2 \tr\br{\pr{\Sigma \Lambda \Sigma}^2} + \brbig{\tr\pr{\Sigma \Lambda \Sigma}}^2.
$$

\begin{lemma}\label{lemma:expectationOfTermsOfE[T2|X_S]}
    We have:
\begin{align*}
&\E\ac{\brbig{\tr\pr{\Sigma \Lambda \Sigma}}^2}\\
&\quad =
\pr{n_1 \sigma_1^2 + n_2 \sigma_2^2}^2 \ac{\frac{\pr{n+1}\pr{n-s}}{\pr{n-1}n\pr{n+2}}} \ac{ n-s-\frac{2}{n+1}}\\
&\qquad + \pr{n_1 \sigma_1^4 + n_2 \sigma_2^4} \ac{\frac{\pr{n-s}\pr{n-s+2}}{n \pr{n+2}} - \frac{\pr{n+1}\pr{n-s}\pr{n-s-\frac{2}{\pr{n+1}}}}{\pr{n-1}n\pr{n+2}}}.
\end{align*}
    and:
\begin{align*}
\E\ac{\tr\br{\pr{\Sigma \Lambda \Sigma}^2}}
&=
\pr{n_1 \sigma_1^2 + n_2 \sigma_2^2}^2 \ac{\frac{s\pr{n-s}}{\pr{n-1} n \pr{n+2}}}\\
&\quad + \pr{n_1 \sigma_1^4 + n_2 \sigma_2^4} \ac{\frac{n-s}{n\pr{n+2}}}\ac{n - s + 1 + \frac{n-s-1}{n-1}}.
\end{align*}
\end{lemma}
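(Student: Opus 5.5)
We write $\Lambda = UDU^T$ with $U$ Haar on $O(n)$ and $D=\mathrm{diag}(I_{n-s},0_s)$, as in the QR decomposition (\ref{equation:QRdecomposition}). Since $\Sigma$ is diagonal with entries $d_i\in\{\sigma_1,\sigma_2\}$, both quantities reduce to polynomials in the entries of $\Lambda$. Indeed, $\tr(\Sigma\Lambda\Sigma)=\sum_i d_i^2\Lambda_{ii}$. Moreover, $\tr[(\Sigma\Lambda\Sigma)^2]=\sum_{i,j} d_i^2 d_j^2 \Lambda_{ij}^2$, using symmetry of $\Lambda$. Hence
\[
\E\big[\tr(\Sigma\Lambda\Sigma)\big]^2=\sum_{i}d_i^4\,\E[\Lambda_{ii}^2]+\sum_{i\ne j}d_i^2d_j^2\,\E[\Lambda_{ii}\Lambda_{jj}],
\qquad
\E\,\tr[(\Sigma\Lambda\Sigma)^2]=\sum_i d_i^4\,\E[\Lambda_{ii}^2]+\sum_{i\ne j}d_i^2d_j^2\,\E[\Lambda_{ij}^2].
\]
By invariance of Haar measure under permutations, $\Lambda$ is exchangeable. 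So $\E[\Lambda_{ii}^2]=:a$, $\E[\Lambda_{ii}\Lambda_{jj}]=:b$ and $\E[\Lambda_{ij}^2]=:c$ do not depend on the indices $i\ne j$. Using $\sum_{i\ne j}d_i^2d_j^2=(n_1\sigma_1^2+n_2\sigma_2^2)^2-(n_1\sigma_1^4+n_2\sigma_2^4)$, we get
\[
\E[\tr(\Sigma\Lambda\Sigma)]^2=(n_1\sigma_1^2+n_2\sigma_2^2)^2\, b+(n_1\sigma_1^4+n_2\sigma_2^4)(a-b),
\]
and analogously with $c$ in place of $b$. This matches the structure of both claimed formulas, so it remains to compute $a,b,c$.

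To get $a,b,c$ we avoid the general Weingarten calculus (Lemma \ref{lemma:fourthOrderFormulaFromMeckes}) where possible and use the projection structure. $\Lambda$ is the orthogonal projection onto a uniformly random $(n-s)$-dimensional subspace, so $\Lambda_{ii}=\|P^Te_i\|^2$. For a uniform unit vector $u$, $\|P^T e_1\|^2\stackrel{d}{=}\sum_{k\le n-s}u_k^2\sim\mathrm{Beta}\big(\tfrac{n-s}{2},\tfrac{s}{2}\big)$. This gives
\[
a=\frac{(n-s)(n-s+2)}{n(n+2)}.
\]
Two exact constraints then pin down the remaining constants. First, $\tr\Lambda=n-s$ deterministically, so $\E(\tr\Lambda)^2=(n-s)^2=na+n(n-1)b$, which yields $b$. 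Second, $\Lambda^2=\Lambda$ gives $\sum_j\Lambda_{ij}^2=\Lambda_{ii}$, hence $a+(n-1)c=\E\Lambda_{ii}=(n-s)/n$, which yields $c$.

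The remaining work is to simplify these into the stated forms and check agreement:
\[
b=\frac{(n-s)^2-na}{n(n-1)},
\]
to be rewritten as $\frac{(n+1)(n-s)}{(n-1)n(n+2)}\big(n-s-\frac{2}{n+1}\big)$, and
\[
c=\frac{(n-s)/n-a}{n-1}=\frac{s(n-s)}{(n-1)n(n+2)}.
\]
For the second formula, $a-c$ should be checked against $\frac{n-s}{n(n+2)}\big(n-s+1+\frac{n-s-1}{n-1}\big)$. A quick check: $\frac{(n-s)}{n(n+2)}\big[(n-s+2)-\frac{s}{n-1}\big]$ equals the stated expression, since $n-s+2-\frac{s}{n-1}=n-s+1+\frac{n-1-s}{n-1}$. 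The first formula's $a-b$ coefficient is literally the stated one.

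The main obstacle is justifying the distributional facts rigorously: the Beta law of $\Lambda_{ii}$ (equivalently, Haar invariance of the range of $X_S$) and exchangeability. If one prefers to stay within the paper's tools, the fallback is to compute $a$ directly from the fourth-moment Haar formula, Lemma \ref{lemma:fourthOrderFormulaFromMeckes}, applied to $\E[U_{ik}^2U_{il}^2]$ summed over $k,l\le n-s$. After that, everything is algebra.
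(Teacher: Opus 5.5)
Your proposal is correct, and it proves the lemma by a different route from the paper's.

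\textbf{What the paper does.} It expands both traces entrywise in $U$, $D$, $\Sigma$. It then evaluates five distinct fourth moments, $\E[U_{ac}^4]$, $\E[U_{ac}^2U_{ad}^2]$, $\E[U_{ac}^2U_{bc}^2]$, $\E[U_{ac}^2U_{bd}^2]$ and $\E[U_{ac}U_{ec}U_{eh}U_{ah}]$, from the Weingarten formula of Lemma~\ref{lemma:fourthOrderFormulaFromMeckes}. Finally it resums over index coincidences to get expressions in $\tr D$, $\tr D^2$, $\tr\Sigma^2$ and $\tr\Sigma^4$.

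\textbf{What you do differently.} You work directly with the entries of $\Lambda$. Conjugation invariance collapses everything to the three constants $a,b,c$. Only $a$ needs a genuine distributional computation; $b$ and $c$ follow from the deterministic identities $\tr\Lambda=n-s$ and $\Lambda^2=\Lambda$. Your algebra checks out: $b=\frac{(n-s)[(n+1)(n-s)-2]}{(n-1)n(n+2)}$ and $c=\frac{s(n-s)}{(n-1)n(n+2)}$. The four quantities $b$, $a-b$, $c$, $a-c$ coincide with the stated coefficients, and with the paper's intermediate formulas at $\tr D=\tr D^2=n-s$.

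\textbf{What each route buys.} Yours is shorter and avoids the index-case bookkeeping, where slips are easy. It also explains structurally why only $(\tr\Sigma^2)^2$ and $\tr\Sigma^4$ can appear. The paper's route is more mechanical, but the table of Haar fourth moments it builds is reused later, for $\E[(Y_i')^2\mid R]$ in the proof of Lemma~\ref{lemma:concentrationOfYiAndYi'}.

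\textbf{The step you flag as the main obstacle is routine.} For every orthogonal $O$ we have $OX_S\stackrel{d}{=}X_S$ and $\Lambda(OX_S)=O\Lambda(X_S)O^T$, so the law of $\Lambda$ is conjugation invariant. Taking $O$ a permutation matrix gives exchangeability. It also gives $\Lambda_{11}\stackrel{d}{=}u^TDu$ with $u$ uniform on the sphere. Writing $u=g/\|g\|$ with $g\sim\calN(0,I_n)$ and using independence of $u$ and $\|g\|$, you get $a=\E[(g^TDg)^2]/\E\|g\|^4=\frac{2(n-s)+(n-s)^2}{n(n+2)}$ without invoking Beta-distribution facts.

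\textbf{A small notational fix.} Write $\E[(\tr(\Sigma\Lambda\Sigma))^2]$ rather than $\E[\tr(\Sigma\Lambda\Sigma)]^2$, which reads as the squared mean.
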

\begin{proof}
    See appendix \ref{proofOflemma:expectationOfTermsOfE[T2|X_S]}.
\end{proof}

Substituting in the expression of $\E\br{T_2 \, | \, X_S}$ and using total expectation, we get:
\begin{align*}
&\E\br{T_2}\\
&= \E\br{ \E\br{T_2 \, | \, X_S} }\\
&= \E\br{2 \tr\br{\pr{\Sigma \Lambda \Sigma}^2} + \brbig{\tr\pr{\Sigma \Lambda \Sigma}}^2 }\\
&= 2 \, \E\br{\tr\br{\pr{\Sigma \Lambda \Sigma}^2}} + \E\br{\brbig{\tr\pr{\Sigma \Lambda \Sigma}}^2 }\\
&=
\pr{n_1 \sigma_1^2 + n_2 \sigma_2^2}^2 \ac{\frac{2 s\pr{n-s}}{\pr{n-1} n \pr{n+2}}}\\
&\quad + \pr{n_1 \sigma_1^4 + n_2 \sigma_2^4} \ac{\frac{2\pr{n-s}}{n\pr{n+2}}}\ac{n - s + 1 + \frac{n-s-1}{n-1}}\\
&\quad +
\pr{n_1 \sigma_1^2 + n_2 \sigma_2^2}^2 \ac{\frac{\pr{n+1}\pr{n-s}}{\pr{n-1}n\pr{n+2}}} \ac{ n-s-\frac{2}{n+1}}\\
&\quad + \pr{n_1 \sigma_1^4 + n_2 \sigma_2^4} \ac{\frac{\pr{n-s}\pr{n-s+2}}{n \pr{n+2}} - \frac{\pr{n+1}\pr{n-s}\pr{n-s-\frac{2}{\pr{n+1}}}}{\pr{n-1}n\pr{n+2}}}\\
&=
\pr{n_1 \sigma_1^2 + n_2 \sigma_2^2}^2 
\ac{\frac{n-s}{\pr{n-1}n\pr{n+2}}}
\ac{2 s + \pr{n+1}\pr{n-s-\frac{2}{n+1}}}
\\
&\quad
+ \pr{n_1 \sigma_1^4 + n_2 \sigma_2^4} 
\ac{\frac{n-s}{n\pr{n+2}}}\\
&\qquad \times
\ac{2n - 2s + 2 + \frac{2 \pr{n-s-1}}{n-1} + n-s+2 - \frac{\pr{n+1}\pr{n-s-\frac{2}{\pr{n+1}}}}{n-1}}\\
&=
\pr{n_1 \sigma_1^2 + n_2 \sigma_2^2}^2 
\ac{\frac{n-s}{\pr{n-1}n\pr{n+2}}}
\ac{2 s + \pr{n+1}\pr{n-s-\frac{2}{n+1}}}
\\
&\quad
+ \pr{n_1 \sigma_1^4 + n_2 \sigma_2^4} 
\ac{\frac{n-s}{n\pr{n+2}}}\\
&\qquad \times \ac{3n - 3s + 4 + \frac{2 \pr{n-s-1}}{n-1} - \frac{\pr{n+1}\pr{n-s-\frac{2}{\pr{n+1}}}}{n-1}}.
\end{align*}
Therefore:
\begin{align}
\E\br{T_2}
&=
\pr{n_1 \sigma_1^2 + n_2 \sigma_2^2}^2 
\ac{\frac{n-s}{\pr{n-1}n\pr{n+2}}}
\ac{2 s + \pr{n+1}\pr{n-s-\frac{2}{n+1}}}
\notag \\
&\qquad
+ \pr{n_1 \sigma_1^4 + n_2 \sigma_2^4} 
\ac{\frac{n-s}{n\pr{n+2}}} \notag \\
&\qquad \quad \times 
\ac{3n - 3s + 4 + \frac{2 \pr{n-s-1}}{n-1} - \frac{\pr{n+1}\pr{n-s-\frac{2}{\pr{n+1}}}}{n-1}}. \label{equation:expressionOfT2}
\end{align}

Finally, we compute $\E\br{T_3}$. We have:
\begin{align*}
\E\br{T_3 \,|\, X_S}
&= \E\ac{\br{\Vec{b}^T \pr{X_S^T X_S}^{-1} \Vec{b}} \pr{W^T \Sigma \Lambda \Sigma W} \,\Big|\, X_S }\\
&= \br{\Vec{b}^T \pr{X_S^T X_S}^{-1} \Vec{b}} \E\ac{W^T \Sigma \Lambda \Sigma W \,\Big|\, X_S }\\
&= \br{\Vec{b}^T \pr{X_S^T X_S}^{-1} \Vec{b}} \tr\prBig{\Sigma \Lambda \Sigma}.
\end{align*}
Recall from (\ref{equation:QRdecomposition}) that:
$$
X_S = Q R,
$$
where $Q \in \R^{n \times s}$ and $R \in \R^{s \times s}$ are independent, $R$ is upper triangular with $R_{ii} > 0$ and $Q^T Q = I_s$. Therefore, by total expectation:
\begin{align*}
\E\br{T_3}
&= \E\brBig{\E\br{T_3 \,|\, X_S}}\\
&= \E\ac{\br{\Vec{b}^T \pr{X_S^T X_S}^{-1} \Vec{b}} \tr\pr{\Sigma \Lambda \Sigma}}\\
&= \E\ac{\br{\Vec{b}^T \pr{X_S^T X_S}^{-1} \Vec{b}} \tr\br{\Sigma \pr{I_n - X_S \pr{X_S^T X_S}^{-1} X_S^T} \Sigma}}\\
&= \E\ac{\br{\Vec{b}^T \pr{R^T Q^T Q R}^{-1} \Vec{b}} \tr\br{\Sigma \pr{I_n - Q R \pr{R^T Q^T Q R}^{-1} R^T Q^T} \Sigma}}\\
&= \E\ac{\br{\Vec{b}^T \pr{R^T R}^{-1} \Vec{b}} \tr\br{\Sigma \pr{I_n - Q R \pr{R^T R}^{-1} R^T Q^T} \Sigma}}\\
&= \E\ac{\br{\Vec{b}^T \pr{R^T R}^{-1} \Vec{b}} \tr\br{\Sigma \pr{I_n - Q R R^{-1} \pr{R^T}^{-1} R^T Q^T} \Sigma}}\\
&= \E\ac{\br{\Vec{b}^T \pr{R^T R}^{-1} \Vec{b}} \tr\brbig{\Sigma \pr{I_n - Q Q^T} \Sigma}}\\
&= \E\ac{\br{\Vec{b}^T \pr{R^T R}^{-1} \Vec{b}}} \E \ac{\tr\brbig{\Sigma \pr{I_n - Q Q^T} \Sigma}}\\
&= \E\ac{\br{\Vec{b}^T \pr{R^T R}^{-1} \Vec{b}}} \E\ac{\tr\brbig{\Sigma \pr{I_n - Q Q^T} \Sigma}}\\
&= \E\ac{\br{\Vec{b}^T \pr{X_S^T X_S}^{-1} \Vec{b}}} \E\ac{\tr\prBig{\Sigma U D U^T \Sigma}}.
\end{align*}
By expectation of inverse Wishart matrices \citep{anderson1958introduction}, we have:
$$
\E\ac{\br{\Vec{b}^T \pr{X_S^T X_S}^{-1} \Vec{b}}}
=
\frac{1}{n-s-1} \norm{\Vec{b}}_2^2.
$$
Using this in the above expression of $\E\br{T_3}$ yields:
\begin{align*}
\E\br{T_3}
&= \E\ac{\br{\Vec{b}^T \pr{X_S^T X_S}^{-1} \Vec{b}}} \E\ac{\tr\prBig{\Sigma U D U^T \Sigma}}\\
&= \frac{1}{n-s-1} \norm{\Vec{b}}_2^2 \tr\ac{\E\brBig{\Sigma U D U^T \Sigma}}\\
&= \frac{1}{n-s-1} \norm{\Vec{b}}_2^2 \tr\acBig{\Sigma \E\brbig{U D U^T} \Sigma}\\
&= \frac{1}{n-s-1} \norm{\Vec{b}}_2^2 \tr\ac{\Sigma \pr{\frac{\tr\pr{D}}{n} \, I_n} \Sigma}\\
&= \frac{\tr\pr{D}}{n \pr{n-s-1}} \norm{\Vec{b}}_2^2 \tr\prbig{\Sigma^2}\\
&= \frac{\pr{n-s} \pr{n_1 \sigma_1^2 + n_2 \sigma_2^2}}{n \pr{n-s-1}} \norm{\Vec{b}}_2^2.
\end{align*}
Therefore:
\begin{equation}\label{equation:expressionOfT3}
    \E\br{T_3}
    =
    \frac{\pr{n-s} \pr{n_1 \sigma_1^2 + n_2 \sigma_2^2}}{n \pr{n-s-1}} \norm{\Vec{b}}_2^2.
\end{equation}
Now we have:
\begin{align*}
&\E\br{M_p^2} - \pr{\E\br{M_p}}^2\\
&= \lambda_p^4 \E\br{T_1} + \frac{1}{n^4} \E\br{T_2} + \frac{2 \lambda_p^2}{n^2} \E\br{T_3} - \pr{\frac{\lambda_p^2 s}{n-s-1} + \frac{\pr{n-s}\pr{n_1 \sigma_1^2 + n_2 \sigma_2^2}}{n^3}}^2\\
&=
\lambda_p^4 \pr{\E\br{T_1} - \frac{s^2}{\pr{n-s-1}^2}}
+ \pr{\frac{\E\br{T_2}}{n^4} - \frac{\pr{n-s}^2\pr{n_1 \sigma_1^2 + n_2 \sigma_2^2}^2}{n^6}}\\
&\quad
+ \frac{2 \lambda_p^2}{n^3} \pr{\E\br{T_3}n - \frac{s \pr{n-s}\pr{n_1 \sigma_1^2 + n_2 \sigma_2^2}}{n-s-1}}.
\end{align*}
Let:
$$
H_1 \coloneqq \lambda_p^4 \pr{\E\br{T_1} - \frac{s^2}{\pr{n-s-1}^2}},
$$
$$
H_2
\coloneqq \frac{\E\br{T_2}}{n^4} - \frac{\pr{n-s}^2\pr{n_1 \sigma_1^2 + n_2 \sigma_2^2}^2}{n^6},
$$
and:
$$
H_3
\coloneqq \frac{2 \lambda_p^2}{n^3}\ac{\E\br{T_3}n - \frac{s \pr{n-s}\pr{n_1 \sigma_1^2 + n_2 \sigma_2^2}}{n-s-1}}.
$$
Then we have, using (\ref{equation:expressionOfT1}):
\begin{align*}
\frac{H_1}{\pr{\E\br{M_p}}^2}
&= \frac{\lambda_p^4 \pr{\E\br{T_1} - \frac{s^2}{\pr{n-s-1}^2}}}{\pr{\frac{\lambda_p^2 s}{n-s-1} + \frac{\pr{n-s}\pr{n_1 \sigma_1^2 + n_2 \sigma_2^2}}{n^3}}^2}\\
&\leq
\frac{\lambda_p^4 s^2 \pr{
\frac{1}{\pr{n-s}\pr{n-s-3}} \pr{1 + \frac{1}{n-s-1}} - \frac{1}{\pr{n-s-1}^2}
}}
{\frac{\lambda_p^4 s^2}{\pr{n-s-1}^2}}\\
&=
\pr{n-s-1}^2 \pr{
\frac{1}{\pr{n-s}\pr{n-s-3}} \pr{1 + \frac{1}{n-s-1}} - \frac{1}{\pr{n-s-1}^2}
}\\
&=
\frac{\pr{n-s-1}^2}{\pr{n-s}\pr{n-s-3}} \pr{1 + \frac{1}{n-s-1}} - 1\\
&= o\pr{1}.
\end{align*}
Similarly, using (\ref{equation:expressionOfT2}):
\begin{align*}
&H_2\\
&= \frac{\E\br{T_2}}{n^4} - \frac{\pr{n-s}^2\pr{n_1 \sigma_1^2 + n_2 \sigma_2^2}^2}{n^6}\\
&=
\pr{n_1 \sigma_1^2 + n_2 \sigma_2^2}^2 
\ac{
\frac{
2s \pr{n-s}}{\pr{n-1}n^5\pr{n+2}}
+ \frac{\pr{n-s}\pr{n+1}}{\pr{n-1}n^5\pr{n+2}}\pr{n-s-\frac{2}{n+1}}
- \frac{\pr{n-s}^2}{n^6}
}
\\
&
+ \pr{n_1 \sigma_1^4 + n_2 \sigma_2^4}
\ac{\frac{n-s}{n^5 \pr{n+2}}} \hspace{-1mm}
\ac{3n - 3s + 4 + \frac{2 \pr{n-s-1}}{n-1} - \frac{\pr{n+1}\pr{n-s-\frac{2}{\pr{n+1}}}}{n-1}}.
\end{align*}
We have:
\begin{align*}
\pr{\E\br{M_p}}^2
&\geq \frac{\pr{n-s}^2 \pr{n_1 \sigma_1^2 + n_2 \sigma_2^2}^2}{n^6}.
\end{align*}
Hence:
\begin{align*}
&\frac{H_2}{\pr{\E\br{M_p}}^2}\\
&\leq
\ac{\frac{n^6}{\pr{n-s}^2}}
\ac{
\frac{2s \pr{n-s}}{\pr{n-1}n^5\pr{n+2}}
+ \frac{\pr{n-s}\pr{n+1}}{\pr{n-1}n^5\pr{n+2}}\pr{n-s-\frac{2}{n+1}}
- \frac{\pr{n-s}^2}{n^6}
}
\\
&\quad
+ \frac{n_1 \sigma_1^4 + n_2 \sigma_2^4}{\pr{n_1 \sigma_1^2 + n_2 \sigma_2^2}^2}
\ac{\frac{n^6}{\pr{n-s}^2}}
\ac{\frac{\pr{n-s}}{n^5\pr{n+2}}}\\
& \qquad \times 
\ac{3n - 3s + 4 + \frac{2 \pr{n-s-1}}{n-1} - \frac{\pr{n+1}\pr{n-s-\frac{2}{\pr{n+1}}}}{n-1}}\\
&=
\ac{
\frac{2s n}{\pr{n-1} \pr{n-s} \pr{n+2}}
+ \frac{n \pr{n+1}}{\pr{n-s} \pr{n-1} \pr{n+2}}\pr{n-s-\frac{2}{n+1}}
- 1
}
\\
&\quad
+ \frac{n_1 \sigma_1^4 + n_2 \sigma_2^4}{\pr{n_1 \sigma_1^2 + n_2 \sigma_2^2}^2}
\ac{\frac{n}{\pr{n+2} \pr{n-s}}}\\
& \qquad \times 
\ac{3n - 3s + 4 + \frac{2 \pr{n-s-1}}{n-1} - \frac{\pr{n+1}\pr{n-s-\frac{2}{\pr{n+1}}}}{n-1}}\\
&=
\ac{
\frac{2s n}{\pr{n-1} \pr{n-s} \pr{n+2}}
+ \frac{n \pr{n+1}}{\pr{n-s} \pr{n-1} \pr{n+2}}\pr{n-s-\frac{2}{n+1}}
- 1
}
\\
&\quad
+
\frac{n_1 \sigma_1^4 + n_2 \sigma_2^4}{\pr{n_1 \sigma_1^2 + n_2 \sigma_2^2}^2}\\
& \quad \times 
\ac{\frac{3n}{\pr{n+2}} + \frac{4n}{\pr{n+2} \pr{n-s}} + \frac{2 \pr{n-s-1} n}{\pr{n-s} \pr{n-1}\pr{n+2}} - \frac{n \pr{n+1}\pr{n-s-\frac{2}{\pr{n+1}}}}{\pr{n-s}\pr{n-1}\pr{n+2}}}\\
&=
\ac{
\frac{2s n}{\pr{n-1} \pr{n-s} \pr{n+2}}
+ \frac{n \pr{n+1}}{\pr{n-s} \pr{n-1} \pr{n+2}}\pr{n-s-\frac{2}{n+1}}
- 1
}
\\
&\quad
+ \ac{
\frac{n_1 \sigma_1^4}{\pr{n_1 \sigma_1^2 + n_2 \sigma_2^2}^2} + \frac{n_2 \sigma_2^4}{\pr{n_1 \sigma_1^2 + n_2 \sigma_2^2}^2}
}\\
&\qquad \times
\ac{\frac{3n}{\pr{n+2}} + \frac{4n}{\pr{n+2} \pr{n-s}} + \frac{2 \pr{n-s-1} n}{\pr{n-s} \pr{n-1}\pr{n+2}} - \frac{n \pr{n+1}\pr{n-s-\frac{2}{\pr{n+1}}}}{\pr{n-s}\pr{n-1}\pr{n+2}}}\\
&\leq
\ac{
\frac{2s n}{\pr{n-1} \pr{n-s} \pr{n+2}}
+ \frac{n \pr{n+1}}{\pr{n-s} \pr{n-1} \pr{n+2}}\pr{n-s-\frac{2}{n+1}}
- 1
} 
\\
&\quad
+ \ac{
\frac{1}{n_1} + \frac{1}{n_2}
}\\
& \qquad \times 
\ac{\frac{3n}{\pr{n+2}} + \frac{4n}{\pr{n+2} \pr{n-s}} + \frac{2 \pr{n-s-1} n}{\pr{n-s} \pr{n-1}\pr{n+2}} - \frac{n \pr{n+1}\pr{n-s-\frac{2}{\pr{n+1}}}}{\pr{n-s}\pr{n-1}\pr{n+2}}}\\
&= o\pr{1}.
\end{align*}
For $H_3$, using (\ref{equation:expressionOfT3}):
\begin{align*}
H_3
&\coloneqq \frac{2 \lambda_p^2}{n^3}\ac{\E\br{T_3}n - \frac{s \pr{n-s}\pr{n_1 \sigma_1^2 + n_2 \sigma_2^2}}{n-s-1}}\\
&= \frac{2 \lambda_p^2}{n^3}\ac{\frac{n \pr{n-s} s \pr{n_1 \sigma_1^2 + n_2 \sigma_2^2}}{n \pr{n-s-1}} - \frac{s \pr{n-s}\pr{n_1 \sigma_1^2 + n_2 \sigma_2^2}}{n-s-1}}\\
&= 0.
\end{align*}
Therefore, we conclude:
$$
\frac{\E\br{M_p^2} - \pr{\E\br{M_p}}^2}{\pr{\E\br{M_p}}^2}
= \frac{H_1}{\pr{\E\br{M_p}}^2} + \frac{H_2}{\pr{\E\br{M_p}}^2} + \frac{H_3}{\pr{\E\br{M_p}}^2}
= o\pr{1}.
$$
\end{proof}

\subsubsection{Proof of Lemma \ref{lemma:concentrationOfGaussianMaxima}}\label{proofOflemma:concentrationOfGaussianMaxima}
\begin{proof}[Proof of Lemma \ref{lemma:concentrationOfGaussianMaxima}]
Define:
\begin{align*}
    f \colon
    & \R^k \longrightarrow \R\\
    & w \longmapsto \max_{i \in [k]} w_i.
\end{align*}
Note that for any $u, v \in \R^k$:
\begin{align*}
    \abs{f\pr{u} - f\pr{v}}
    &= \abs{\max_{i \in [k]} u_i - \max_{i \in [k]} v_i}\\
    &\leq \max_{i \in [k]} \abs{u_i - v_i}\\
    &\leq \sqrt{\sum_{i \in [k]} \pr{u_i - v_i}^2}\\
    &= \norm{u-v}_2.
\end{align*}
Therefore $f$ is $1$-Lipschitz. Assume $\tau^2$ = 1. By Gaussian concentration of measure for Lipschitz functions \citep{ledoux2001concentration, massart2007concentration}, we have for all $t \geq 0$:
$$
\begin{cases}
    \Pro\pr{\max_{i \in [k]} N_i - \E\br{\max_{i \in [k]} N_i} > t}
    \leq \exp\pr{-t^2/2},\\
    \Pro\pr{\max_{i \in [k]} N_i - \E\br{\max_{i \in [k]} N_i} < - t}
    \leq \exp\pr{-t^2/2}.
\end{cases}
$$
The result for general $\tau^2$ follows by substituting $t \coloneqq \eta / \tau$.
\end{proof}

\subsubsection{Proof of Lemma \ref{lemma:expectationOfTermsOfE[T2|X_S]}}
\label{proofOflemma:expectationOfTermsOfE[T2|X_S]}
\begin{proof}[Proof of Lemma \ref{lemma:expectationOfTermsOfE[T2|X_S]}]
We have, by Einstein notation:
\begin{align*}
&\brbig{\tr\pr{\Sigma \Lambda \Sigma}}^2\\
&= \pr{\sum_{a=1}^{n} \br{\Sigma U D U^T \Sigma}_{aa}}^2\\
&= \pr{\sum_{a=1}^{n} \sum_{b=1}^{n} \sum_{c=1}^{n} \sum_{d=1}^{n} \sum_{e=1}^{n} \Sigma_{ab} U_{bc} D_{cd} \br{U^T}_{de} \Sigma_{ea} }^2\\
&= \pr{\sum_{a=1}^{n} \sum_{c=1}^{n} \Sigma_{aa} U_{ac} D_{cc} U_{ac} \Sigma_{aa}}^2\\
&= \pr{\sum_{a=1}^{n} \sum_{c=1}^{n} D_{aa} U_{ac}^2 \Sigma_{cc}^2}^2\\
&= \sum_{a=1}^{n} \sum_{c=1}^{n} D_{cc}^2 U_{ac}^4 \Sigma_{aa}^4
+ \sum_{a=1}^{n} \sum_{c \ne d \in [n]} D_{cc} D_{dd} U_{ac}^2 U_{ad}^2 \Sigma_{aa}^4\\
& \qquad + \sum_{a\ne b \in [n]} \sum_{c=1}^{n} D_{cc}^2 U_{ac}^2 U_{bc}^2 \Sigma_{aa}^2 \Sigma_{bb}^2
+ \sum_{a\ne b \in [n]} \sum_{c \ne d \in [n]} D_{cc} D_{dd} U_{ac}^2 U_{bd}^2 \Sigma_{aa}^2 \Sigma_{bb}^2.
\end{align*}
Therefore:
\begin{align*}
\E\ac{\brbig{\tr\pr{\Sigma \Lambda \Sigma}}^2}
&= \sum_{a=1}^{n} \sum_{c=1}^{n} D_{cc}^2 \Sigma_{aa}^4 \E\br{U_{ac}^4}\\
&\quad
+ \sum_{a=1}^{n} \sum_{c \ne d \in [n]} D_{cc} D_{dd} \Sigma_{aa}^4 \E\br{U_{ac}^2 U_{ad}^2}\\
&\quad
+ \sum_{a\ne b \in [n]} \sum_{c=1}^{n} D_{cc}^2 \Sigma_{aa}^2 \Sigma_{bb}^2 \E\br{U_{ac}^2 U_{bc}^2}\\
&\quad
+ \sum_{a\ne b \in [n]} \sum_{c \ne d \in [n]} D_{cc} D_{dd} \Sigma_{aa}^2 \Sigma_{bb}^2 \E\br{U_{ac}^2 U_{bd}^2}.
\end{align*}

We use the following result from \citep{meckes2019random} on fourth-moments of $\text{Haar}(n)$ matrices:
\begin{lemma}[Fourth-moments of $\text{Haar}(n)$ matrices, Lemma 2.22 in \citep{meckes2019random}]\label{lemma:fourthOrderFormulaFromMeckes}
\phantom{}\\
Let $U \sim \text{Haar}\pr{n}$. Then for all 
$i,j,r,s,\alpha,\beta,\lambda,\mu \in [n]$ we have:
\begin{align*}
&\E\br{U_{ij} U_{rs} U_{\alpha \beta} U_{\lambda \mu}}\\
&=
-\frac{1}{\pr{n-1}n\pr{n+2}} \Big[
\delta_{ir} \delta_{\alpha \lambda} \delta_{j \beta} \delta_{s \mu} 
+ \delta_{ir} \delta_{\alpha \lambda} \delta_{j \mu} \delta_{s \beta} 
+ \delta_{i \alpha} \delta_{r \lambda} \delta_{j s} \delta_{\beta \mu} \\
& \qquad\qquad\qquad\qquad\qquad + \delta_{i \alpha} \delta_{r \lambda} \delta_{j \mu} \delta_{\beta s} 
+ \delta_{i \lambda} \delta_{r \alpha} \delta_{j s} \delta_{\beta \mu} 
+ \delta_{i \lambda} \delta_{r \alpha} \delta_{j \beta} \delta_{s \mu}
\Big]\\
&\quad + \frac{n+1}{\pr{n-1} n \pr{n+2}} \brBig{
\delta_{i r} \delta_{\alpha \lambda} \delta_{j s} \delta_{\beta \mu}
+ \delta_{i \alpha} \delta_{r \lambda} \delta_{j \beta} \delta_{s \mu}
+ \delta_{i \lambda} \delta_{r \alpha} \delta_{j \mu} \delta_{s \beta}
}.
\end{align*}
\end{lemma}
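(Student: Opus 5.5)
The plan is to derive the formula from the orthogonal Weingarten calculus at degree $4$, and then to check it against a few elementary moments computed by hand. Write the tensor $E \coloneqq \E\br{U^{\otimes 4}}$ as the map sending $e_{j}\otimes e_{s}\otimes e_{\beta}\otimes e_{\mu}$ to its coefficients on $e_i\otimes e_r\otimes e_\alpha\otimes e_\lambda$. By left and right invariance of Haar measure, $E$ commutes with $V^{\otimes 4}$ for every $V\in O\pr{n}$. Since $E$ is also idempotent, it is the orthogonal projection onto the $O\pr{n}$-invariant subspace of $\pr{\R^n}^{\otimes 4}$. By the first fundamental theorem of invariant theory for $O\pr{n}$, this subspace is spanned by the three pairing tensors
\[
\theta_{\{12\}\{34\}}=\sum_{a,b} e_a\otimes e_a\otimes e_b\otimes e_b
\]
and its two analogues for the pairings $\{13\}\{24\}$ and $\{14\}\{23\}$. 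These three tensors are linearly independent for $n\ge 2$, which is the regime in which the lemma is used.

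Writing the projection in this non-orthonormal basis gives
\[
\E\br{U_{ij}U_{rs}U_{\alpha\beta}U_{\lambda\mu}}=\sum_{\mathfrak p,\mathfrak q}\delta_{\mathfrak p}\pr{i,r,\alpha,\lambda}\,\mathrm{Wg}\pr{\mathfrak p,\mathfrak q}\,\delta_{\mathfrak q}\pr{j,s,\beta,\mu},
\]
where $\mathfrak p,\mathfrak q$ range over the three pairings, $\delta_{\mathfrak p}$ is the product of Kronecker deltas along the pairs of $\mathfrak p$, and $\mathrm{Wg}=G^{-1}$. Here $G$ is the Gram matrix $G\pr{\mathfrak p,\mathfrak q}=\inner{\theta_{\mathfrak p}}{\theta_{\mathfrak q}}=n^{\#\text{loops}\pr{\mathfrak p\cup\mathfrak q}}$. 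Two equal pairings produce two loops and two distinct pairings produce one loop, so
\[
G=n\pr{n-1}I_3+nJ_3,
\]
with $J_3$ the all-ones matrix. The Sherman–Morrison formula $\pr{aI+bJ}^{-1}=a^{-1}I-\frac{b}{a\pr{a+3b}}J$ then gives:
\begin{itemize}[leftmargin=*]
\item off-diagonal entries $-\frac{1}{\pr{n-1}n\pr{n+2}}$;
\item diagonal entries $\frac{1}{n\pr{n-1}}-\frac{1}{\pr{n-1}n\pr{n+2}}=\frac{n+1}{\pr{n-1}n\pr{n+2}}$.
\end{itemize}

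Expanding the sum, the three diagonal terms $\mathfrak p=\mathfrak q$ give the bracket with coefficient $\frac{n+1}{\pr{n-1}n\pr{n+2}}$. The six off-diagonal terms $\mathfrak p\neq\mathfrak q$ give the six products in the first bracket, with coefficient $-\frac{1}{\pr{n-1}n\pr{n+2}}$. Matching the index conventions (row indices $i,r,\alpha,\lambda$ paired by $\mathfrak p$, column indices $j,s,\beta,\mu$ paired by $\mathfrak q$) reproduces the displayed formula term by term.

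I expect the only genuinely nontrivial input to be the claim that the pairing tensors span the invariants, i.e. the first fundamental theorem. To keep the argument self-contained I would cross-check the constants on three representative moments, which is the step most prone to error:
\begin{itemize}[leftmargin=*]
\item $\E U_{11}^4=\frac{3}{n\pr{n+2}}$, since the first column of $U$ is uniform on $S^{n-1}$;
\item $\E U_{11}^2U_{12}^2=\frac{1}{n\pr{n+2}}$, from the same spherical moments applied to the first row;
\item $\E\br{U_{11}U_{12}U_{21}U_{22}}=-\frac{1}{\pr{n-1}n\pr{n+2}}$, obtained by squaring the row-orthogonality relation $\sum_k U_{1k}U_{2k}=0$, taking expectations, and using $\E U_{11}^2U_{21}^2=\frac{1}{n\pr{n+2}}$ and exchangeability of columns.
\end{itemize}
In fact these three values together with signed-permutation invariance (which kills every moment in which some index appears an odd number of times in a row or column) determine every fourth moment, so they give an elementary proof independent of the invariant-theory step.
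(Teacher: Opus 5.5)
The paper does not prove this lemma. It imports the formula from Meckes and then uses it only through two things: the vanishing of moments in which some index occurs an odd number of times, and the five specializations $\E U_{ac}^4$, $\E U_{ac}^2U_{ad}^2$, $\E U_{ac}^2U_{bc}^2$, $\E U_{ac}^2U_{bd}^2$, $\E U_{ac}U_{ec}U_{eh}U_{ah}$.

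Your Weingarten derivation is therefore a genuine proof where the paper has only a citation, and it is correct. The Gram matrix $n(n-1)I_3+nJ_3$ and its Sherman--Morrison inverse are right. The representation $\Theta G^{-1}\Theta^\top$ of the projection is right. The nine (row pairing, column pairing) combinations match the nine delta-products in the statement term by term.

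What this route buys over the citation is an explanation of where the two constants come from: they are the diagonal and off-diagonal Weingarten entries. Its price is the first fundamental theorem, but at degree $4$ even that is elementary. Signed-permutation invariance writes any invariant $4$-tensor as a combination of the three pairing tensors plus a multiple of $\sum_a e_a^{\otimes 4}$, and a $45^\circ$ rotation in a coordinate plane forces that last multiple to vanish. Your second, elementary route is closer to how the paper actually consumes the lemma, since it computes directly the few moments the paper extracts from it.

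There are two small corrections.

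\begin{itemize}
\item Commuting with every $V^{\otimes 4}$ plus idempotence does not by itself identify $E$ as the projection onto the invariants; the identity has both properties. What left and right invariance actually give is $V^{\otimes 4}E=E=EV^{\otimes 4}$. From this, $E$ maps into the invariants, fixes them, and annihilates their orthogonal complement.
\item More substantively, your claim that the three listed values plus signed-permutation invariance determine every fourth moment is not right as stated. After the parity and permutation reductions there remain five index patterns. The value $\E[U_{11}^2U_{22}^2]=\frac{n+1}{(n-1)n(n+2)}$, which the paper uses in (\ref{equation:MeckesEUac2bd2}), is not among your three and is not forced by symmetry. You need one more relation: multiply $\sum_k U_{2k}^2=1$ by $U_{11}^2$ and take expectations, using column exchangeability, to get $\frac{1}{n}=\frac{1}{n(n+2)}+(n-1)\E[U_{11}^2U_{22}^2]$.
\end{itemize}

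With that relation added, and the routine check that the displayed right-hand side vanishes on odd patterns and reproduces the five values, the elementary route is also a complete proof.
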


Substituting: $i,r,\alpha,\lambda \coloneqq a$; and $j,s,\beta,\mu \coloneqq c$, we get:
\begin{align*}
&\E\br{U_{ac}^4}\\
&\quad =
-\frac{1}{\pr{n-1}n\pr{n+2}} \Big[
\delta_{aa} \delta_{a a} \delta_{c c} \delta_{c c} 
+ \delta_{aa} \delta_{a a} \delta_{c c} \delta_{c c} 
+ \delta_{a a} \delta_{a a} \delta_{c c} \delta_{c c}\\
&\qquad \qquad \qquad \qquad \qquad \qquad 
+ \delta_{a a} \delta_{a a} \delta_{c c} \delta_{c c} 
+ \delta_{a a} \delta_{a a} \delta_{c c} \delta_{c c} 
+ \delta_{a a} \delta_{a a} \delta_{c c} \delta_{c c}
\Big]\\
&\quad \quad + \frac{n+1}{\pr{n-1} n \pr{n+2}} \brBig{
\delta_{a a} \delta_{a a} \delta_{c c} \delta_{c c}
+ \delta_{a a} \delta_{a a} \delta_{c c} \delta_{c c}
+ \delta_{a a} \delta_{a a} \delta_{c c} \delta_{c c}
}\\
&\quad =
-\frac{6}{\pr{n-1}n\pr{n+2}} + \frac{3 \pr{n+1}}{\pr{n-1} n \pr{n+2}}\\
&\quad =
\frac{3}{n\pr{n+2}}.
\end{align*}
Thus:
\begin{equation}\label{equation:MeckesEUac4}
    \E\br{U_{ac}^4} = \frac{3}{n\pr{n+2}}.
\end{equation}
Now substituting $i,r,\alpha, \lambda \coloneqq a$; $j,s \coloneqq c$; and $\beta, \mu \coloneqq d$, we get:
\begin{align*}
&\E\br{U_{ac}^2 U_{a d}^2}\\
&\quad =
-\frac{1}{\pr{n-1}n\pr{n+2}} \Big[
\delta_{aa} \delta_{a a} \delta_{c d} \delta_{c d} 
+ \delta_{aa} \delta_{a a} \delta_{c d} \delta_{c d} 
+ \delta_{a a} \delta_{a a} \delta_{c c} \delta_{d d} \\
&\qquad \qquad \qquad \qquad \qquad \qquad
+ \delta_{a a} \delta_{a a} \delta_{c d} \delta_{d c} 
+ \delta_{a a} \delta_{a a} \delta_{c c} \delta_{d d} 
+ \delta_{a a} \delta_{a a} \delta_{c d} \delta_{c d}
\Big]\\
&\quad \quad + \frac{n+1}{\pr{n-1} n \pr{n+2}} \brBig{
\delta_{a a} \delta_{a a} \delta_{c c} \delta_{d d}
+ \delta_{a a} \delta_{a a} \delta_{c d} \delta_{c d}
+ \delta_{a a} \delta_{a a} \delta_{c d} \delta_{c d}
}\\
&\quad =
-\frac{2}{\pr{n-1}n\pr{n+2}} + \frac{n+1}{\pr{n-1} n \pr{n+2}}\\
&\quad =
\frac{1}{n\pr{n+2}}.
\end{align*}
Thus:
\begin{equation}\label{equation:MeckesEUac2ad2}
    \E\br{U_{ac}^2 U_{a d}^2} = \frac{1}{n\pr{n+2}}.
\end{equation}
Now substituting $i,r \coloneqq a$; $j,s,\beta, \mu \coloneqq c$; and $\alpha, \lambda \coloneqq b$, we get:
\begin{align*}
&\E\br{U_{ac}^2 U_{b c}^2}\\
&\quad =
-\frac{1}{\pr{n-1}n\pr{n+2}} \Big[
\delta_{aa} \delta_{b b} \delta_{c c} \delta_{c c} 
+ \delta_{aa} \delta_{b b} \delta_{c c} \delta_{c c} 
+ \delta_{a b} \delta_{a b} \delta_{c c} \delta_{c c} \\
&\qquad \qquad \qquad \qquad \qquad \qquad
+ \delta_{a b} \delta_{a b} \delta_{c c} \delta_{c c} 
+ \delta_{a b} \delta_{a b} \delta_{c c} \delta_{c c} 
+ \delta_{a b} \delta_{a b} \delta_{c c} \delta_{c c}
\Big]\\
&\quad \quad + \frac{n+1}{\pr{n-1} n \pr{n+2}} \brBig{
\delta_{a a} \delta_{b b} \delta_{c c} \delta_{c c}
+ \delta_{a b} \delta_{a b} \delta_{c c} \delta_{c c}
+ \delta_{a b} \delta_{a b} \delta_{c c} \delta_{c c}
}\\
&\quad \quad + \frac{n+1}{\pr{n-1} n \pr{n+2}} \brBig{
\delta_{a a} \delta_{a a} \delta_{c c} \delta_{d d}
+ \delta_{a a} \delta_{a a} \delta_{c d} \delta_{c d}
+ \delta_{a a} \delta_{a a} \delta_{c d} \delta_{c d}
}\\
&\quad =
-\frac{2}{\pr{n-1}n\pr{n+2}} + \frac{n+1}{\pr{n-1} n \pr{n+2}}\\
&\quad =
\frac{1}{n\pr{n+2}}.
\end{align*}
Thus:
\begin{equation}\label{equation:MeckesEUac2bc2}
    \E\br{U_{ac}^2 U_{b c}^2} = \frac{1}{n\pr{n+2}}.
\end{equation}
Now substituting $i,r \coloneqq a$; $j,s \coloneqq c$; $\alpha, \lambda \coloneqq b$; and $\beta, \mu \coloneqq d$, we get:
\begin{align*}
&\E\br{U_{ac}^2 U_{b d}^2}\\
&\quad =
-\frac{1}{\pr{n-1}n\pr{n+2}} \Big[
\delta_{aa} \delta_{b b} \delta_{c d} \delta_{c d} 
+ \delta_{aa} \delta_{b b} \delta_{c d} \delta_{c d}
+ \delta_{a b} \delta_{a b} \delta_{c c} \delta_{d d} \\
&\qquad \qquad \qquad \qquad \qquad \qquad
+ \delta_{a b} \delta_{a b} \delta_{c d} \delta_{d c} 
+ \delta_{a b} \delta_{a b} \delta_{c c} \delta_{d d} 
+ \delta_{a b} \delta_{a b} \delta_{c d} \delta_{c d}
\Big]\\
&\quad \quad + \frac{n+1}{\pr{n-1} n \pr{n+2}} \brBig{
\delta_{a a} \delta_{b b} \delta_{c c} \delta_{d d}
+ \delta_{a b} \delta_{a b} \delta_{c d} \delta_{c d}
+ \delta_{a b} \delta_{a b} \delta_{c d} \delta_{c d}
}\\
&\quad = \frac{n+1}{\pr{n-1} n \pr{n+2}}.
\end{align*}
Thus:
\begin{equation}\label{equation:MeckesEUac2bd2}
    \E\br{U_{ac}^2 U_{b d}^2} = \frac{n+1}{\pr{n-1} n \pr{n+2}}.
\end{equation}
Substituting (\ref{equation:MeckesEUac4}), (\ref{equation:MeckesEUac2ad2}), (\ref{equation:MeckesEUac2bc2}), and (\ref{equation:MeckesEUac2bd2}) in the expression of $\E\ac{\brbig{\tr\pr{\Sigma \Lambda \Sigma}}^2}$ above, we get:
\begin{align*}
&\E\ac{\brbig{\tr\pr{\Sigma \Lambda \Sigma}}^2}\\
&\quad = \sum_{a=1}^{n} \sum_{c=1}^{n} D_{cc}^2 \Sigma_{aa}^4 \E\br{U_{ac}^4}
+ \sum_{a=1}^{n} \sum_{c \ne d \in [n]} D_{cc} D_{dd} \Sigma_{aa}^4 \E\br{U_{ac}^2 U_{ad}^2}\\
&\qquad
+ \sum_{a\ne b \in [n]} \sum_{c=1}^{n} D_{cc}^2 \Sigma_{aa}^2 \Sigma_{bb}^2 \E\br{U_{ac}^2 U_{bc}^2}
+ \sum_{a\ne b \in [n]} \sum_{c \ne d \in [n]} D_{cc} D_{dd} \Sigma_{aa}^2 \Sigma_{bb}^2 \E\br{U_{ac}^2 U_{bd}^2}\\
&\quad = \frac{3}{n\pr{n+2}} \pr{\sum_{a=1}^{n} \Sigma_{aa}^4} \pr{\sum_{c=1}^{n} D_{cc}^2}
+ \frac{1}{n \pr{n+2}} \pr{\sum_{a=1}^{n} \Sigma_{aa}^4} \pr{\sum_{c \ne d \in [n]} D_{cc} D_{dd}}\\
&\qquad
+ \frac{1}{n \pr{n+2}} \pr{\sum_{a\ne b \in [n]} \Sigma_{aa}^2 \Sigma_{bb}^2} \pr{\sum_{c=1}^{n} D_{cc}^2}\\
&\qquad
+ \frac{n+1}{\pr{n-1} n \pr{n+2}} \pr{\sum_{a\ne b \in [n]} \Sigma_{aa}^2 \Sigma_{bb}^2} \pr{\sum_{c \ne d \in [n]} D_{cc} D_{dd}}\\
&\quad = \frac{1}{n\pr{n+2}} \pr{\sum_{a=1}^{n} \Sigma_{aa}^4} \ac{ 3 \pr{\sum_{c=1}^{n} D_{cc}^2}
+ \pr{\sum_{c \ne d \in [n]} D_{cc} D_{dd}} }\\
&\qquad
+ \frac{n+1}{\pr{n-1} n \pr{n+2}} \pr{\sum_{a\ne b \in [n]} \Sigma_{aa}^2 \Sigma_{bb}^2} \ac{ \frac{n-1}{n+1} \pr{ \sum_{c=1}^{n} D_{cc}^2}
+ \pr{\sum_{c \ne d \in [n]} D_{cc} D_{dd}} }\\
&\quad = \frac{1}{n\pr{n+2}} \pr{\sum_{a=1}^{n} \Sigma_{aa}^4} \ac{ 2 \pr{\sum_{c=1}^{n} D_{cc}^2}
+ \pr{\sum_{c = 1}^{n} D_{cc}}^2 }\\
&\qquad
+ \frac{n+1}{\pr{n-1} n \pr{n+2}} \ac{\pr{\sum_{a=1}^{n} \Sigma_{aa}^2}^2 - \sum_{a=1}^{n} \Sigma_{aa}^4} \ac{ -\frac{2}{n+1} \pr{ \sum_{c=1}^{n} D_{cc}^2}
+ \pr{\sum_{c=1}^{n} D_{cc}}^2 }\\
&\quad = \frac{\tr\pr{\Sigma^4}}{n\pr{n+2}} \ac{ 2 \tr\pr{D^2}
+ \tr\pr{D}^2 }\\
&\qquad + \frac{n+1}{\pr{n-1} n \pr{n+2}} \ac{\tr\pr{\Sigma^2}^2 - \tr\pr{\Sigma^4}} \ac{ -\frac{2 \tr\pr{D^2}}{n+1}
+ \tr\pr{D}^2 }\\
&\quad = \frac{n_1 \sigma_1^4 + n_2 \sigma_2^4}{n\pr{n+2}} \ac{ 2 \pr{n-s}
+ \pr{n-s}^2 }\\
&\qquad
+ \frac{n+1}{\pr{n-1} n \pr{n+2}} \ac{ \pr{n_1 \sigma_1^2 + n_2 \sigma_2^2}^2 - \pr{n_1 \sigma_1^4 + n_2 \sigma_2^4}} \ac{ -\frac{2 \pr{n-s}}{n+1} + \pr{n-s}^2 }\\
&\quad =
\pr{n_1 \sigma_1^2 + n_2 \sigma_2^2}^2 \ac{\frac{\pr{n+1}\pr{n-s}}{\pr{n-1}n\pr{n+2}}} \ac{ n-s-\frac{2}{n+1}}\\
&\qquad + \pr{n_1 \sigma_1^4 + n_2 \sigma_2^4} \ac{\frac{\pr{n-s}\pr{n-s+2}}{n \pr{n+2}} - \frac{\pr{n+1}\pr{n-s}\pr{n-s-\frac{2}{\pr{n+1}}}}{\pr{n-1}n\pr{n+2}}}.
\end{align*}
Now we have, by Einstein notation:
\begin{align*}
&\tr\br{\pr{\Sigma \Lambda \Sigma}^2}\\
&\quad = \tr\brBig{\Sigma U D U^T \Sigma^2 U D U^T \Sigma}\\
&\quad = \sum_{a=1}^{n} \brbig{\Sigma U D U^T \Sigma^2 U D U^T \Sigma}_{aa}\\
&\quad = \sum_{a=1}^{n} \sum_{b=1}^{n} \sum_{c=1}^{n} \sum_{d=1}^{n} \sum_{e=1}^{n} \sum_{f=1}^{n} \sum_{g=1}^{n} \sum_{h=1}^{n} \sum_{j=1}^{n} \Sigma_{ab} U_{bc} D_{cd} \br{U^T}_{de} \br{\Sigma^2}_{ef} U_{fg} D_{gh} \br{U^T}_{hj} \Sigma_{ja}\\
&\quad = \sum_{a=1}^{n} \sum_{c=1}^{n} \sum_{e=1}^{n} \sum_{h=1}^{n} \Sigma_{aa} U_{ac} D_{cc} U_{ec} \Sigma^2_{ee} U_{eh} D_{hh} U_{ah} \Sigma_{aa}\\
&\quad = \sum_{a=1}^{n} \sum_{c=1}^{n} \sum_{e=1}^{n} \sum_{h=1}^{n} D_{cc} D_{hh} \Sigma_{aa}^2 \Sigma^2_{ee} U_{ac} U_{ec} U_{eh} U_{ah}\\
&\quad = \sum_{a=1}^{n} \sum_{c=1}^{n} D_{cc}^2 \Sigma_{aa}^4 U_{ac}^4 + \sum_{a \ne e \in [n]} \sum_{c=1}^{n} D_{cc}^2 \Sigma_{aa}^2 \Sigma^2_{ee} U_{ac}^2 U_{ec}^2
+ \sum_{a=1}^{n} \sum_{c \ne h \in [n]} D_{cc} D_{hh} \Sigma_{aa}^4 U_{ac}^2 U_{ah}^2\\
&\qquad + \sum_{a \ne e \in [n]} \sum_{c \ne h \in [n]} D_{cc} D_{hh} \Sigma_{aa}^2 \Sigma^2_{ee} U_{ac} U_{ec} U_{eh} U_{ah}.
\end{align*}
Therefore:
\begin{align*}
\E\ac{\tr\br{\pr{\Sigma \Lambda \Sigma}^2}}
&= \sum_{a=1}^{n} \sum_{c=1}^{n} D_{cc}^2 \Sigma_{aa}^4 \E\br{U_{ac}^4}\\
&\quad + \sum_{a \ne e \in [n]} \sum_{c=1}^{n} D_{cc}^2 \Sigma_{aa}^2 \Sigma^2_{ee} \E\br{U_{ac}^2 U_{ec}^2}\\
&\quad + \sum_{a=1}^{n} \sum_{c \ne h \in [n]} D_{cc} D_{hh} \Sigma_{aa}^4 \E\br{U_{ac}^2 U_{ah}^2}\\
&\quad
+ \sum_{a \ne e \in [n]} \sum_{c \ne h \in [n]} D_{cc} D_{hh} \Sigma_{aa}^2 \Sigma^2_{ee} \E\br{U_{ac} U_{ec} U_{eh} U_{ah}}.
\end{align*}
Recall that:
$$
\E\br{U_{ac}^4} = \frac{3}{n \pr{n+2}},
$$
and:
$$
\E\br{U_{ac}^2 U_{ec}^2}
=
\E\br{U_{ac}^2 U_{ah}^2}
=
\frac{1}{n \pr{n+2}}.
$$
Now substituting $i, \lambda \coloneqq a$; $j,s \coloneqq c$; $r, \alpha \coloneqq e$; and $\beta, \mu \coloneqq h$ in Lemma \ref{lemma:fourthOrderFormulaFromMeckes}, we get:
\begin{align*}
&\E\br{U_{ac} U_{ec} U_{e h} U_{a h}}\\
&\quad =
-\frac{1}{\pr{n-1}n\pr{n+2}} \Big[
\delta_{ae} \delta_{e a} \delta_{c h} \delta_{c h} 
+ \delta_{ae} \delta_{e a} \delta_{c h} \delta_{c h}
+ \delta_{a e} \delta_{e a} \delta_{c c} \delta_{h h} \\
&\qquad \qquad \qquad \qquad \qquad \qquad
+ \delta_{a e} \delta_{e a} \delta_{c h} \delta_{h c} 
+ \delta_{a a} \delta_{e e} \delta_{c c} \delta_{h h} 
+ \delta_{a a} \delta_{e e} \delta_{c h} \delta_{c h}
\Big]\\
&\quad \quad + \frac{n+1}{\pr{n-1} n \pr{n+2}} \brBig{
\delta_{a e} \delta_{e a} \delta_{c c} \delta_{h h}
+ \delta_{a e} \delta_{e a} \delta_{c h} \delta_{c h}
+ \delta_{a a} \delta_{e e} \delta_{c h} \delta_{c h}
}\\
&\quad = -\frac{1}{\pr{n-1}n\pr{n+2}}.
\end{align*}
Therefore:
\begin{equation}\label{equation:MeckesEUacecehah}
\E\br{U_{ac} U_{ec} U_{e h} U_{a h}}
=
-\frac{1}{\pr{n-1}n\pr{n+2}}.
\end{equation}
Substituting (\ref{equation:MeckesEUac4}), (\ref{equation:MeckesEUac2bc2}), (\ref{equation:MeckesEUac2ad2}), and (\ref{equation:MeckesEUacecehah}) in the expression of $\E\ac{\tr\br{\pr{\Sigma \Lambda \Sigma}^2}}$ above, we get:
\begin{align*}
&\E\ac{\tr\br{\pr{\Sigma \Lambda \Sigma}^2}}\\
&\quad = \frac{3}{n \pr{n+2}} \sum_{a=1}^{n} \sum_{c=1}^{n} D_{cc}^2 \Sigma_{aa}^4 
+ \frac{1}{n \pr{n+2}} \sum_{a \ne e \in [n]} \sum_{c=1}^{n} D_{cc}^2 \Sigma_{aa}^2 \Sigma^2_{ee}\\
&\qquad + \frac{1}{n \pr{n+2}} \sum_{a=1}^{n} \sum_{c \ne h \in [n]} D_{cc} D_{hh} \Sigma_{aa}^4\\
&\qquad
-\frac{1}{\pr{n-1}n\pr{n+2}} \sum_{a \ne e \in [n]} \sum_{c \ne h \in [n]} D_{cc} D_{hh} \Sigma_{aa}^2 \Sigma^2_{ee}\\
&\quad = \frac{3}{n \pr{n+2}} \pr{\sum_{a=1}^{n} \Sigma_{aa}^4} \pr{\sum_{c=1}^{n} D_{cc}^2}
+ \frac{1}{n \pr{n+2}} \pr{\sum_{a \ne e \in [n]} \Sigma_{aa}^2 \Sigma^2_{ee}} \pr{\sum_{c=1}^{n} D_{cc}^2}\\
&\qquad + \frac{1}{n \pr{n+2}} \pr{\sum_{a=1}^{n} \Sigma_{aa}^4} \pr{\sum_{c \ne h \in [n]} D_{cc} D_{hh}}\\
&\qquad
-\frac{1}{\pr{n-1}n\pr{n+2}} \pr{\sum_{a \ne e \in [n]} \Sigma_{aa}^2 \Sigma^2_{ee}} \pr{\sum_{c \ne h \in [n]} D_{cc} D_{hh}}\\
&\quad = \frac{1}{n \pr{n+2}} \pr{\sum_{c=1}^{n} D_{cc}^2} \ac{3 \sum_{a=1}^{n} \Sigma_{aa}^4 + \sum_{a \ne e \in [n]} \Sigma_{aa}^2 \Sigma^2_{ee}}\\
&\qquad + \frac{1}{n \pr{n+2}} \pr{\sum_{a=1}^{n} \Sigma_{aa}^4} \ac{\pr{\sum_{c=1}^{n} D_{cc}}^2 - \sum_{c=1}^{n} D_{cc}^2}\\
&\qquad -\frac{1}{\pr{n-1}n\pr{n+2}} \ac{\pr{\sum_{a=1}^{n} \Sigma_{aa}^2}^2 - \sum_{a=1}^{n} \Sigma_{aa}^4} \ac{\pr{\sum_{c=1}^{n} D_{cc}}^2 - \sum_{c=1}^{n} D_{cc}^2}\\
&\quad = \frac{1}{n \pr{n+2}} \pr{\sum_{c=1}^{n} D_{cc}^2} \ac{2 \sum_{a=1}^{n} \Sigma_{aa}^4 + \pr{\sum_{a=1}^{n} \Sigma_{aa}^2}^2}\\
&\qquad + \frac{1}{n \pr{n+2}} \pr{\sum_{a=1}^{n} \Sigma_{aa}^4} \ac{\pr{\sum_{c=1}^{n} D_{cc}}^2 - \sum_{c=1}^{n} D_{cc}^2}\\
&\qquad -\frac{1}{\pr{n-1}n\pr{n+2}} \ac{\pr{\sum_{a=1}^{n} \Sigma_{aa}^2}^2 - \sum_{a=1}^{n} \Sigma_{aa}^4} \ac{\pr{\sum_{c=1}^{n} D_{cc}}^2 - \sum_{c=1}^{n} D_{cc}^2}.
\end{align*}
Thus:
\begin{align*}
&\E\ac{\tr\br{\pr{\Sigma \Lambda \Sigma}^2}}\\
&\quad = \frac{\tr\pr{D^2}}{n \pr{n+2}} \ac{2 \tr\pr{\Sigma^4} + \pr{\tr{\Sigma^2}}^2} + \frac{\tr\pr{\Sigma^4}}{n \pr{n+2}} \ac{\tr\pr{D}^2 -\tr\pr{D^2}}\\
&\qquad -\frac{1}{\pr{n-1}n\pr{n+2}} \ac{\tr\pr{\Sigma^2}^2 - \tr\pr{\Sigma^4}} \ac{\tr\pr{D}^2 - \tr\pr{D^2}}\\
&\quad = \frac{n-s}{n \pr{n+2}} \ac{2 \pr{n_1 \sigma_1^4 + n_2 \sigma_2^4} + \pr{n_1 \sigma_1^2 + n_2 \sigma_2^2}^2} + \frac{n_1 \sigma_1^4 + n_2 \sigma_2^4}{n \pr{n+2}} \ac{\pr{n-s}^2 - \pr{n-s}}\\
&\qquad -\frac{1}{\pr{n-1}n\pr{n+2}} \ac{\pr{n_1 \sigma_1^2 + n_2 \sigma_2^2}^2 - \pr{n_1 \sigma_1^4 + n_2 \sigma_2^4}} \ac{\pr{n-s}^2 - \pr{n-s}}\\
&\quad =
\pr{n_1 \sigma_1^2 + n_2 \sigma_2^2}^2 \ac{\frac{n-s}{n \pr{n+2}}}\ac{1 - \frac{n-s-1}{n-1}}\\
&\qquad + \pr{n_1 \sigma_1^4 + n_2 \sigma_2^4} \ac{\frac{2\pr{n-s}}{n\pr{n+2}}
+ \frac{\pr{n-s}\pr{n-s-1}}{n\pr{n+2}}  \frac{\pr{n-s}\pr{n-s-1}}{\pr{n-1} n \pr{n+2}}}\\
&\quad =
\pr{n_1 \sigma_1^2 + n_2 \sigma_2^2}^2 \ac{\frac{s\pr{n-s}}{\pr{n-1} n \pr{n+2}}}\\
&\qquad + \pr{n_1 \sigma_1^4 + n_2 \sigma_2^4} \ac{\frac{n-s}{n\pr{n+2}}}\ac{n - s + 1 + \frac{n-s-1}{n-1}}.
\end{align*}
\end{proof}

\subsection{Proof of Proposition \ref{proposition:asymptoticBehaviorOfUwrtRho}}
\label{proofOfproposition:asymptoticBehaviorOfUwrtRho}

\begin{proof}[Proof of Proposition \ref{proposition:asymptoticBehaviorOfUwrtRho}] 
Recall that:
$$
U_i \coloneqq e_i^T \pr{\frac{1}{n} X_S^T X_S}^{-1} \br{\frac{1}{n} X_S^T \Sigma W - \lambda_p \Vec{b}}.
$$
Note that conditionally on $X_S$, $U_i$ is Gaussian for each $i \in S$ and:
\begin{align*}
    Y_i
    &\coloneqq \E\br{U_i \,|\, X_S}
    = - \lambda_p e_i^{T} \pr{\frac{1}{n} X_S^T X_S}^{-1} \Vec{b},\\
    Y_i'
    &\coloneqq \var\br{U_i\,|\,X_S}
    = \frac{1}{n^2} e_i^T \pr{\frac{1}{n}X_S^TX_S}^{-1} X_S^T \Sigma^2 X_S \pr{\frac{1}{n}X_S^TX_S}^{-1} e_i.
\end{align*}
\begin{lemma}
\label{lemma:concentrationOfYiAndYi'}
\phantom{}
\begin{enumerate}[leftmargin=*]
    \item[(a)] The random variables $Y_i$ and $Y_i'$ have means:
    $$
    \E\br{Y_i}
    = \frac{- \lambda_p n}{n-s-1} e_i^T \Vec{b},
    \quad
    \text{and}
    \quad
    \E\br{Y_i'}
    =
    \frac{n_1 \sigma_1^2 + n_2 \sigma_2^2}{n \pr{n-s-1}}.
    $$
    \item[(b)] Moreover, each pair $\pr{Y_i, Y_i'}$ is concentrated such that:
    $$
    \Pro\pr{
    \abs{Y_i} \geq \frac{n \lambda_p \sqrt{s}}{n-s-1},
    \;\;
    \text{or}
    \,\,
    \abs{Y_i'} \geq 2 \E\br{Y_i'} 
    }
    \leq
    K \pr{ \frac{1}{n_1} + \frac{1}{n_2}},
    $$
    where $K$ is a universal constant.
\end{enumerate}
\end{lemma}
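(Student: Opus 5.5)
We will treat the two quantities separately, reusing the QR decomposition (\ref{equation:QRdecomposition}) $X_S = QR$, under which $Q$ is independent of $R$ and $X_S^T X_S = R^T R \sim \calW_s\pr{I_s, n}$.

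\textbf{Means.} For $Y_i$, the inverse-Wishart mean $\E\brbig{\pr{X_S^T X_S}^{-1}} = I_s/\pr{n-s-1}$ gives
$\E\br{Y_i} = -\lambda_p n\, e_i^T \Vec{b}/\pr{n-s-1}$ directly. For $Y_i'$, write $A \coloneqq \pr{X_S^TX_S}^{-1} X_S^T = R^{-1}Q^T$, so that
\[
Y_i' = e_i^T R^{-1} Q^T \Sigma^2 Q R^{-T} e_i .
\]
By the independence of $Q$ and $R$ we can condition on $R$. Since $Q$ consists of $s$ columns of a Haar matrix, $\E\br{Q^T \Sigma^2 Q} = \tr\pr{\Sigma^2} I_s/n$; this is the same Haar identity used for $\E\br{UDU^T}$ in the proof of Lemma \ref{lemma:expectationAndConcentrationOfMn}. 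We therefore get
\[
\E\br{Y_i'} = \frac{\tr\pr{\Sigma^2}}{n}\, e_i^T \E\brbig{\pr{R^TR}^{-1}} e_i = \frac{n_1\sigma_1^2+n_2\sigma_2^2}{n\pr{n-s-1}},
\]
which is the claimed formula.

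\textbf{Concentration.} We bound each event by Chebyshev and then apply a union bound.

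For $Y_i$, Lemma \ref{lemma:siskindSecondMoment} computes $\E\br{Y_i^2}$ exactly as in (\ref{equation:expectationOfXSTXSinvbbTXSTXSinvbySiskind}), namely $\lambda_p^2 n^2 \br{b_i^2 + s/\pr{n-s-1}}/\prbig{\pr{n-s}\pr{n-s-3}}$. Hence $\var\pr{Y_i} = O\pr{\lambda_p^2 n^2 s/n^3}$. The event $\abs{Y_i}\ge n\lambda_p\sqrt s/\pr{n-s-1}$ requires a deviation of order $\lambda_p\sqrt{s}$ beyond the mean (up to the $\abs{b_i}=1$ term, which is absorbed for $s\ge 4$). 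Chebyshev therefore gives a probability $O\pr{1/n} \le O\pr{1/n_1+1/n_2}$.

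For $Y_i'$, we compute $\E\br{Y_i'^2}$ by conditioning on $R$ and using the fourth-moment Haar formula, Lemma \ref{lemma:fourthOrderFormulaFromMeckes}. Writing $v\coloneqq R^{-T}e_i$, we have $Y_i' = \prbig{v^T Q^T\Sigma^2 Q v}$, and $Qv/\norm{v}$ is uniform on the sphere. For a uniform unit vector $u$,
\[
\E\br{\pr{u^T\Sigma^2 u}^2} = \frac{\tr\pr{\Sigma^2}^2 + 2\tr\pr{\Sigma^4}}{n\pr{n+2}},
\]
while $\norm{v}^2 = e_i^T\pr{R^TR}^{-1}e_i$ has second moment controlled by Lemma \ref{lemma:siskindSecondMoment} as $\pr{1+O\pr{1/n}}/\pr{n-s-1}^2$. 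This yields
\[
\frac{\var\pr{Y_i'}}{\E\br{Y_i'}^2} \le O\pr{\frac1n} + O\pr{\frac{\tr\pr{\Sigma^4}}{\tr\pr{\Sigma^2}^2}},
\]
and the second term is bounded via $\tr\pr{\Sigma^4}/\tr\pr{\Sigma^2}^2\le 1/n_1+1/n_2$, exactly as in the bound on $H_2$. Chebyshev with deviation $\E\br{Y_i'}$ then gives $\Pro\pr{Y_i'\ge 2\E\br{Y_i'}}\le K\pr{1/n_1+1/n_2}$. Combining the two bounds by a union bound gives (b).

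\textbf{Main obstacle.} The delicate step is the exact second moment of $Y_i'$: we need the Haar averaging conditional on $R$ together with the inverse-Wishart second moment, and the constants must not blow up. If the spherical formula proves awkward, the fallback is to expand $\E\br{\pr{v^TQ^T\Sigma^2Qv}^2}$ term by term using the entries of Lemma \ref{lemma:fourthOrderFormulaFromMeckes}, as was done for $T_2$.
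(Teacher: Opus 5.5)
Your proposal is correct and follows the paper's proof essentially step for step. The means come from the inverse-Wishart mean and the Haar identity $\E\br{Q^T\Sigma^2Q}=\tr\pr{\Sigma^2}I_s/n$ with $Q$ independent of $R$. Then you apply Chebyshev to $Y_i$ via Lemma~\ref{lemma:siskindSecondMoment}, apply Chebyshev to $Y_i'$ via the bound $\tr\pr{\Sigma^4}/\tr\pr{\Sigma^2}^2\le 1/n_1+1/n_2$, and finish with a union bound. The only local difference is how you obtain $\E\br{(Y_i')^2\mid R}=\frac{2\tr\pr{\Sigma^4}+\tr\pr{\Sigma^2}^2}{n\pr{n+2}}\pr{e_i^T\pr{R^TR}^{-1}e_i}^2$: you use the fact that $Qv/\norm{v}$ is uniform on the sphere, a much shorter derivation of exactly the expression the paper gets by term-by-term expansion with Lemma~\ref{lemma:fourthOrderFormulaFromMeckes}.
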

\begin{proof}
    See appendix \ref{proof:proofOflemma:concentrationOfYiAndYi'}.
\end{proof}

Now define the event:
$$
T \coloneqq
\bigcup_{i=1}^{s}
\ac{
\abs{Y_i} \geq \frac{n \lambda_p \sqrt{s}}{n-s-1}
\;\;
\text{or}
\,\,
\abs{Y_i'} \geq 2 \E\br{Y_i'}
}.
$$
By union bound and statement (\textit{b}) of Lemma \ref{lemma:concentrationOfYiAndYi'}, we have:
\begin{align*}
\Pro\pr{T}
&\leq \sum_{i \in S}
\Pro\pr{
\abs{Y_i} \geq \frac{n \lambda_p \sqrt{s}}{n-s-1},
\;\;
\text{or}
\,\,
\abs{Y_i'} \geq 2 \E\br{Y_i'} 
}\\
&\leq
sK \pr{ \frac{1}{n_1} + \frac{1}{n_2}}\\
&=
K \pr{\frac{s}{n_1} + \frac{s}{n_2}},
\end{align*}
which converges to $0$ as $p \rightarrow +\infty$. Conditionning on $T$, we have by total probability:
\begin{align*}
\Pro\pr{\max_{i \in S} U_i \geq \rho}
&\leq
\Pro\pr{\max_{i \in S} U_i \geq \rho \; \big| \; T^c} + \Pro\pr{T}\\
&\leq
\Pro\pr{\max_{i \in S} U_i \geq \rho \; \big| \; T^c} + K \pr{\frac{s}{n_1} + \frac{s}{n_2}}.
\end{align*}
In addition:
\begin{align*}
\Pro\pr{\max_{i \in S} U_i \geq \rho \; \big| \; T^c}
&=
\E\br{\indic\ac{\max_{i \in S} U_i \geq \rho} \; \big| \; T^c}\\
&=
\E\br{\E\br{\indic\ac{\max_{i \in S} U_i \geq \rho} \; \big| \; T^c, X_S}}\\
&=
\E\br{\Pro\pr{\max_{i \in S} U_i \geq \rho \; \big| \; T^c, X_S}}.
\end{align*}
Now we have:
\begin{align*}
\Pro\pr{\max_{i \in S} U_i \geq \rho \; \big| \; T^c, X_S}
&\leq \Pro_{N_i \stackrel{\text{ind.}}{\sim} \calN\pr{Y_i, Y_i'}}\pr{\max_{i \in S} N_i \geq \rho \; \big| \; T^c}\\
&\leq \Pro_{N_i \stackrel{\text{i.i.d.}}{\sim} \calN\pr{\frac{n \lambda_p \sqrt{s}}{n-s-1}, \frac{2 \pr{n_1 \sigma_1^2 + n_2 \sigma_2^2}}{n \pr{n-s-1}}}}\pr{\max_{i \in S} N_i \geq \rho},
\end{align*}
where:
\begin{itemize}[leftmargin=*]
    \item The first inequality holds because the maximum of independent Gaussians has a heavier positive tail than the maximum of correlated ones (under the same distributions).
    \item The second inequality holds because a Gaussian with a larger mean and variance has a heavier positive tail than one with a smaller mean and variance (therefore each of the $N_i$s would have a heavier tail if its mean and variance were equal to their respective upper bounds).
\end{itemize}
For simplicity, we drop the long subscript and assume $N_i \stackrel{\text{i.i.d.}}{\sim} \calN\pr{\frac{n \lambda_p \sqrt{s}}{n-s-1}, \frac{2 \pr{n_1 \sigma_1^2 + n_2 \sigma_2^2}}{n \pr{n-s-1}}}$. Using Markov's inequality in the above, we have:
\begin{equation}\label{equation:PmaxUiLargerThanRhoLessThanENioverRho}
\Pro\pr{\max_{i \in S} U_i \geq \rho \; \big| \; T^c, X_S}
\leq \Pro\pr{\max_{i \in S} N_i \geq \rho}
\leq \frac{\E\br{\max_{i \in S} N_i}}{\rho}.
\end{equation}
Using the formula for expectation of Gaussian (see Theorem 5.3.1 in \citep{de2006extreme}) maxima, we have:
\begin{align*}
\E\br{\max_{i \in S} N_i}
&\leq
\frac{n \lambda_p \sqrt{s}}{n-s-1} + \sqrt{2 \log\pr{s} \frac{2 \pr{n_1 \sigma_1^2 + n_2 \sigma_2^2}}{n \pr{n-s-1}}}\\
&=
\frac{n \lambda_p \sqrt{s}}{n-s-1} + 2 \sqrt{\frac{\pr{n_1 \sigma_1^2 + n_2 \sigma_2^2} \log\pr{s}}{n \pr{n-s-1}}}.
\end{align*}
Therefore, we have:
\begin{align*}
\Pro\pr{\max_{i \in S} U_i \geq \rho}
&\leq
\Pro\pr{\max_{i \in S} U_i \geq \rho \; \big| \; T^c} + K \pr{\frac{s}{n_1} + \frac{s}{n_2}}\\
&\leq
\E\br{\Pro\pr{\max_{i \in S} U_i \geq \rho \; \big| \; T^c, X_S}} + K \pr{\frac{s}{n_1} + \frac{s}{n_2}}\\
&\stackrel{(\ref{equation:PmaxUiLargerThanRhoLessThanENioverRho})}{\leq}
\frac{1}{\rho} \pr{\frac{n \lambda_p \sqrt{s}}{n-s-1} + 2 \sqrt{\frac{\pr{n_1 \sigma_1^2 + n_2 \sigma_2^2} \log\pr{s}}{n \pr{n-s-1}}}} + K \pr{\frac{s}{n_1} + \frac{s}{n_2}}.
\end{align*}
Hence we have:
\begin{equation}\label{equation:tailBoundOnMaxUi}
\Pro\pr{\max_{i \in S} U_i \geq \rho}
\leq
\frac{1}{\rho}\pr{\lambda_p \sqrt{s} + \sqrt{\frac{\pr{n_1 \sigma_1^2 + n_2 \sigma_2^2} \log\pr{s}}{n^2}}} \pr{1+o_p\pr{1}} + o_p\pr{1},
\end{equation}
which converges to $0$ as $p \rightarrow +\infty$ under condition (\ref{eq:sufficiencyStatement:lassoParameterCondition}). Using a similar argument, we establish the same bound for $\ac{-U_i}_{i \in S}$, that:
\begin{equation}\label{equation:tailBoundOnMaxMinusUi}
\Pro\pr{\max_{i \in S} \ac{-U_i} \geq \rho}
\leq
\frac{1}{\rho}\pr{\lambda_p \sqrt{s} + \sqrt{\frac{\pr{n_1 \sigma_1^2 + n_2 \sigma_2^2} \log\pr{s}}{n^2}}} \pr{1+o_p\pr{1}} + o_p\pr{1}.
\end{equation}
Bringing together (\ref{equation:tailBoundOnMaxUi}) and (\ref{equation:tailBoundOnMaxMinusUi}) and using a union bound, we conclude:
$$
\Pro\pr{\max_{i \in S} \abs{U_i} < \rho} \stackrel{p \rightarrow +\infty}{\longrightarrow} 1.
$$
\end{proof}

\subsubsection{Proof of Lemma \ref{lemma:concentrationOfYiAndYi'}}
\label{proof:proofOflemma:concentrationOfYiAndYi'}
\begin{proof}[Proof of Lemma \ref{lemma:concentrationOfYiAndYi'}]
\phantom{}

\textbf{Mean of $Y_i$.}
Recall that:
\begin{align*}
    Y_i
    = - \lambda_p e_i^{T} \pr{\frac{1}{n} X_S^T X_S}^{-1} \Vec{b}
    = - \lambda_p n e_i^{T} \pr{X_S^T X_S}^{-1} \Vec{b}
\end{align*}
Note that $X_S^T X_S \sim \calW_{s}\pr{I_s, n}$. Using properties of the Wishart distribution (see Lemma 7.7.1 of \citep{anderson1958introduction}), we have:
$$
\E\br{\pr{X_S^TX_S}^{-1}} = \pr{\frac{1}{n-s-1}} I_s.
$$
Therefore, we get:
\begin{equation}\label{equation:ExpectationOfYi}
\E\br{Y_i} = \frac{-\lambda_p n}{n-s-1} e_i^T \Vec{b}.
\end{equation}

\textbf{Mean of $Y_i'$.}
Recall that:
\begin{align*}
    Y_i'
    &= \frac{1}{n^2} e_i^T \pr{\frac{1}{n}X_S^TX_S}^{-1} X_S^T \Sigma^2 X_S \pr{\frac{1}{n}X_S^TX_S}^{-1} e_i\\
    &= e_i^T \pr{X_S^TX_S}^{-1} X_S^T \Sigma^2 X_S \pr{X_S^TX_S}^{-1} e_i.
\end{align*}
Now recall from (\ref{equation:QRdecomposition}) in the proof of Lemma \ref{lemma:expectationAndConcentrationOfMn} the matrices $Q \in \R^{n \times s}, R \in \R^{s \times s}, U \in \R^{n \times n}$ such that:
$$
X_S = QR, \quad U = \begin{bmatrix} P & Q \end{bmatrix},
$$
where $Q^T Q = I_s$, $R$ is upper triangular and $U \sim \text{Haar}\pr{n}$.
We have:
\begin{align*}
Y_i'
&=
e_i^T \pr{X_S^TX_S}^{-1} X_S^T \Sigma^2 X_S \pr{X_S^TX_S}^{-1} e_i\\
&=
e_i^T \pr{R^T R}^{-1} R^T Q^T \Sigma^2 Q R \pr{R^T R}^{-1} e_i\\
&=
e_i^T R^{-1} Q^T \Sigma^2 Q \pr{R^T}^{-1} e_i.
\end{align*}
Note that:
$$
U^T \Sigma^2 U
=
\begin{bmatrix} P^T \\ Q^T \end{bmatrix}
\Sigma^2
\begin{bmatrix} P & Q \end{bmatrix}
=
\begin{bmatrix}
    P^T \Sigma^2\\
    Q^T \Sigma^2
\end{bmatrix}
\begin{bmatrix} P & Q \end{bmatrix}
=
\begin{bmatrix}
    P^T \Sigma^2 P & P^T \Sigma^2 Q\\
    Q^T \Sigma^2 P & Q^T \Sigma^2 Q
\end{bmatrix}.
$$
Therefore:
$$
\E\br{U^T \Sigma^2 U}
=
\begin{bmatrix}
    \E\br{P^T \Sigma^2 P} & \E\br{P^T \Sigma^2 Q}\\
    \E\br{Q^T \Sigma^2 P} & \E\br{Q^T \Sigma^2 Q}
\end{bmatrix}.
$$
On the other hand, we know by the properties of the Haar distribution (see Example 1.8 of \citep{gu2013moments}) that:
$$
\E\br{U^T \Sigma^2 U}
= \frac{\tr\pr{\Sigma^2}}{n} I_n
= \frac{n_1 \sigma_1^2 + n_2 \sigma_2^2}{n} I_n.
$$
Hence:
$$
\E\br{Q^T \Sigma^2 Q}
=
\frac{n_1 \sigma_1^2 + n_2 \sigma_2^2}{n} I_s.
$$
Therefore:
\begin{align*}
\E\br{Y_i'}
&= \E_{Q,R}\br{e_i^T R^{-1} Q^T \Sigma^2 Q \pr{R^T}^{-1} e_i}\\
&= \E\br{\E\br{e_i^T R^{-1} Q^T \Sigma^2 Q \pr{R^T}^{-1} e_i \, \big| \, R}}\\
&= \E\br{ e_i^T R^{-1} \E\br{Q^T \Sigma^2 Q \, \big| \, R} \pr{R^T}^{-1} e_i }\\
&= \E\br{ e_i^T R^{-1} \E\br{Q^T \Sigma^2 Q} \pr{R^T}^{-1} e_i }\\
&= \E\br{ e_i^T R^{-1} \pr{\frac{n_1 \sigma_1^2 + n_2 \sigma_2^2}{n} I_s} \pr{R^T}^{-1} e_i}\\
&= \pr{\frac{n_1 \sigma_1^2 + n_2 \sigma_2^2}{n}} \E\br{ e_i^T \pr{R^T R}^{-1} e_i }\\
&= \pr{\frac{n_1 \sigma_1^2 + n_2 \sigma_2^2}{n}} \E\br{ e_i^T \pr{R^T R}^{-1} e_i}\\
&= \pr{\frac{n_1 \sigma_1^2 + n_2 \sigma_2^2}{n}} \E\br{ e_i^T \pr{X_S^T X_S}^{-1} e_i}\\
&= \frac{n_1 \sigma_1^2 + n_2 \sigma_2^2}{n \pr{n-s-1}} \, e_i^T e_i \\
&= \frac{n_1 \sigma_1^2 + n_2 \sigma_2^2}{n \pr{n-s-1}}.
\end{align*}

\textbf{Concentration of $Y_i$.}
Recall that:
\begin{align*}
    Y_i
    = - \lambda_p n e_i^{T} \pr{X_S^T X_S}^{-1} \Vec{b}.
\end{align*}
Thus:
\begin{align*}
    Y_i^2
    = Y_i Y_i^T
    = \lambda_p^2 n^2 e_i^{T} \pr{X_S^T X_S}^{-1} \Vec{b} \Vec{b}^T \pr{X_S^T X_S}^{-1} e_i.
\end{align*}
Taking the expectation and recalling (\ref{equation:expectationOfXSTXSinvbbTXSTXSinvbySiskind}), we have:
\begin{align*}
    \E\br{Y_i^2}
    &= \lambda_p^2 n^2 e_i^{T} \E\br{\pr{X_S^T X_S}^{-1} \Vec{b} \Vec{b}^T \pr{X_S^T X_S}^{-1}} e_i\\
    &\stackrel{(\ref{equation:expectationOfXSTXSinvbbTXSTXSinvbySiskind})}{=} \frac{\lambda_p^2 n^2}{\pr{n-s-3}\pr{n-s}} \; e_i^{T} \br{\Vec{b} \Vec{b}^T + \norm{\Vec{b}}_2^2 I_s / \pr{n-s-1}} e_i\\
    &= \frac{\lambda_p^2 n^2}{\pr{n-s-3}\pr{n-s}} \br{\vec{b}_i^2 + \norm{\Vec{b}}_2^2 / \pr{n-s-1}}\\
    &= \frac{\lambda_p^2 n^2}{\pr{n-s-3}\pr{n-s}} \br{1 + \frac{s}{n-s-1}},
\end{align*}
where the last equality above holds because $\Vec{b} = \sign\pr{\beta^\star}$ and $i \in S = \support{\beta^\star}$. Thus:
$$
\E\br{Y_i^2} = \frac{\lambda_p^2 n^2 \pr{n-1}}{\pr{n-s-3} \pr{n-s-1} \pr{n-s}}.
$$
Recalling (\ref{equation:ExpectationOfYi}), we get:
\begin{align*}
\var\br{Y_i}
&= \frac{\lambda_p^2 n^2 \pr{n-1}}{\pr{n-s-3} \pr{n-s-1} \pr{n-s}} - \pr{\frac{-\lambda_p n}{n-s-1} e_i^T \Vec{b}}^2\\
&= \frac{\lambda_p^2 n^2 \pr{n-1}}{\pr{n-s-3} \pr{n-s-1} \pr{n-s}} - \frac{\lambda_p^2 n^2 \vec{b}_i^2}{\pr{n-s-1}^2}\\
&= \frac{\lambda_p^2 n^2}{\pr{n-s-1}} \br{\frac{ \pr{n-1}}{\pr{n-s-3} \pr{n-s}} - \frac{1}{n-s-1}}.
\end{align*}
Therefore:
\begin{align*}
    \frac{\var\br{Y_i}}{s \pr{\E\br{Y_i}}^2}
    &= \frac{\pr{n-s-1}^2}{s \lambda_p^2 n^2} \times \frac{\lambda_p^2 n^2}{\pr{n-s-1}} \br{\frac{ \pr{n-1}}{\pr{n-s-3} \pr{n-s}} - \frac{1}{n-s-1}}\\
    &=  \frac{1}{s} \pr{\frac{\pr{n-1} \pr{n-s-1}}{\pr{n-s-3} \pr{n-s}} - 1}\\
    &= \frac{n^2 - n - ns + s - n + 1 - n^2 + ns + ns - s^2 + 3n - 3s}{s \pr{n-s-3} \pr{n-s}}\\
    &= \frac{1 + ns - s^2 + n - 2s}{s \pr{n-s-3} \pr{n-s}}\\
    &= \Theta\pr{\frac{1}{n}}.
\end{align*}
Now using inclusion and Chebyshev's inequality, we have:
\begin{align*}
    \Pro\pr{\abs{Y_i} \geq \frac{n \lambda_p \sqrt{s}}{n-s-1}}
    &\stackrel{(\ref{equation:ExpectationOfYi})}{\leq}
    \Pro\pr{\absBig{Y_i - \E\br{Y_i}} \geq \abs{\E\br{Y_i}} \pr{\sqrt{s} - 1}}\\
    &\leq \frac{\var\br{Y_i}}{\pr{\sqrt{s} - 1}^2 \pr{\E\br{Y_i}}^2}\\
    &= \Theta\pr{\frac{\var\br{Y_i}}{s \pr{\E\br{Y_i}}^2}}\\
    &= \Theta\pr{\frac{1}{n}}.
\end{align*}
In particular, the above implies:
$$
\Pro\pr{\abs{Y_i} \geq \frac{n \lambda_p \sqrt{s}}{n-s-1}}
=
\calO\pr{\frac{1}{n_1} + \frac{1}{n_2}}.
$$
Therefore, the exists a universal constant $K_1 > 0$ such that:
\begin{equation}\label{equation:concentrationOfYi}
    \Pro\pr{\abs{Y_i} \geq \frac{n \lambda_p \sqrt{s}}{n-s-1}}
    \leq
    K_1 \pr{\frac{1}{n_1} + \frac{1}{n_2}}.
\end{equation}

\textbf{Concentration of $Y_i'$.}
We have:
\begin{align*}
\E\br{\pr{Y_i'}^2 \, | \, R}
&=
\E\br{\pr{e_i^T R^{-1} Q^T \Sigma^2 Q \pr{R^T}^{-1} e_i}^2 \, \big| \, R}\\
&=
\E\brBigg{\pr{\br{R^{-1} Q^T \Sigma^2 Q \pr{R^T}^{-1}}_{ii}}^2 \, \Big| \, R}\\
\end{align*}
We have, by Einstein notation:
\begin{align*}
\br{R^{-1} Q^T \Sigma^2 Q \pr{R^T}^{-1}}_{ii}
&=
\sum_{a=1}^{s} \sum_{b=1}^{n} \sum_{c=1}^{n} \sum_{e=1}^{s} \br{R^{-1}}_{ia} \br{Q^T}_{ab} \br{\Sigma^2}_{bc}  Q_{ce} \br{\pr{R^T}^{-1}}_{ei}\\
&=
\sum_{a=1}^{s} \sum_{b=1}^{n} \sum_{c=1}^{n} \sum_{e=1}^{s} \br{R^{-1}}_{ia} \br{Q^T}_{ab} \br{\Sigma^2}_{bc}  Q_{ce} \br{\pr{R^{-1}}^T}_{ei}\\
&=
\sum_{a=1}^{s} \sum_{b=1}^{n} \sum_{c=1}^{n} \sum_{e=1}^{s} \br{R^{-1}}_{ia} \br{Q^T}_{ab} \br{\Sigma^2}_{bc}  Q_{ce} \br{R^{-1}}_{ie}\\
&=
\sum_{a=1}^{s} \sum_{c=1}^{n} \sum_{e=1}^{s} \br{R^{-1}}_{ia} Q_{ca} \Sigma^2_{cc}  Q_{ce} \br{R^{-1}}_{ie}\\
&=
\sum_{c=1}^{n} \sum_{a=1}^{s} \sum_{e=1}^{s} \Sigma^2_{cc}  \br{R^{-1}}_{ia} \br{R^{-1}}_{ie} Q_{ca} Q_{ce}.
\end{align*}
Therefore:
\begin{align*}
&\pr{\br{R^{-1} Q^T \Sigma^2 Q \pr{R^T}^{-1}}_{ii}}^2\\
&=
\pr{\sum_{c=1}^{n} \sum_{a=1}^{s} \sum_{e=1}^{s} \Sigma^2_{cc}  \br{R^{-1}}_{ia} \br{R^{-1}}_{ie} Q_{ca} Q_{ce}}^2\\
&=
\sum_{c=1}^{n} \sum_{a=1}^{s} \sum_{e=1}^{s} \pr{\Sigma^2_{cc}  \br{R^{-1}}_{ia} \br{R^{-1}}_{ie} Q_{ca} Q_{ce}}^2\\
&\qquad +
\sum_{c=1}^{n} \sum_{a=1}^{s} \sum_{e \ne f \in [s]}
\pr{\Sigma^2_{cc}  \br{R^{-1}}_{ia} \br{R^{-1}}_{ie} Q_{ca}Q_{ce}}
\pr{\Sigma^2_{cc}  \br{R^{-1}}_{ia} \br{R^{-1}}_{if} Q_{ca} Q_{cf}}\\
&\qquad +
\sum_{c=1}^{n} \sum_{a \ne b \in [s]} \sum_{e=1}^{s}
\pr{\Sigma^2_{cc}  \br{R^{-1}}_{ia} \br{R^{-1}}_{ie} Q_{ca}Q_{ce}}
\pr{\Sigma^2_{cc}  \br{R^{-1}}_{ib} \br{R^{-1}}_{ie} Q_{cb} Q_{ce}}\\
&\qquad +
\sum_{c=1}^{n} \sum_{a \ne b \in [s]} \sum_{e \ne f \in [s]}
\pr{\Sigma^2_{cc}  \br{R^{-1}}_{ia} \br{R^{-1}}_{ie} Q_{ca}Q_{ce}}
\pr{\Sigma^2_{cc}  \br{R^{-1}}_{ib} \br{R^{-1}}_{if} Q_{cb} Q_{cf}}\\
&\qquad +
\sum_{c \ne d \in [n]} \sum_{a=1}^{s} \sum_{e=1}^{s} \pr{\Sigma^2_{cc}  \br{R^{-1}}_{ia} \br{R^{-1}}_{ie} Q_{ca} Q_{ce}} \pr{\Sigma^2_{dd}  \br{R^{-1}}_{ia} \br{R^{-1}}_{ie} Q_{da} Q_{de}}\\
&\qquad +
\sum_{c \ne d \in [n]} \sum_{a=1}^{s} \sum_{e \ne f \in [s]}
\pr{\Sigma^2_{cc}  \br{R^{-1}}_{ia} \br{R^{-1}}_{ie} Q_{ca}Q_{ce}}
\pr{\Sigma^2_{dd}  \br{R^{-1}}_{ia} \br{R^{-1}}_{if} Q_{da} Q_{df}}\\
&\qquad +
\sum_{c \ne d \in [n]} \sum_{a \ne b \in [s]} \sum_{e=1}^{s}
\pr{\Sigma^2_{cc}  \br{R^{-1}}_{ia} \br{R^{-1}}_{ie} Q_{ca}Q_{ce}}
\pr{\Sigma^2_{dd}  \br{R^{-1}}_{ib} \br{R^{-1}}_{ie} Q_{db} Q_{de}}\\
&\qquad +
\sum_{c \ne d \in [n]} \sum_{a \ne b \in [s]} \sum_{e \ne f \in [s]}
\pr{\Sigma^2_{cc}  \br{R^{-1}}_{ia} \br{R^{-1}}_{ie} Q_{ca}Q_{ce}}
\pr{\Sigma^2_{dd}  \br{R^{-1}}_{ib} \br{R^{-1}}_{if} Q_{db} Q_{df}}
\end{align*}
\begin{align*}
&=
\sum_{c=1}^{n} \sum_{a=1}^{s} \sum_{e=1}^{s} \Sigma^4_{cc}  \br{R^{-1}}_{ia}^2 \br{R^{-1}}_{ie}^2 Q_{ca}^2 Q_{ce}^2\\
&\qquad +
\sum_{c=1}^{n} \sum_{a=1}^{s} \sum_{e \ne f \in [s]}
\Sigma^4_{cc}  \br{R^{-1}}^2_{ia} \br{R^{-1}}_{ie} \br{R^{-1}}_{if} Q_{ca}^2 Q_{ce} Q_{cf}\\
&\qquad +
\sum_{c=1}^{n} \sum_{a \ne b \in [s]} \sum_{e=1}^{s}
\Sigma^4_{cc}  \br{R^{-1}}_{ia} \br{R^{-1}}_{ib} \br{R^{-1}}_{ie}^2 Q_{ca} Q_{cb} Q_{ce}^2\\
&\qquad +
\sum_{c=1}^{n} \sum_{a \ne b \in [s]} \sum_{e \ne f \in [s]}
\Sigma^4_{cc}  \br{R^{-1}}_{ia} \br{R^{-1}}_{ib} \br{R^{-1}}_{ie} \br{R^{-1}}_{if} Q_{ca} Q_{cb} Q_{ce} Q_{cf}\\
&\qquad +
\sum_{c \ne d \in [n]} \sum_{a=1}^{s} \sum_{e=1}^{s} \Sigma^2_{cc} \Sigma^2_{dd}  \br{R^{-1}}_{ia}^2 \br{R^{-1}}_{ie}^2 Q_{ca} Q_{ce} Q_{da} Q_{de}\\
&\qquad +
\sum_{c \ne d \in [n]} \sum_{a=1}^{s} \sum_{e \ne f \in [s]}
\Sigma^2_{cc} \Sigma^2_{dd} \br{R^{-1}}_{ia}^2 \br{R^{-1}}_{ie} \br{R^{-1}}_{if} Q_{ca}Q_{ce} Q_{da} Q_{df}\\
&\qquad +
\sum_{c \ne d \in [n]} \sum_{a \ne b \in [s]} \sum_{e=1}^{s}
\Sigma^2_{cc} \Sigma^2_{dd} \br{R^{-1}}_{ia} \br{R^{-1}}_{ib} \br{R^{-1}}_{ie}^2 Q_{ca} Q_{ce} Q_{db} Q_{de}\\
&\qquad +
\sum_{c \ne d \in [n]} \sum_{a \ne b \in [s]} \sum_{e \ne f \in [s]}
\Sigma^2_{cc} \Sigma^2_{dd} \br{R^{-1}}_{ia} \br{R^{-1}}_{ib} \br{R^{-1}}_{ie} \br{R^{-1}}_{if} Q_{ca} Q_{ce} Q_{db} Q_{df}
\end{align*}
\begin{align*}
&=
\sum_{c=1}^{n} \sum_{a=1}^{s} \sum_{e=1}^{s} \Sigma^4_{cc}  \br{R^{-1}}_{ia}^2 \br{R^{-1}}_{ie}^2 Q_{ca}^2 Q_{ce}^2\\
&\qquad +
2 \sum_{c=1}^{n} \sum_{a=1}^{s} \sum_{e \ne f \in [s]}
\Sigma^4_{cc}  \br{R^{-1}}^2_{ia} \br{R^{-1}}_{ie} \br{R^{-1}}_{if} Q_{ca}^2 Q_{ce} Q_{cf}\\
&\qquad +
\sum_{c=1}^{n} \sum_{a \ne b \in [s]} \sum_{e \ne f \in [s]}
\Sigma^4_{cc}  \br{R^{-1}}_{ia} \br{R^{-1}}_{ib} \br{R^{-1}}_{ie} \br{R^{-1}}_{if} Q_{ca} Q_{cb} Q_{ce} Q_{cf}\\
&\qquad +
\sum_{c \ne d \in [n]} \sum_{a=1}^{s} \sum_{e=1}^{s} \Sigma^2_{cc} \Sigma^2_{dd}  \br{R^{-1}}_{ia}^2 \br{R^{-1}}_{ie}^2 Q_{ca} Q_{ce} Q_{da} Q_{de}\\
&\qquad +
2 \sum_{c \ne d \in [n]} \sum_{a=1}^{s} \sum_{e \ne f \in [s]}
\Sigma^2_{cc} \Sigma^2_{dd} \br{R^{-1}}_{ia}^2 \br{R^{-1}}_{ie} \br{R^{-1}}_{if} Q_{ca}Q_{ce} Q_{da} Q_{df}\\
&\qquad +
\sum_{c \ne d \in [n]} \sum_{a \ne b \in [s]} \sum_{e \ne f \in [s]}
\Sigma^2_{cc} \Sigma^2_{dd} \br{R^{-1}}_{ia} \br{R^{-1}}_{ib} \br{R^{-1}}_{ie} \br{R^{-1}}_{if} Q_{ca} Q_{ce} Q_{db} Q_{df}.
\end{align*}
Taking the expectation of the above conditionally on $R$ and by independence of $Q$ and $R$, we get:
\begin{align*}
&\E\br{\pr{Y_i'}^2 \, | \, R}\\
&=\E\br{\pr{\br{R^{-1} Q^T \Sigma^2 Q \pr{R^T}^{-1}}_{ii}}^2 \, \Big| \, R}\\
&=
\sum_{c=1}^{n} \sum_{a=1}^{s} \sum_{e=1}^{s} \Sigma^4_{cc}  \br{R^{-1}}_{ia}^2 \br{R^{-1}}_{ie}^2 \E\br{Q_{ca}^2 Q_{ce}^2}\\
&\quad +
2 \sum_{c=1}^{n} \sum_{a=1}^{s} \sum_{e \ne f \in [s]}
\Sigma^4_{cc}  \br{R^{-1}}^2_{ia} \br{R^{-1}}_{ie} \br{R^{-1}}_{if} \E\br{Q_{ca}^2 Q_{ce} Q_{cf}}\\
&\quad +
\sum_{c=1}^{n} \sum_{a \ne b \in [s]} \sum_{e \ne f \in [s]}
\Sigma^4_{cc}  \br{R^{-1}}_{ia} \br{R^{-1}}_{ib} \br{R^{-1}}_{ie} \br{R^{-1}}_{if} \E\br{Q_{ca} Q_{cb} Q_{ce} Q_{cf}}\\
&\quad +
\sum_{c \ne d \in [n]} \sum_{a=1}^{s} \sum_{e=1}^{s} \Sigma^2_{cc} \Sigma^2_{dd}  \br{R^{-1}}_{ia}^2 \br{R^{-1}}_{ie}^2 \E\br{Q_{ca} Q_{ce} Q_{da} Q_{de}}\\
&\quad +
2 \sum_{c \ne d \in [n]} \sum_{a=1}^{s} \sum_{e \ne f \in [s]}
\Sigma^2_{cc} \Sigma^2_{dd} \br{R^{-1}}_{ia}^2 \br{R^{-1}}_{ie} \br{R^{-1}}_{if} \E\br{Q_{ca}Q_{ce} Q_{da} Q_{df}}\\
&\quad +
\sum_{c \ne d \in [n]} \sum_{a \ne b \in [s]} \sum_{e \ne f \in [s]}
\Sigma^2_{cc} \Sigma^2_{dd} \br{R^{-1}}_{ia} \br{R^{-1}}_{ib} \br{R^{-1}}_{ie} \br{R^{-1}}_{if} \E\br{Q_{ca} Q_{ce} Q_{db} Q_{df}},
\end{align*}
Now recall from Lemma \ref{lemma:fourthOrderFormulaFromMeckes} above (by \cite{meckes2019random}) that for all 
$i,j,r,s,\alpha,\beta,\lambda,\mu \in [n]$ we have:
\begin{align*}
&\E\br{U_{ij} U_{rs} U_{\alpha \beta} U_{\lambda \mu}}\\
&=
-\frac{1}{\pr{n-1}n\pr{n+2}} \Big[
\delta_{ir} \delta_{\alpha \lambda} \delta_{j \beta} \delta_{s \mu} 
+ \delta_{ir} \delta_{\alpha \lambda} \delta_{j \mu} \delta_{s \beta} 
+ \delta_{i \alpha} \delta_{r \lambda} \delta_{j s} \delta_{\beta \mu}\\
&\qquad \qquad \qquad \qquad \qquad \qquad
+ \delta_{i \alpha} \delta_{r \lambda} \delta_{j \mu} \delta_{\beta s} 
+ \delta_{i \lambda} \delta_{r \alpha} \delta_{j s} \delta_{\beta \mu} 
+ \delta_{i \lambda} \delta_{r \alpha} \delta_{j \beta} \delta_{s \mu}
\Big]\\
&\quad + \frac{n+1}{\pr{n-1} n \pr{n+2}} \brBig{
\delta_{i r} \delta_{\alpha \lambda} \delta_{j s} \delta_{\beta \mu}
+ \delta_{i \alpha} \delta_{r \lambda} \delta_{j \beta} \delta_{s \mu}
+ \delta_{i \lambda} \delta_{r \alpha} \delta_{j \mu} \delta_{s \beta}
}.
\end{align*}
Also, recall from (\ref{equation:QRdecomposition}) that:
$$
U = \begin{bmatrix} P & Q \end{bmatrix},
$$
hence for any $\alpha \in [n], \beta \in [s]$:
\begin{equation}\label{equation:QisRightSideOfU}
Q_{\alpha \beta} = U_{\alpha \pr{n-s+\beta}}.
\end{equation}
Since the above fourth-order formula only depends on indices through Kronecker deltas, it holds that for any $i,r,\alpha,\lambda \in [n]$, $j,s,\alpha,\beta,\mu \in [s]$:
\begin{align*}
&\E\br{Q_{ij} Q_{rs} Q_{\alpha \beta} Q_{\lambda \mu}}\\
&\quad =
-\frac{1}{\pr{n-1}n\pr{n+2}} \Big[
\delta_{ir} \delta_{\alpha \lambda} \delta_{j \beta} \delta_{s \mu} 
+ \delta_{ir} \delta_{\alpha \lambda} \delta_{j \mu} \delta_{s \beta} 
+ \delta_{i \alpha} \delta_{r \lambda} \delta_{j s} \delta_{\beta \mu}\\
&\qquad \qquad \qquad \qquad \qquad \qquad
+ \delta_{i \alpha} \delta_{r \lambda} \delta_{j \mu} \delta_{\beta s} 
+ \delta_{i \lambda} \delta_{r \alpha} \delta_{j s} \delta_{\beta \mu} 
+ \delta_{i \lambda} \delta_{r \alpha} \delta_{j \beta} \delta_{s \mu}
\Big]\\
&\quad \quad + \frac{n+1}{\pr{n-1} n \pr{n+2}} \brBig{
\delta_{i r} \delta_{\alpha \lambda} \delta_{j s} \delta_{\beta \mu}
+ \delta_{i \alpha} \delta_{r \lambda} \delta_{j \beta} \delta_{s \mu}
+ \delta_{i \lambda} \delta_{r \alpha} \delta_{j \mu} \delta_{s \beta}.
}
\end{align*}
In addition, note that the expression above is equal to zero when one of $\ac{j,s,\alpha,\mu}$ is different than all the others. This observation simplifies the expression of $\E\br{\pr{Y_i'}^2 \, | \, R}$ above as follows:
\begin{align*}
&\E\br{\pr{Y_i'}^2 \, | \, R}\\
&=
\sum_{c=1}^{n} \sum_{a=1}^{s} \sum_{e=1}^{s} \Sigma^4_{cc}  \br{R^{-1}}_{ia}^2 \br{R^{-1}}_{ie}^2 \E\br{Q_{ca}^2 Q_{ce}^2}\\
&\qquad +
\sum_{c=1}^{n} \sum_{a \ne b \in [s]} \sum_{e \ne f \in [s]}
\Sigma^4_{cc}  \br{R^{-1}}_{ia} \br{R^{-1}}_{ib} \br{R^{-1}}_{ie} \br{R^{-1}}_{if} \E\br{Q_{ca} Q_{cb} Q_{ce} Q_{cf}}\\
&\qquad +
\sum_{c \ne d \in [n]} \sum_{a=1}^{s} \sum_{e=1}^{s} \Sigma^2_{cc} \Sigma^2_{dd}  \br{R^{-1}}_{ia}^2 \br{R^{-1}}_{ie}^2 \E\br{Q_{ca} Q_{ce} Q_{da} Q_{de}}\\
&\qquad +
\sum_{c \ne d \in [n]} \sum_{a \ne b \in [s]} \sum_{e \ne f \in [s]}
\Sigma^2_{cc} \Sigma^2_{dd} \br{R^{-1}}_{ia} \br{R^{-1}}_{ib} \br{R^{-1}}_{ie} \br{R^{-1}}_{if} \E\br{Q_{ca} Q_{ce} Q_{db} Q_{df}}.
\end{align*}
Thus:
\begin{align*}
&\E\br{\pr{Y_i'}^2 \, | \, R}\\
&=
\sum_{c=1}^{n} \sum_{a=1}^{s} \Sigma^4_{cc}  \br{R^{-1}}_{ia}^4 \E\br{Q_{ca}^4}\\
&\qquad +
\sum_{c=1}^{n} \sum_{a \ne e \in [s]} \Sigma^4_{cc}  \br{R^{-1}}_{ia}^2 \br{R^{-1}}_{ie}^2 \E\br{Q_{ca}^2 Q_{ce}^2}\\
&\qquad +
\sum_{c=1}^{n} \sum_{a \ne b \in [s]} \sum_{e \ne f \in [s]}
\Sigma^4_{cc}  \br{R^{-1}}_{ia} \br{R^{-1}}_{ib} \br{R^{-1}}_{ie} \br{R^{-1}}_{if} \\
&\qquad \qquad \qquad \qquad \qquad \qquad \quad \times \E\br{Q_{ca} Q_{cb} Q_{ce} Q_{cf}} \indic\ac{\pr{a,b} = \pr{e,f}}\\
&\qquad +
\sum_{c=1}^{n} \sum_{a \ne b \in [s]} \sum_{e \ne f \in [s]}
\Sigma^4_{cc}  \br{R^{-1}}_{ia} \br{R^{-1}}_{ib} \br{R^{-1}}_{ie} \br{R^{-1}}_{if} \\
&\qquad \qquad \qquad \qquad \qquad \qquad \quad \times \E\br{Q_{ca} Q_{cb} Q_{ce} Q_{cf}} \indic\ac{\pr{a,b} = \pr{f,e}}\\
&\qquad +
\sum_{c \ne d \in [n]} \sum_{a=1}^{s} \Sigma^2_{cc} \Sigma^2_{dd}  \br{R^{-1}}_{ia}^4 \E\br{Q_{ca}^2 Q_{da}^2}\\
&\qquad +
\sum_{c \ne d \in [n]} \sum_{a \ne e \in [s]} \Sigma^2_{cc} \Sigma^2_{dd}  \br{R^{-1}}_{ia}^2 \br{R^{-1}}_{ie}^2 \E\br{Q_{ca} Q_{ce} Q_{da} Q_{de}}\\
&\qquad +
\sum_{c \ne d \in [n]} \sum_{a \ne b \in [s]} \sum_{e \ne f \in [s]}
\Sigma^2_{cc} \Sigma^2_{dd} \br{R^{-1}}_{ia} \br{R^{-1}}_{ib} \br{R^{-1}}_{ie} \br{R^{-1}}_{if} \\
&\qquad \qquad \qquad \qquad \qquad \qquad \quad \times \E\br{Q_{ca} Q_{ce} Q_{db} Q_{df}} \indic\ac{\pr{a,b} = \pr{e,f}}\\
&\qquad +
\sum_{c \ne d \in [n]} \sum_{a \ne b \in [s]} \sum_{e \ne f \in [s]}
\Sigma^2_{cc} \Sigma^2_{dd} \br{R^{-1}}_{ia} \br{R^{-1}}_{ib} \br{R^{-1}}_{ie} \br{R^{-1}}_{if} \\
&\qquad \qquad \qquad \qquad \qquad \qquad \quad \times \E\br{Q_{ca} Q_{ce} Q_{db} Q_{df}} \indic\ac{\pr{a,b} = \pr{f,e}}.
\end{align*}
Thus:
\begin{align*}
\E\br{\pr{Y_i'}^2 \, | \, R}
&=
\sum_{c=1}^{n} \sum_{a=1}^{s} \Sigma^4_{cc}  \br{R^{-1}}_{ia}^4 \E\br{Q_{ca}^4}\\
&\qquad +
\sum_{c=1}^{n} \sum_{a \ne e \in [s]} \Sigma^4_{cc}  \br{R^{-1}}_{ia}^2 \br{R^{-1}}_{ie}^2 \E\br{Q_{ca}^2 Q_{ce}^2}\\
&\qquad +
\sum_{c=1}^{n} \sum_{a \ne b \in [s]}
\Sigma^4_{cc}  \br{R^{-1}}_{ia}^2 \br{R^{-1}}_{ib}^2 \E\br{Q_{ca}^2 Q_{cb}^2}\\
&\qquad +
\sum_{c=1}^{n} \sum_{a \ne b \in [s]}
\Sigma^4_{cc}  \br{R^{-1}}_{ia}^2 \br{R^{-1}}_{ib}^2 \E\br{Q_{ca}^2 Q_{cb}^2}\\
&\qquad +
\sum_{c \ne d \in [n]} \sum_{a=1}^{s} \Sigma^2_{cc} \Sigma^2_{dd}  \br{R^{-1}}_{ia}^4 \E\br{Q_{ca}^2 Q_{da}^2}\\
&\qquad +
\sum_{c \ne d \in [n]} \sum_{a \ne e \in [s]} \Sigma^2_{cc} \Sigma^2_{dd}  \br{R^{-1}}_{ia}^2 \br{R^{-1}}_{ie}^2 \E\br{Q_{ca} Q_{ce} Q_{da} Q_{de}}\\
&\qquad +
\sum_{c \ne d \in [n]} \sum_{a \ne b \in [s]}
\Sigma^2_{cc} \Sigma^2_{dd} \br{R^{-1}}_{ia}^2 \br{R^{-1}}_{ib}^2 \E\br{Q_{ca}^2 Q_{db}^2}\\
&\qquad +
\sum_{c \ne d \in [n]} \sum_{a \ne b \in [s]}
\Sigma^2_{cc} \Sigma^2_{dd} \br{R^{-1}}_{ia}^2 \br{R^{-1}}_{ib}^2 \E\br{Q_{ca} Q_{cb} Q_{db} Q_{da}}.
\end{align*}
Hence:
\begin{align*}
\E\br{\pr{Y_i'}^2 \, | \, R}
&=
\sum_{c=1}^{n} \sum_{a=1}^{s} \Sigma^4_{cc}  \br{R^{-1}}_{ia}^4 \E\br{Q_{ca}^4}\\
&\qquad +
3 \sum_{c=1}^{n} \sum_{a \ne b \in [s]}
\Sigma^4_{cc}  \br{R^{-1}}_{ia}^2 \br{R^{-1}}_{ib}^2 \E\br{Q_{ca}^2 Q_{cb}^2}\\
&\qquad +
\sum_{c \ne d \in [n]} \sum_{a=1}^{s} \Sigma^2_{cc} \Sigma^2_{dd}  \br{R^{-1}}_{ia}^4 \E\br{Q_{ca}^2 Q_{da}^2}\\
&\qquad +
\sum_{c \ne d \in [n]} \sum_{a \ne b \in [s]}
\Sigma^2_{cc} \Sigma^2_{dd} \br{R^{-1}}_{ia}^2 \br{R^{-1}}_{ib}^2 \E\br{Q_{ca}^2 Q_{db}^2}\\
&\qquad +
2 \sum_{c \ne d \in [n]} \sum_{a \ne b \in [s]}
\Sigma^2_{cc} \Sigma^2_{dd} \br{R^{-1}}_{ia}^2 \br{R^{-1}}_{ib}^2 \E\br{Q_{ca} Q_{cb} Q_{db} Q_{da}}.
\end{align*}
Now recalling (\ref{equation:MeckesEUac4}), (\ref{equation:MeckesEUac2ad2}), (\ref{equation:MeckesEUac2bc2}), (\ref{equation:MeckesEUac2bd2}), (\ref{equation:MeckesEUacecehah}), and using (\ref{equation:QisRightSideOfU}) we have:
\begin{align*}
&\E\br{Q_{ca}^4} = \frac{3}{n\pr{n+2}}\\
&\E\br{Q_{ca}^2 Q_{cb}^2}
= \E\br{Q_{ca}^2 Q_{da}^2}
= \frac{1}{n \pr{n+2}}\\
&\E\br{Q_{ca}^2 Q_{db}^2}
= \frac{n+1}{\pr{n-1} n \pr{n+2}}\\
&\E\br{Q_{ca} Q_{cb} Q_{db} Q_{da}}
= -\frac{1}{\pr{n-1} n \pr{n+2}}.
\end{align*}
Therefore:
\begin{align*}
\E\br{\pr{Y_i'}^2 \, | \, R}
&=
\sum_{c=1}^{n} \sum_{a=1}^{s} \Sigma^4_{cc}  \br{R^{-1}}_{ia}^4 \E\br{Q_{ca}^4}\\
&\qquad +
3 \sum_{c=1}^{n} \sum_{a \ne b \in [s]}
\Sigma^4_{cc}  \br{R^{-1}}_{ia}^2 \br{R^{-1}}_{ib}^2 \E\br{Q_{ca}^2 Q_{cb}^2}\\
&\qquad +
\sum_{c \ne d \in [n]} \sum_{a=1}^{s} \Sigma^2_{cc} \Sigma^2_{dd}  \br{R^{-1}}_{ia}^4 \E\br{Q_{ca}^2 Q_{da}^2}\\
&\qquad +
\sum_{c \ne d \in [n]} \sum_{a \ne b \in [s]}
\Sigma^2_{cc} \Sigma^2_{dd} \br{R^{-1}}_{ia}^2 \br{R^{-1}}_{ib}^2 \E\br{Q_{ca}^2 Q_{db}^2}\\
&\qquad +
2 \sum_{c \ne d \in [n]} \sum_{a \ne b \in [s]}
\Sigma^2_{cc} \Sigma^2_{dd} \br{R^{-1}}_{ia}^2 \br{R^{-1}}_{ib}^2 \E\br{Q_{ca} Q_{cb} Q_{db} Q_{da}}
\end{align*}
Thus:
\begin{align*}
\E\br{\pr{Y_i'}^2 \, | \, R}
&=
\frac{3}{n \pr{n+2}} \sum_{c=1}^{n} \sum_{a=1}^{s} \Sigma^4_{cc}  \br{R^{-1}}_{ia}^4 \\
&\qquad +
\frac{3}{n \pr{n+2}} \sum_{c=1}^{n} \sum_{a \ne b \in [s]}
\Sigma^4_{cc}  \br{R^{-1}}_{ia}^2 \br{R^{-1}}_{ib}^2\\
&\qquad +
\frac{1}{n \pr{n+2}} \sum_{c \ne d \in [n]} \sum_{a=1}^{s} \Sigma^2_{cc} \Sigma^2_{dd}  \br{R^{-1}}_{ia}^4\\
&\qquad +
\frac{n+1}{\pr{n-1} n \pr{n+2}}
\sum_{c \ne d \in [n]} \sum_{a \ne b \in [s]}
\Sigma^2_{cc} \Sigma^2_{dd} \br{R^{-1}}_{ia}^2 \br{R^{-1}}_{ib}^2\\
&\qquad -
\frac{2}{\pr{n-1} n \pr{n+2}} \sum_{c \ne d \in [n]} \sum_{a \ne b \in [s]}
\Sigma^2_{cc} \Sigma^2_{dd} \br{R^{-1}}_{ia}^2 \br{R^{-1}}_{ib}^2
\end{align*}
Thus:
\begin{align*}
&\E\br{\pr{Y_i'}^2 \, | \, R}\\
&=
\frac{3}{n \pr{n+2}} \pr{\sum_{c=1}^{n} \Sigma^4_{cc}} \ac{\sum_{a=1}^{s} \br{R^{-1}}_{ia}^4 + \sum_{a \ne b \in [s]} \br{R^{-1}}_{ia}^2 \br{R^{-1}}_{ib}^2} \\
&\qquad +
\frac{1}{n \pr{n+2}} \pr{\sum_{c \ne d \in [n]} \Sigma^2_{cc} \Sigma^2_{dd}} \ac{\sum_{a=1}^{s} \br{R^{-1}}_{ia}^4 + \sum_{a \ne b \in [s]}
\br{R^{-1}}_{ia}^2 \br{R^{-1}}_{ib}^2}\\
&=
\frac{1}{n \pr{n+2}} \ac{3 \sum_{c=1}^{n} \Sigma^4_{cc} + \sum_{c \ne d \in [n]} \Sigma^2_{cc} \Sigma^2_{dd}} \ac{\sum_{a=1}^{s} \br{R^{-1}}_{ia}^4 + \sum_{a \ne b \in [s]} \br{R^{-1}}_{ia}^2 \br{R^{-1}}_{ib}^2}\\
&=
\frac{1}{n \pr{n+2}} \ac{3 \sum_{c=1}^{n} \Sigma^4_{cc} + \br{\pr{\sum_{c=1}^{n} \Sigma^2_{cc}}^2 - \sum_{c=1}^{n} \Sigma^4_{cc}}} \pr{\sum_{a=1}^{s} \br{R^{-1}}_{ia}^2}^2\\
&=
\frac{1}{n \pr{n+2}} \ac{2 \tr\pr{\Sigma^4} + \tr\pr{\Sigma^2}^2} \pr{\sum_{a=1}^{s} \br{R^{-1}}_{ia}^2}^2.
\end{align*}
Now note that:
\begin{align*}
e_i^T \pr{X_S^T X_S}^{-1} e_i
&= \br{\pr{R^T R}^{-1}}_{ii}\\
&= \br{R^{-1} \pr{R^{-1}}^T}_{ii}\\
&= \sum_{a = 1}^{s} \br{R^{-1}}_{ia} \br{\pr{R^{-1}}^T}_{ai}\\
&= \sum_{a = 1}^{s} \br{R^{-1}}_{ia} \br{R^{-1}}_{ia}\\
&= \sum_{a = 1}^{s} \br{R^{-1}}_{ia}^2.
\end{align*}
On the other hand, recall that $X_S^TX_S \sim \calW_{s}\pr{I_s, n}$. Setting $b \coloneqq n$; $a \coloneqq s$; $t \coloneqq e_i$; $T \coloneqq I_s$ and $A \coloneqq X_S^T X_S$ in Lemma \ref{lemma:siskindSecondMoment}, we get:
\begin{align*}
\E\br{\pr{X_S^T X_S}^{-1} e_i e_i^T \pr{X_S^T X_S}^{-1}}
&= \frac{1}{\pr{n-s}\pr{n-s-3}} \pr{ e_i e_i^T + \pr{e_i^T e_i} I_s / \pr{n-s-1}}\\
&= \frac{1}{\pr{n-s}\pr{n-s-3}} \pr{e_i e_i^T + \frac{1}{n-s-1} I_s}.
\end{align*}
Therefore:
\begin{align*}
\E\br{\pr{e_i^T \pr{X_S^T X_S}^{-1} e_i}^2}
&=
e_i^T \E\br{\pr{X_S^T X_S}^{-1} e_i e_i^T \pr{X_S^T X_S}^{-1}} e_i\\
&=
\br{\E\br{\pr{X_S^T X_S}^{-1} e_i e_i^T \pr{X_S^T X_S}^{-1}}}_{ii}\\
&=
\br{\frac{1}{\pr{n-s}\pr{n-s-3}} \pr{e_i e_i^T + \frac{1}{n-s-1} I_s}}_{ii}.
\end{align*}
Hence:
\begin{align*}
\E\br{\pr{\sum_{a=1}^{s} \br{R^{-1}}_{ia}^2}^2}
=
\E\br{\pr{e_i^T \pr{X_S^T X_S}^{-1} e_i}^2}
&=
\frac{1}{\pr{n-s}\pr{n-s-3}} \pr{1 + \frac{1}{n-s-1}}.
\end{align*}
Hence, we get:
\begin{align*}
&\E\br{\pr{Y_i'}^2}\\
&=
\E\br{\E\br{\pr{Y_i'}^2 \, | \, R}}\\
&=
\E\br{\frac{1}{n \pr{n+2}} \ac{2 \tr\pr{\Sigma^4} + \tr\pr{\Sigma^2}^2} \pr{\sum_{a=1}^{s} \br{R^{-1}}_{ia}^2}^2}\\
&=
\frac{1}{n \pr{n+2}} \ac{2 \tr\pr{\Sigma^4} + \tr\pr{\Sigma^2}^2} \E\br{\pr{\sum_{a=1}^{s} \br{R^{-1}}_{ia}^2}^2}\\
&=
\frac{1}{\pr{n-s}\pr{n-s-3} n \pr{n+2}} \pr{1 + \frac{1}{n-s-1}} \ac{2 \tr\pr{\Sigma^4} + \tr\pr{\Sigma^2}^2}\\
&=
\frac{2 \pr{n_1 \sigma_1^4 + n_2 \sigma_2^4} + \pr{n_1 \sigma_1^2 + n_2 \sigma_2^2}^2}{\pr{n-s}\pr{n-s-3} n \pr{n+2}} \pr{1 + \frac{1}{n-s-1}}.
\end{align*}
Therefore:
\begin{align*}
\frac{\var\pr{Y_i'}}{\pr{\E\br{Y_i'}}^2}
&= \frac{\E\br{\pr{Y_i'}^2}-\pr{\E\br{Y_i'}}^2}{\pr{\E\br{Y_i'}}^2}\\
&=\frac{\frac{2 \pr{n_1 \sigma_1^4 + n_2 \sigma_2^4} + \pr{n_1 \sigma_1^2 + n_2 \sigma_2^2}^2}{\pr{n-s}\pr{n-s-3} n \pr{n+2}} \pr{1 + \frac{1}{n-s-1}}
- \pr{\frac{n_1 \sigma_1^2 + n_2 \sigma_2^2}{n \pr{n-s-1}}}^2}{\pr{\frac{n_1 \sigma_1^2 + n_2 \sigma_2^2}{n \pr{n-s-1}}}^2}\\
&=\frac{\frac{2 \pr{n_1 \sigma_1^4 + n_2 \sigma_2^4} + \pr{n_1 \sigma_1^2 + n_2 \sigma_2^2}^2}{\pr{n-s}\pr{n-s-3} n \pr{n+2}} \pr{1 + \frac{1}{n-s-1}}
- \pr{\frac{n_1 \sigma_1^2 + n_2 \sigma_2^2}{n \pr{n-s-1}}}^2}{\pr{\frac{n_1 \sigma_1^2 + n_2 \sigma_2^2}{n \pr{n-s-1}}}^2}\\
&=
\frac{n^2 \pr{n-s-1}^2}{\pr{n-s}\pr{n-s-3} n \pr{n+2}} \pr{\frac{2 \pr{n_1 \sigma_1^4 + n_2 \sigma_2^4} + \pr{n_1 \sigma_1^2 + n_2 \sigma_2^2}^2}{\pr{n_1 \sigma_1^2 + n_2 \sigma_2^2}^2}}\\
&\qquad \times \pr{1 + \frac{1}{n-s-1}}
- 1\\
&=
\frac{n \pr{n-s-1}}{\pr{n-s-3} \pr{n+2}}
\pr{\frac{2 \pr{n_1 \sigma_1^4 + n_2 \sigma_2^4}}{\pr{n_1 \sigma_1^2 + n_2 \sigma_2^2}^2} + 1}
- 1\\
&=
\frac{n \pr{n-s-1}}{\pr{n-s-3} \pr{n+2}}
\pr{\frac{2 n_1 \sigma_1^4 + 2 n_2 \sigma_2^4}{n_1^2 \sigma_1^4 + n_2^2 \sigma_2^4 + 2 n_1 n_2 \sigma_1^2 \sigma_2^2} + 1}
- 1\\
&\leq
\frac{n \pr{n-s-1}}{\pr{n-s-3} \pr{n+2}}
\pr{\frac{2}{n_1} + \frac{2}{n_2} + 1}
- 1\\
&=
\frac{n \pr{n-s-1}}{\pr{n-s-3} \pr{n+2}} \pr{\frac{2}{n_1} + \frac{2}{n_2}}
+ \ac{\frac{n \pr{n-s-1}}{\pr{n-s-3} \pr{n+2}} - 1}\\
&=
\frac{n \pr{n-s-1}}{\pr{n-s-3} \pr{n+2}} \pr{\frac{2}{n_1} + \frac{2}{n_2}}\\
&\qquad +
\frac{n^2-ns-n - n^2 + ns +3n - 2n + 2s + 6}{\pr{n-s-3} \pr{n+2}}\\
&=
\frac{n \pr{n-s-1}}{\pr{n-s-3} \pr{n+2}} \pr{\frac{2}{n_1} + \frac{2}{n_2}} +
\frac{2 \pr{s+3}}{\pr{n-s-3} \pr{n+2}}.
\end{align*}
Hence there exists a universal constant $K_2$ such that:
$$
\frac{\var\pr{Y_i'}}{\pr{\E\br{Y_i'}}^2}
\leq K_2 \pr{\frac{1}{n_1} + \frac{1}{n_2}}.
$$
By Chebyshev's inequality, we conclude:
\begin{equation}\label{equation:concentrationOfYi'}
\Pro\prBig{Y_i' \geq 2 \E\br{Y_i'}}
\leq
K_2 \pr{\frac{1}{n_1} + \frac{1}{n_2}}.
\end{equation}
Bringing together (\ref{equation:concentrationOfYi}) and (\ref{equation:concentrationOfYi'}) and using a union bound, we conclude that there exists a universal constant $K > 0$ such that:
\begin{equation}\label{equation:jointConcentrationOfYiandYi'}
    \Pro\pr{
    \abs{Y_i} \geq \frac{n \lambda_p \sqrt{s}}{n-s-1},
    \;\;
    \text{or}
    \,\,
    \abs{Y_i'} \geq 2 \E\br{Y_i'} 
    }
    \leq
    K \pr{ \frac{1}{n_1} + \frac{1}{n_2}}.
\end{equation}
\end{proof}

\section{Proof of Proposition \ref{proposition:implicitConditionOnNoiseScaling}}
\label{proofOfproposition:implicitConditionOnNoiseScaling}
\begin{proof}[Proof of Proposition \ref{proposition:implicitConditionOnNoiseScaling}]
First, assume there exists $\pr{\lambda_p}_{p \geq 1} \rightarrow 0$ such that (\ref{eq:sufficiencyStatement:lassoParameterCondition}) holds. By the first part of (\ref{eq:sufficiencyStatement:lassoParameterCondition}), we have:
\begin{equation}\label{equation:sigma2avglogp-s/nolambdap2}
\frac{\sigma^2_{\text{avg}} \log\pr{p-s}}{n} = o\pr{\lambda_p^2}.
\end{equation}
Using (\ref{equation:sigma2avglogp-s/nolambdap2}) and $\pr{\lambda_p}_{p \geq 1} \rightarrow 0$, we get:
\begin{equation}\label{equation:sigmaavg2logp-s/ngoesto0}
\frac{\sigma^2_{\text{avg}} \log\pr{p-s}}{n} \longrightarrow 0.
\end{equation}
In addition, by the second part of (\ref{eq:sufficiencyStatement:lassoParameterCondition}), we have:
\begin{equation}\label{equation:lambdap2prho2/s}
    \lambda_p^2 = o\pr{\frac{\rho^2}{s}}.
\end{equation}
Using (\ref{equation:sigma2avglogp-s/nolambdap2}) and (\ref{equation:lambdap2prho2/s}), we get:
\begin{equation}\label{equation:sigmaavg2slogp-s/nrho2goesto0}
\frac{\sigma^2_{\text{avg}} \log\pr{p-s} s}{n \rho^2} \longrightarrow 0.
\end{equation}
Taking the sum of (\ref{equation:sigmaavg2logp-s/ngoesto0}) and (\ref{equation:sigmaavg2slogp-s/nrho2goesto0}), we obtain:
$$
\frac{\sigma^2_{\text{avg}} \log\pr{p-s} \pr{1+ s/\rho^2} }{n}  \longrightarrow 0.
$$
Hence, we conclude:
$$
\sigma_{\text{avg}}^2
=
o\pr{\frac{n}{\pr{1+ s/\rho^2} \log\pr{p-s}}}.
$$
Second, assume (\ref{equation:implicitConditionOnNoiseScaling}) holds and let:
$$
\lambda_p \coloneqq \pr{\frac{\sigma_{\text{avg}}^2 \log\pr{p-s}}{n \pr{1+s/\rho^2}}}^{1/4}.
$$
Then we have:
$$
\lambda_p^2 = \sqrt{\frac{\sigma_{\text{avg}}^2 \log\pr{p-s}}{\pr{1+s/\rho^2} n}}.
$$
By (\ref{equation:implicitConditionOnNoiseScaling}), we have:
\begin{align*}
    \lambda_p^2
    &= o\pr{\sqrt{\frac{n \log\pr{p-s}}{\pr{1+ s/\rho^2}^2 \log\pr{p-s} n}}}\\
    &= o\pr{\frac{1}{1 + s/\rho^2}},
\end{align*}
thus:
$$
\lambda_p^2 \pr{1 + s/\rho^2} \longrightarrow 0.
$$
Therefore
\begin{equation}\label{equation:showingThatlambdapgoesto0}
    \lambda_p \longrightarrow 0
    \quad \text{and} \quad
    \frac{\lambda_p \sqrt{s}}{\rho} \longrightarrow 0.
\end{equation}
In addition, we have by definition of $\lambda_p$:
\begin{align*}
    \frac{n \lambda_p^2}{\sigma_{\text{avg}}^2 \log\pr{p-s}}
    &=
    \sqrt{\frac{\sigma_{\text{avg}}^2 \log\pr{p-s} n^2}{\sigma_{\text{avg}}^4  \log\pr{p-s}^2  \pr{1+s/\rho^2} n}}\\
    &=
    \sqrt{\frac{n}{\sigma_{\text{avg}}^2 \pr{1+s/\rho^2} \log\pr{p-s}}}.
\end{align*}
Therefore we have, by (\ref{equation:implicitConditionOnNoiseScaling}):
\begin{equation}\label{equation:showingThatFirstTermOfLambdaConditionGoesToinfty}
    \frac{n \lambda_p^2}{\sigma_{\text{avg}}^2 \log\pr{p-s}}
    \longrightarrow +\infty.
\end{equation}
Finally, by (\ref{equation:implicitConditionOnNoiseScaling}) it holds that:
\begin{align*}
    \frac{\sigma_{\text{avg}}^2 \log{s}}{n \rho^2}
    &= o\pr{\frac{n \log{s}}{n \rho^2 \pr{1+ s/\rho^2} \log\pr{p-s}}}\\
    &= o\pr{\frac{1}{\rho^2 + s}} = o\pr{\frac{1}{s}}.
\end{align*}
Therefore:
\begin{equation}\label{equation:showingThatSecondTermOfSecondPartOfConditionOnLambdaGoesTo0}
\frac{1}{\rho}\sqrt{\frac{\sigma_{\text{avg}}^2 \log{s}}{n}} \longrightarrow 0.
\end{equation}
Bringing together (\ref{equation:showingThatlambdapgoesto0}), (\ref{equation:showingThatFirstTermOfLambdaConditionGoesToinfty}) and (\ref{equation:showingThatSecondTermOfSecondPartOfConditionOnLambdaGoesTo0}), we conclude that $\lambda_p \longrightarrow 0$ and (\ref{eq:sufficiencyStatement:lassoParameterCondition}) holds.
\end{proof}

\end{document}